\documentclass[sn-vancouver,Numbered]{sn-jnl}


\usepackage{graphicx}%
\usepackage{multirow}%
\usepackage{amsmath,amssymb,amsfonts}%
\usepackage{amsthm}%
\usepackage{mathrsfs}%
\usepackage[title]{appendix}%
\usepackage{xcolor}%
\usepackage{textcomp}%
\usepackage{manyfoot}%
\usepackage{booktabs}%
\usepackage{algorithm}%
\usepackage{algorithmicx}%
\usepackage{algpseudocode}%
\usepackage{listings}%


\theoremstyle{thmstyleone}%
\newtheorem{theorem}{Theorem}
\newtheorem{proposition}[theorem]{Proposition}%

\theoremstyle{thmstyletwo}%
\newtheorem{remark}{Remark}%
\newtheorem{lemma}{Lemma}
\newtheorem{corollary}{Corollary}

\theoremstyle{thmstylethree}%
\newtheorem{definition}{Definition}%

\newcommand{\vardbtilde}[1]{\tilde{\raisebox{0pt}[0.85\height]{$\tilde{#1}$}}}

\raggedbottom

\newif\ifCOM
\COMfalse

\begin{document}

\title[Non-asymptotic spectral bounds]{Non-asymptotic spectral bounds on the $\varepsilon$-entropy of kernel classes}


\author{\fnm{Rustem} \sur{Takhanov}}\email{rustem.takhanov@nu.edu.kz}

\affil{\orgname{Nazarbayev University}, \orgaddress{\street{Kabanbay batyr ave.}, \city{Astana}, \postcode{010000}, \country{Kazakhstan}}}


\abstract{
Let $K: \boldsymbol{\Omega}\times \boldsymbol{\Omega}$ be a continuous Mercer kernel defined on a compact subset of ${\mathbb R}^n$ and $\mathcal{H}_K$ be the reproducing kernel Hilbert space (RKHS) associated with $K$. Given a finite measure $\nu$ on $\boldsymbol{\Omega}$, we investigate upper and lower bounds on the $\varepsilon$-entropy of the unit ball of $\mathcal{H}_K$ in the space $L_p(\nu)$. This topic is an important direction in the modern statistical theory of kernel-based methods. 

We prove sharp upper and lower bounds for $p\in [1,+\infty]$. For $p\in [1,2]$, the upper bounds are determined solely by the eigenvalue behaviour of the corresponding integral operator $\phi\to \int_{\boldsymbol{\Omega}} K(\cdot,{\mathbf y})\phi({\mathbf y})d\nu({\mathbf y})$. In constrast, for $p>2$, the bounds additionally depend on the convergence rate of the truncated Mercer series to the kernel $K$ in the $L_p(\nu)$-norm.

We discuss a number of consequences of our bounds and show that they are substantially tighter than previous bounds for general kernels. 
Furthermore, for specific cases, such as zonal kernels and the Gaussian kernel on a box, our bounds are asymptotically tight as $\varepsilon\to +0$.
}

\keywords{reproducing kernel Hilbert space (RKHS), kernel methods, covering number, epsilon
entropy, kernel ridge regression}


\pacs[MSC Classification]{41A46,47B06,46E22,68Q32}

\maketitle

\section{Introduction and main results}
Kernel methods, such as kernel ridge regression (KRR), support vector machines (SVM), kernel density estimation, moment matching networks, etc, play a central role in modern statistics and machine learning~\cite{scholkopf2018learning,Evgeniou,Sriperumbudur,Krikamol,Steffen}. The mathematics underlying the ``kernel trick'' relies on the construction of the so-called reproducing kernel Hilbert spaces (RKHS), making the structure and properties of RKHS a key focus of theoretical research~\cite{Cucker2001OnTM}.
Recently, approaches like the Neural Network Gaussian Process (NNGP) and Neural Tangent Kernel (NTK) have emerged, reframing the study of neural network approximation and generalization as an analysis of different types of kernels for KRR~\cite{Jacot,DuSimon,pmlr-v119-huang20l}. This perspective provides promising insights into the spectral bias of neural networks~\cite{pmlr-v97-rahaman19a,ijcai2021-304}. In this context, exploring the optimization space of KRR has become an essential task.

Given a Mercer kernel $K$, it is well-known that the KRR learning process operates within a ball in the RKHS induced by $K$. The capacity of this ball is naturally measured by its covering numbers or, equivalently, by its $\varepsilon$-entropy in the space of continuous functions equipped with the supremum norm.
This measure of capacity plays a role in statistical learning theory for real-valued function classes analogous to the role of VC-dimension for $\{0,1\}$-valued function classes~\cite{HAUSSLER199278}.


\subsection{$\varepsilon$-entropy of a ball in $\mathcal{H}_{K}$}
Let $\boldsymbol{\Omega}\subseteq {\mathbb R}^n$ be a compact set and $K: \boldsymbol{\Omega}\times \boldsymbol{\Omega}\to {\mathbb R}$ be a continuous Mercer kernel. The space of continuous functions on $\boldsymbol{\Omega}$ equipped with the supremum norm is denoted by $C(\boldsymbol{\Omega})$. Let $\nu$ be a finite, nondegenerate Borel measure on $\boldsymbol{\Omega}$, and $L_p(\nu), p\geq 1$ be the completion of the space of real-valued functions $f$ on $\boldsymbol{\Omega}$ for which $|f|^p$ is integrable w.r.t. $\nu$, i.e. $\int |f|^pd\nu<\infty$. The operator ${\rm O}_{K}: L_2(\nu)\to L_2(\nu)$ is defined by ${\rm O}_{K}[\phi]({\mathbf x}) = \int_{\boldsymbol{\Omega}}K({\mathbf x}, {\mathbf y})\phi({\mathbf y})d \nu({\mathbf y})$.
By Mercer's theorem, there exists an orthonormal basis in $L_2(\nu)$ consisting of eigenvectors of ${\rm O}_{K}$.
Let $\{\lambda_i\}_{i=1}^\infty$ denote the positive eigenvalues (counted with multiplicities), with corresponding eigenvectors $\{\phi_i\}_{i=1}^\infty$, and let $\sigma_i = \sqrt{\lambda_i}$. Let us denote
\begin{equation*}
\begin{split}
&C_K = \sqrt{\max\limits_{{\mathbf x}\in \boldsymbol{\Omega}} K({\mathbf x},{\mathbf x})}\\
&C_{K,p} = \|\sqrt{ K({\mathbf x},{\mathbf x})}\|_{L_p(\nu)} = \big(\int_{\boldsymbol{\Omega}} K({\mathbf x},{\mathbf x})^{p/2} d\nu({\mathbf x})\big)^{1/p}.
\end{split}
\end{equation*}
Using Mercer's theorem, the reproducing kernel Hilbert space induced by $K$ can be characterized as follows.

\begin{proposition}[\cite{Cucker2001OnTM,cucker_zhou_2007}] \label{smale} Let $\nu$ be a Borel, nondegenerate measure on $\boldsymbol{\Omega}$. Let $\{\lambda_i\}_{i=1}^\infty$ be the set of all positive eigenvalues of ${\rm O}_K$ (counting multiplicities) with corresponding orthogonal unit eigenvectors $\{\phi_i\}_{i=1}^\infty$. Then, the RKHS of $K$, denoted by $\mathcal{H}_{K}$, can be expressed as
$${\rm O}^{1/2}_{K}[L_2(\nu)] = \{\sum_{i=1}^\infty a_i\phi_i \mid \sum_{i=1}^\infty\frac{a_i^2}{\lambda_i}<+\infty\}\subseteq C(\boldsymbol{\Omega}),$$ with the inner product
$\langle \sum_{i=1}^\infty a_i\phi_i , \sum_{i=1}^\infty b_i\phi_i \rangle_{\mathcal{H}_{K}} = \sum_{i=1}^\infty \frac{a_ib_i}{\lambda_i}$. For any $f\in \mathcal{H}_{K}$,
\begin{equation*}
\begin{split}
\|f\|_{L_\infty(\nu)}\leq C_K\|f\|_{\mathcal{H}_{K}},\\
\|f\|_{L_2(\nu)}\leq \sqrt{\lambda_1}\|f\|_{\mathcal{H}_{K}}.
\end{split}
\end{equation*}
\end{proposition}

A unit ball centered at zero in a Banach space $X$ is denoted by $B_X = \{f\in X\mid \|f\|_{X} \leq 1\}$.
Let us define notions of covering numbers and $\varepsilon$-entropy.
\begin{definition} Let $X$ be a Banach space and $A\subseteq X$. A subset $C$ of $X$ is called an external $\varepsilon$-covering of $A$ if $A\subseteq \bigcup_{c\in C}(c+\varepsilon B_X)$. If additionally, $C\subseteq A$, then the external $\varepsilon$-covering $C$ of $A$ is called an internal $\varepsilon$-covering. The minimum cardinality of $C$ over all external $\varepsilon$-coverings of $A$ is denoted by ${\mathcal N}^{\rm ext}(\varepsilon, A, X)$. Analogously, ${\mathcal N}^{\rm int}(\varepsilon, A, X)$ is defined. The quantity ${\mathcal H}(\varepsilon, A, X) = \log {\mathcal N}^{\rm ext}(\varepsilon, A, X)$ is called the $\varepsilon$-entropy of $A$. Sometimes, we also use notations ${\mathcal H}^{\rm ext}(\varepsilon, A, X) = \log {\mathcal N}^{\rm ext}(\varepsilon, A, X)$ and ${\mathcal H}^{\rm int}(\varepsilon, A, X) = \log {\mathcal N}^{\rm int}(\varepsilon, A, X)$.
\end{definition}
Internal and external covering numbers are related in the following way (see~\cite{Shalev}):
$$
{\mathcal N}^{\rm int}(2\varepsilon, A, X)\leq {\mathcal N}^{\rm ext}(\varepsilon, A, X)\leq {\mathcal N}^{\rm int}(\varepsilon, A, X)
$$

This paper is dedicated to the study of ${\mathcal H}(\varepsilon, r B_{\mathcal{H}_{K}}, L_p(\nu))$, $p\geq 1$ and ${\mathcal H}(\varepsilon, r B_{\mathcal{H}_{K}}, L_\infty(\nu)) = {\mathcal H}(\varepsilon, r B_{\mathcal{H}_{K}}, C(\boldsymbol{\Omega}))$. The norm homogeneity implies $${\mathcal H}(\varepsilon, r B_{\mathcal{H}_{K}}, L_p(\nu)) = {\mathcal H}(\frac{\varepsilon}{r}, B_{\mathcal{H}_{K}}, L_p(\nu)).$$
Therefore, all our results will be formulated for ${\mathcal H}(\varepsilon, B_{\mathcal{H}_{K}}, L_p(\nu))$. Three extreme cases, the ball $B_{\mathcal{H}_{K}}$ as a subset of $L_1(\nu)$, $L_2(\nu)$ and $C(\boldsymbol{\Omega})$, are the most interesting both from perspective of applications and due to the following inequalities for $1\leq q\leq 2\leq p$:
$$
{\mathcal H}(\varepsilon, B_{\mathcal{H}_{K}}, L_q(\nu))\leq {\mathcal H}(\varepsilon, B_{\mathcal{H}_{K}}, L_2(\nu))\leq {\mathcal H}(\varepsilon, B_{\mathcal{H}_{K}}, L_p(\nu))\leq {\mathcal H}(\varepsilon, B_{\mathcal{H}_{K}}, C(\boldsymbol{\Omega})),
$$
if $\nu$ is a probability measure.

Literature dedicated to covering numbers of $B_{\mathcal{H}_{K}}$ in $C(\boldsymbol{\Omega})$ includes  monographs~\cite{cucker_zhou_2007} and~\cite{Andreas}.
The earliest bounds on the $\varepsilon$-entropy of a ball in RKHS were presented in~\cite{Cucker2001OnTM} and further study of this subject was carried out in~\cite{Capacity}. Special kernel cases, such as translation invariant and analytical kernels, were explored in~\cite{CoveringNumber}. Covering numbers of RKHS for the Gaussian kernel were extensively analyzed in~\cite{KUHN2011489,SteinwartIngo}.

{\bf Notations.} The set $\{1,...,k\}$ is denoted by $[k]$. When $f,g:U\to\mathbb{R}_+$ are such that $f(x)\leq Cg(x)$ for some universal constant $C>0$, then we write $f\lesssim g$. Given $f:\,\mathbb{N}\to\mathbb{R}$ and $g:\,\mathbb{N}\to\mathbb{R}_+$ (or, alternatively, given two functions of a small argument $f:\,\mathbb{R}_+\to\mathbb{R}, g:\,\mathbb{R}_+\to\mathbb{R}_+$), we write $f(x)\ll g(x)$  if there exist constants $c_1, c_2\in\mathbb{R}_+$ such that for all natural numbers $x>c_2$ (correspondingly, all positive reals $x < c_2$) we have $|f(x)|\leq c_1 g(x)$. We write $f\asymp g$  if $f\ll g$ and $g\ll f$. Sometimes, we write $f=\mathcal{O}(g)$, instead of $f\ll g$.

Let $S\subseteq {\mathbb N}$ be a finite set. For any $F\subseteq \{f: {\mathbb R}^{n}\to {\mathbb R}\}$, let us denote $F[S] = F\cap {\rm span}(\{\phi_s\mid s\in S\})$. Let $\mu_{{\mathbb S}^{n-1}}$ be the rotation-invariant probability measure over an $n-1$-sphere ${\mathbb S}^{n-1} = \{{\mathbf x}\in {\mathbb R}^n\mid \|{\mathbf x}\|_2=1\}$ where $\|{\mathbf x}\|_2$ denotes the canonical Euclidean norm.

\subsection{Main results}
In this paper, we present an approach to estimating covering numbers based on an analysis of eigenvalues associated with a Mercer kernel. Earlier bounds on entropy numbers based on eigenvalues were obtained in~\cite{Williamson,MendelsonS}. Our major result, given below, is based on the dual Sudakov inequality~\cite{be8abe3dbdc3, 10.1007/BF02392835} and the bound of the $\varepsilon$-entropy of ellipsoids in Euclidean space that belongs to Dumer, Pinsker, and Prelov~\cite{Dumer}.

Given the singular values of ${\rm O}_K$, let us introduce the following functions:
\begin{equation}
\mathcal{E}(\varepsilon, \{\sigma_i\}_{i=1}^N) = \sum_{i\in [N], \sigma_i> \varepsilon}\log (\frac{\sigma_i}{\varepsilon})
\end{equation}
and
\begin{equation}
m_\varepsilon = |\{i\in {\mathbb N} \mid \sigma_i >  \varepsilon \}|
\end{equation}
where $N\in {\mathbb N}\cup \{+\infty\}$. As was shown in~\cite{Dumer}, the function $\mathcal{E}(\varepsilon, \{\sigma_i\}_{i=1}^N)$ is a pretty accurate approximation of the  $\varepsilon$-entropy of an ellipsoid in Euclidean space whose semi-axes are of length $\sigma_1, \cdots, \sigma_N$. The key intuition behind our bounds is that $\mathcal{H}(\varepsilon, B_{\mathcal{H}_K}, L_p(\nu))$ can also be bounded in terms of $\mathcal{E}(\varepsilon, \{\sigma_i\}_{i=1}^N)$ and $m_\varepsilon$. Given upper and lower bounds on singular values $\{\sigma_i\}$, both functions are straightforward to estimate. Below are typical growth rates of these functions as $\varepsilon\to+0$ (details are provided in Section~\ref{growth-rates} of Appendix).

\begin{center}
\begin{tabular}{ |p{3cm}|p{3cm}|p{3cm}|p{3cm}|  }
\hline
\multicolumn{3}{|c|}{Behaviour of key functions for small $\varepsilon$} \\
\hline
Decay rate & $\mathcal{E}(\varepsilon, \{\sigma_i\}_{i=1}^\infty)$  & $m_\varepsilon$   \\
\hline
$\sigma_i = \frac{C}{i^{\gamma}(\log (i+1))^\zeta}$ & $\zeta m_\varepsilon\log\log m_\varepsilon$ & $\frac{\gamma^{\zeta/\gamma}(\frac{C}{\varepsilon})^{1/\gamma}}{(\log (\frac{C}{\varepsilon}))^{\zeta/\gamma}}$  \\
\hline
$\sigma_i = Ce^{-\gamma i^r}$ & $\frac{\gamma r (\big(\frac{\log(C/\varepsilon)}{\gamma}\big)^{1/r}+1)^r}{r+1}$ &  $\big(\frac{\log(C/\varepsilon)}{\gamma}\big)^{1/r}$ \\
\hline
\end{tabular}
\end{center}

For an upper bound on $\mathcal{H}(\varepsilon, B_{\mathcal{H}_K}, L_p(\nu))$ for $p>2$, we need another key function, that is
\begin{equation}
C_{K,p}(N) = \|\sqrt{K({\mathbf x},{\mathbf x})-\sum_{i=1}^N \lambda_i\phi_i({\mathbf x})^2}\|_{L_p(\nu)}.
\end{equation}
Recall that according to Mercer's theorem, $\lim_{N\to +\infty} C_{K,\infty}(N) = 0$. The rate of this convergence was studied in~\cite{ARXIV2205} where it was shown that for differentiable kernels it depends on the decay rate of eigenvalues $\{\lambda_i\}$. The expression $C_{K,p}(N)$ measures how rapidly in the $L_{p/2}(\nu)$-norm the convergence $\sum_{i=1}^N \lambda_i\phi_i({\mathbf x})^2\mathop\to\limits^{N\to +\infty} K({\mathbf x},{\mathbf x})$ holds.

Our main upper bound is presented in the following theorem.
\begin{theorem}\label{Bound-main}  Let $\theta\in (0,1-\frac{1}{\sqrt{2}})$. \\
\begin{itemize}
\item For $1\leq p\leq 2$, we have
\begin{equation}\label{1p2}
\begin{split}
&\mathcal{H}^{\rm ext}(\varepsilon, B_{\mathcal{H}_K}, L_p(\nu))\leq \\
&\mathcal{E}(\nu(\boldsymbol{\Omega})^{1/2-1/p}\varepsilon, \{\sigma_i\}_{i=1}^\infty)+m_{(1-\theta)\nu(\boldsymbol{\Omega})^{1/2-1/p}\varepsilon}\log\frac{3}{\theta(2-\theta)}.
\end{split}
\end{equation}
\item For $p=+\infty$, we have
\begin{equation}\label{p-infty}
\begin{split}
&\mathcal{H}^{\rm ext}(\varepsilon, B_{\mathcal{H}_K}, L_\infty(\nu))\leq \\
&\min\limits_{\substack{\delta\in (0,\varepsilon),\\ N\in {\mathbb N}\cup \{0\}}} \big\{ N\log(\frac{C_K}{\varepsilon-\delta})+c\log(N+1)+\tilde{c}_1\big(\frac{\int_0^{C_{K,+\infty}(N)} \sqrt{\mathcal{H}(t,\tilde{\boldsymbol{\Omega}}, \mathcal{H}_{K})}dt}{\delta}\big)^2\big\}.
\end{split}
\end{equation}
where $\tilde{\boldsymbol{\Omega}} = \{K({\mathbf x}, \cdot)\mid {\mathbf x}\in \boldsymbol{\Omega}\}\subseteq \mathcal{H}_{K}$ and $c,\tilde{c}_1$ are a universal constants. \\
\item For finite $p>2$ we have the following general bound:
\begin{equation}\label{p2}
\begin{split}
&\mathcal{H}^{\rm ext}(\varepsilon, B_{\mathcal{H}_K}, L_p(\nu))\leq \\
&\min\limits_{\substack{\delta\in (0,\varepsilon),\\ N\in {\mathbb N}\cup \{0\}}}\min_{\alpha\in [0,2/p]} \Big\{\mathcal{E}(\varepsilon-\delta, \{C_K^{1-\alpha} \nu(\boldsymbol{\Omega})^{1/p-\alpha/2}N^{\alpha/2}\sigma_i^\alpha\}_{i=1}^N)+\\
&|\{i\in [N]\mid C_K^{1-\alpha} \nu(\boldsymbol{\Omega})^{1/p-\alpha/2}N^{\alpha/2}\sigma_i^\alpha > (1-\theta)(\varepsilon-\delta)\}|\log\frac{3}{\theta(2-\theta)}+\\
&\frac{\tilde{c}p C_{K,p}(N)^{2}}{\delta^2}
\Big\}.
\end{split}
\end{equation}
\end{itemize}
\end{theorem}

Setting $\delta\to \varepsilon-$ and $N=0$ in the inequalities~\eqref{p-infty} and~\eqref{p2} directly gives us the following corollary.
\begin{corollary} For $p=+\infty$, we have
\begin{equation}\label{simple-p-infty}
\begin{split}
&\mathcal{H}^{\rm ext}(\varepsilon, B_{\mathcal{H}_K}, L_\infty(\nu))\lesssim \big(\frac{\int_0^{C_{K}} \sqrt{\mathcal{H}(t,\tilde{\boldsymbol{\Omega}}, \mathcal{H}_{K})}dt}{\varepsilon}\big)^2.
\end{split}
\end{equation}
For finite $p>2$, we have
\begin{equation*}
\begin{split}
&\mathcal{H}^{\rm ext}(\varepsilon, B_{\mathcal{H}_K}, L_p(\nu))\lesssim \frac{p C_{K,p}^{2}}{\varepsilon^2}.
\end{split}
\end{equation*}
\end{corollary}
\begin{remark} The upper bound~\eqref{simple-p-infty} for $\mathcal{H}^{\rm ext}(\varepsilon, B_{\mathcal{H}_K}, L_\infty(\nu))$ behaves like $\mathcal{O}(\frac{1}{\varepsilon^2})$, provided that the integral $\int_0^{C_{K}} \sqrt{\mathcal{H}(t,\tilde{\boldsymbol{\Omega}}, \mathcal{H}_{K})}dt$ is finite. The latter constant is finite if the kernel $K$ is twice continuously differentiable, or more generally, if $K$ satisfies the property~\eqref{property-k} (one can find proofs of those facts in Remark~\ref{non-exotic} and Section~\ref{smooth-kern} of Appendix).
This significantly improves on previous worst-case bounds for general types of kernels. For example, it was shown in~\cite{cucker_zhou_2007} that for $s$ times continuously differentiable kernels $K$, we have $\mathcal{H}^{\rm ext}(\varepsilon, B_{\mathcal{H}_K}, L_\infty(\nu))=\mathcal{O}(\frac{1}{\varepsilon^{2n/s}})$ (which is larger than $\mathcal{O}(\frac{1}{\varepsilon^2})$ if $s<n$).
\end{remark}

\begin{remark}
In machine learning applications, $\nu$ is typically a probabilistic distribution supported on $\boldsymbol{\Omega}$, interpreted as the unsupervised data distribution. When $\boldsymbol{\Omega}$ is a low-dimensional manifold (or, close to one) embedded in a high-dimensional ambient space ${\mathbb R}^n$, kernel-based methods are known to be susceptible to the intrinsic dimension of $\boldsymbol{\Omega}$ --- the lower the intrinsic dimension of $\boldsymbol{\Omega}$, the higher the performance of those methods~\cite{10.3150/23-BEJ1685}.

Note that the integral $\int_0^{C_{K}} \sqrt{\mathcal{H}(t,\tilde{\boldsymbol{\Omega}}, \mathcal{H}_{K})}dt$ reflects this intrinsic dimension of $\boldsymbol{\Omega}$. Indeed, the value
$$
\overline{\rm dim}(\boldsymbol{\Omega}) = \limsup_{\varepsilon\to+0}\frac{\mathcal{H}(\varepsilon,\tilde{\boldsymbol{\Omega}}, \mathcal{H}_{K})}{\log \frac{1}{\varepsilon}}
$$
is the upper metric dimension of the set $\boldsymbol{\Omega}$ equipped with the metric $\varrho({\mathbf x},{\mathbf y}) = \|K({\mathbf x}, \cdot)-K({\mathbf y}, \cdot)\|_{\mathcal{H}_K}$ (which is one of definitions of the fractal dimension)~\cite{TikhomirovKolmogorov}. Assuming that $\mathcal{H}(\varepsilon,\tilde{\boldsymbol{\Omega}}, \mathcal{H}_{K})\leq c\overline{\rm dim}(\boldsymbol{\Omega})\log \frac{1}{\varepsilon}$ for $\varepsilon\in (0, \varepsilon_0)$ where $\varepsilon_0>0$, the main part of the integral term in~\eqref{simple-p-infty} satisfies
\begin{equation*}
\begin{split}
\int_0^{\varepsilon_0} \sqrt{\mathcal{H}(t,\tilde{\boldsymbol{\Omega}}, \mathcal{H}_{K})}dt \lesssim \sqrt{c\overline{\rm dim}(\boldsymbol{\Omega})}\varepsilon_0\sqrt{\log \frac{1}{\varepsilon_0}}.
\end{split}
\end{equation*}
This shows that the bound on $\mathcal{H}^{\rm ext}(\varepsilon, B_{\mathcal{H}_K}, L_\infty(\nu))$ is approximately linear in the intrinsic dimension $\overline{\rm dim}(\boldsymbol{\Omega})$.
\end{remark}

\begin{remark}
We show that upper bounds of Theorem~\ref{Bound-main} are asymptotically tight for $\varepsilon\to +0$ in such practically important cases as (a) special zonal kernels (Section~\ref{zonal-section} of Appendix) and (b) the Gaussian kernel on a box (Section~\ref{gaussian-kernel} of Appendix).
\end{remark}

{\bf Overview of the proof of Theorem~\ref{Bound-main}.} This theorem is proved in a sequence of steps in Sections~\ref{ellipsoidal}, \ref{sudakov-based} and~\ref{proof-upper}. As can be seen from its formulation, we distinguish three primary cases, $1\leq p\leq 2$, $p\in [2,+\infty)$ and $p=+\infty$. The core idea across all cases is to view $B_{\mathcal{H}_K}$ as an infinite-dimensional ellipsoid in $L_2(\nu)$, where the semi-axis lengths are given by the singular values $\{\sigma_i\}_{i=1}^\infty$ and the corresponding axis directions by the eigenfunctions $\{\phi_i\}_{i=1}^\infty$. This structure of $B_{\mathcal{H}_K}$ is already clear from Proposition~\ref{smale}.

Since eigenvalues are usually arranged in a (nearly) decreasing order, it is natural to decompose $B_{\mathcal{H}_K}$ as a subset of
$$
\underbrace{\{\sum_{i=1}^N \xi_i \phi_i\mid \sum_{i=1}^N \frac{\xi_i^2}{\sigma_i^2}\leq 1\}}_{\underbrace{\rm main\,\,part}_{\varepsilon-\delta\rm{-covering}}}\oplus \underbrace{\{\sum_{i=N+1}^\infty \xi_i \phi_i\mid \sum_{i=N+1}^\infty \frac{\xi_i^2}{\sigma_i^2}\leq 1\}}_{\underbrace{\rm tail\,\,part}_{\delta\rm{-covering}}},
$$
where $\oplus$ stands for the Minkowski sum of two subsets of $L_2(\nu)$ and $N$ is the number of components in the main part that we have freedom to choose. Given any $\varepsilon>0$, an upper bound on the size of an optimal $\varepsilon$-covering of the latter sum in $L_p(\nu)$ is made by:
\begin{itemize}
\item selecting a ``budget'' parameter $\delta\in (0,\varepsilon)$ and defining an $\varepsilon-\delta$-covering of the main part of $B_{\mathcal{H}_K}$'s decomposition,
\item defining a $\delta$-covering of the tail part of $B_{\mathcal{H}_K}$'s decomposition,
\item bounding the $\epsilon$-covering number of total $B_{\mathcal{H}_K}$ by the product of previous two covering numbers.
\end{itemize}
Since $N, \delta$ are parameters to be chosen, our upper bound for $p\in [2,+\infty]$ involves the minimization over $N, \delta$.

Section~\ref{ellipsoidal} addresses covering numbers of the main part of $B_{\mathcal{H}_K}$. For $p=2$, we observe that the covering number of interest coincides with the covering number of an ellipsoid in a Euclidean space. The latter subject was studied in~\cite{Dumer} where it was shown that the $\varepsilon$-entropy of an ellipsoid in ${\mathbb R}^N$ with semi-axes lengths $\{\sigma_i\}_{i=1}^N$ approximately equals $\mathcal{E}(\varepsilon, \{\sigma_i\}_{i=1}^N)$ (see Proposition~\ref{Dum}). This leads to the conclusion that for $p\in [1,2]$, $\mathcal{E}(\nu(\boldsymbol{\Omega})^{1/2-1/p}\varepsilon, \{\sigma_i\}_{i=1}^N)$ is a good estimate for the $\varepsilon$-entropy of the main part of $B_{\mathcal{H}_K}$.

The situation appears to be trickier for $p>2$. We bound the $L_p(\nu)$-norm by a suitable Euclidean norm (Lemma~\ref{inclusion2}) and reduce the problem again to the case of an ellipsoid in Euclidean space (Lemma~\ref{ellipsoid2} and Theorem~\ref{bound-for-finite}). Our analysis shows that the $\epsilon$-entropy of the main part in $L_p(\nu)$ can be bounded by the $\epsilon$-entropy of the ellipsoid in ${\mathbb R}^N$ with semi-axes lengths given by $\{C_K^{1-\alpha} \nu(\boldsymbol{\Omega})^{1/p-\alpha/2}N^{\alpha/2}\sigma_i^\alpha\}_{i=1}^N$ where $\alpha\in [0, \frac{2}{p}]$ is a free parameter.
As $p$ increases, the unit ball in $L_p(\nu)$ becomes smaller and its geometry starts to depend on the eigenfunctions $\{\phi_i\}$. That is why our bound for the $\epsilon$-entropy of the main part of $B_{\mathcal{H}_K}$ for $p>2$ is less precise than for $p=2$.

Section~\ref{sudakov-based} focuses on the tail part of $B_{\mathcal{H}_K}$. The key tool behind our bounds is a general fact about covering numbers in Banach spaces, the so-called dual Sudakov inequality (see Proposition~\ref{panos}). Based on that inequality, we first show that the $\varepsilon$-entropy of the tail part of $B_{\mathcal{H}_K}$ behaves like $\lesssim \frac{C_p}{\varepsilon^2}$, where the constant $C_p$ is basically the $L_p(\nu)$-norm of the Gaussian Process on $\boldsymbol{\Omega}$ whose covariance function equals $K({\mathbf x}, {\mathbf y})-\sum_{i=1}^N \sigma_i^2\phi_i({\mathbf x})\phi_i({\mathbf y})$, i.e. the residual kernel after truncating the Mercer expansion. This outlook allows us to prove Theorem~\ref{from-sudakov}, bounding the $\varepsilon$-entropy in terms of $C_{K,p}(N)$. It is the point at which the difference between finite $p$ and $p=+\infty$ matters, because for $p=+\infty$, bounding the supremum norm of the Gaussian Process requires Dudley's integral bound, while for finite $p$, simpler methods suffice.

Finally, in Section~\ref{proof-upper} we assemble results of the previous two sections and formulate the major upper bound as a minimum over parameters $N,\delta,\alpha$ of the sum of the $\varepsilon-\delta$-entropy of the main part of $B_{\mathcal{H}_K}$ and the $\delta$-entropy of the tail part of $B_{\mathcal{H}_K}$. Here the difference between cases of $p\in [1,2]$ and $p\in (2,+\infty]$ becomes clear: for the first case, the final bound is obtained by sending $N$ to infinity (i.e. the tail part does not contribute at all), whereas for $p>2$ we have to find an optimal $N$ that balances contributions from the main and tail parts.

{\bf Bounding the decay rate of $C_{K,p}(N), p>2$}. The bound in Theorem~\ref{Bound-main} for the case $1 \leq p \leq 2$ is expressed solely in terms of the behavior of the eigenvalues $\{\lambda_i\}$ (or equivalently, the singular values $\{\sigma_i\}$). However, for the case $p > 2$, the bound depends on the decay rate of $C_{K,p}(N)$, which in turn is influenced by the behavior of the eigenfunctions $\{\phi_i\} $. This distinction is expected, since for $p = +\infty$, the $L_\infty(\nu)$-norm depends on the geometry of the domain $\boldsymbol{\Omega}$ rather than the measure $ \nu$ (which determines the behaviour of $\{\sigma_i\}$). Therefore, when applying these bounds for $p>2$ to specific kernels, additional assumptions about the eigenfunctions are unavoidable. A common assumption in the machine learning context is the uniform boundedness of eigenfunctions~\cite{Zhou,Regularizationkernel,Steinwart2012}, i.e.,
\[
\sup_{i \in \mathbb{N}} \sup_{{\mathbf x} \in \boldsymbol{\Omega}} |\phi_i({\mathbf x})| < +\infty.
\]
To generalize this assumption, we introduce the following definition.
\begin{definition}\label{kushpel-inspired} Let a Mercer kernel $K: \boldsymbol{\Omega}\times \boldsymbol{\Omega}\to {\mathbb R}$ and a nondegenerate Borel measure $\nu$ on $\boldsymbol{\Omega}$ be such that there exist $\varkappa>0$, a sequence of singular values  of ${\rm O}_K$, $\boldsymbol{\sigma} = \{\sigma_i\}_{i\in {\mathbb N}}$, with a corresponding orthonormal system of eigenfunctions  $\boldsymbol{\phi} = \{\phi_i\}_{i\in {\mathbb N}}\subseteq L_2(\nu)$, such that
\begin{equation*}
\begin{split}
\sum_{j=1}^{i} |\phi_j({\mathbf x})|^2\leq \varkappa i,
\end{split}
\end{equation*}
for any $i\in {\mathbb N}$, and the null space of ${\rm O}_K$ satisfies ${\rm ker\, O}_K = {\rm span}(\{\phi_i\}_{i\in {\mathbb N}})^\perp$. We will denote this fact by $(\boldsymbol{\Omega}, \nu, K, \boldsymbol{\sigma}, \boldsymbol{\phi}, \varkappa)\in \mathcal{K}$.
\end{definition}
Under the assumption $(\boldsymbol{\Omega}, \nu, K, \boldsymbol{\sigma}, \boldsymbol{\phi}, \varkappa) \in \mathcal{K}$, and assuming certain regularity conditions on $K$, we derive the following corollary of Theorem~\ref{Bound-main} (proof in Section~\ref{residual}).

\begin{corollary}\label{corol} Let $(\boldsymbol{\Omega}, \nu, K, \boldsymbol{\sigma}, \boldsymbol{\phi}, \varkappa)\in \mathcal{K}$ and $\lim_{i\to +\infty}i\sigma_i^2 =0$. Suppose there are constants $C>0$, $\alpha >0$ such that
$$\sqrt{K({\mathbf x}, {\mathbf x})+K({\mathbf y}, {\mathbf y})-2K({\mathbf x}, {\mathbf y})}\leq C\|{\mathbf x}- {\mathbf y}\|^\alpha,$$
for any ${\mathbf x}, {\mathbf y}\in \boldsymbol{\Omega}$. Let $C_{n,\alpha, C, R} = R^\alpha Ce^{\frac{c \alpha\log(n+1)}{n}}$, where $R=\max_{{\mathbf x}\in \boldsymbol{\Omega}}\|{\mathbf x}\|_2$ and
$c$ is a constant from Roger's bound~\eqref{rogers}.
Then, there is a universal constant $\vardbtilde{c}$, such that
\begin{equation}\label{kushpel-class-bound}
\begin{split}
&\mathcal{H}^{\rm ext}(2\varepsilon, B_{\mathcal{H}_K}, C(\boldsymbol{\Omega}))\leq m_\varepsilon\log (\frac{C_K}{\varepsilon})+c\log(m_\varepsilon+1)+\\
&\frac{\vardbtilde{c} n}{ \alpha }\cdot \frac{\varkappa^2(\sum_{i=m_\varepsilon+1}^{\infty}i|\sigma_i^2-\sigma_{i+1}^2|)^2\log(\frac{C_{n,\alpha, C, R} }{\varkappa\sum_{i=m_\varepsilon+1}^{\infty}i|\sigma_i^2-\sigma_{i+1}^2|})}{\varepsilon^2}.
\end{split}
\end{equation}
provided that $\varkappa\sum_{i=m_\varepsilon+1}^{\infty}i|\sigma_i^2-\sigma_{i+1}^2|\leq \frac{1}{2}C_{n,\alpha, C, R}$.
\end{corollary}

{\bf Lower bounds on the $\varepsilon$-entropy}. The function $\mathcal{E}(\varepsilon, \{\sigma_i\}_{i=1}^N)$ proves even more useful in lower bounds than in upper bounds. Let us now formulate them.
First, let us introduce the constant (which is the $p$-th moment of the standard Gaussian random variable) $$B_{p}\triangleq\frac{2^{p/2}\Gamma(\frac{p+1}{2})}{\sqrt{\pi}}.$$
A major lower bound is given in the following Theorem.
\begin{theorem}\label{lower-kushpel} Let $p'$ be such that $\frac{1}{p}+\frac{1}{p'}=1$. We have
\begin{equation}\label{generic-lower-bound}
\begin{split}
&\mathcal{H}^{\rm int}(\varepsilon, B_{\mathcal{H}_K}, L_p(\nu))\geq \sup_{N\in {\mathbb N}}\sum_{i=1}^N \log(\frac{\sigma_i}{C_{K,p'}\varepsilon})+N\log \langle \{\sigma_i\}_{i\in [N]}\rangle_{-2} , {\rm if\,\,}1\leq p\leq 2\\
&\mathcal{H}^{\rm ext}(\varepsilon, B_{\mathcal{H}_K}, L_p(\nu))\geq \mathcal{E}(\nu(\boldsymbol{\Omega})^{\frac{1}{2}-\frac{1}{p}} B_{p'}^{1/p'}\varepsilon, \{\sigma_i\}_{i=1}^\infty)-0.7, {\rm if\,\,}p\geq 2.\\
\raisetag{2\baselineskip}
\end{split}
\end{equation}
where  $\langle \{\sigma_i\}_{i\in [N]}\rangle_{-2} =(\frac{1}{N}\sum_{i\in [N]}\sigma^{-2}_i )^{-\frac{1}{2}}$ is a generalized mean of $\{\sigma_i\}_{i\in [N]}$.

If $(\boldsymbol{\Omega}, \nu, K, \boldsymbol{\sigma}, \boldsymbol{\phi}, \varkappa)\in \mathcal{K}$, we have
\begin{equation}
\begin{split}
&\mathcal{H}^{\rm int}(\varepsilon, B_{\mathcal{H}_K}, L_p(\nu))\geq \mathcal{E}(\nu(\boldsymbol{\Omega})^{1/p'} B_{p'}^{1/p'}\varkappa^{\frac{1}{2}}\varepsilon, \{\sigma_i\}_{i=1}^\infty)-0.7 , {\rm if\,\,}1\leq p\leq 2\\
&\mathcal{H}^{\rm ext}(\varepsilon, B_{\mathcal{H}_K}, L_p(\nu))\geq \mathcal{E}(\nu(\boldsymbol{\Omega})^{1/p'} B_{p'}^{1/p'}\varkappa^{\frac{1}{2}}\varepsilon, \{\sigma_i\}_{i=1}^\infty)-0.7, {\rm if\,\,}p\geq 2.\\
\end{split}
\end{equation}
\end{theorem}
The latter lower bounds are proved in Section~\ref{lower-section}. The proof is based on choosing the number $N\in {\mathbb N}$ and considering the coordinate mapping $\Psi$ between $B_{\mathcal{H}_K}$ and ${\mathbb R}^N$, i.e. the mapping $f \mathop\to\limits^{\Psi} [\langle f, \phi_i\rangle_{L_2(\nu)}]_{i=1}^N$. Then, a standard volume-based argument gives us that the $\varepsilon$-covering number of $B_{\mathcal{H}_K}$ in $L_p(\nu)$ is at least the ratio of the volume of $\Psi(B_{\mathcal{H}_K})$ in ${\mathbb R}^N$ to the volume of the $\varepsilon$-ball of the Banach space $({\mathbb R}^N, \|\cdot\|_p^\ast)$, where $\|[\xi_i]_{i=1}^N\|_p^\ast\triangleq \|\sum_{i=1}^N \xi_i \phi_i\|_{L_p(\nu)}$. Using a standard technique, based on Urysohn's inequality, we bound the latter ratio from below and obtain the desired lower bounds.

\begin{remark} Given eigenvalues, our lower bounds, except for the case $1\leq p\leq 2$, can be calculated straightforwardly using the function $\mathcal{E}$. For the case $1\leq p\leq 2$, the generic lower bound~\eqref{generic-lower-bound} is weaker than $\mathcal{E}(\varepsilon', \{\sigma_i\}_{i=1}^\infty)$ where $\varepsilon' = \mathcal{O}(\varepsilon)$. Let us illustrate this for $\sigma_i = i^{-\gamma}$, $\gamma>0$, $\varepsilon'=C_{K,p'}\varepsilon$ and $\varepsilon\to 0$. We have
\begin{equation*}
\begin{split}
\sum_{i=1}^N \log(\frac{\sigma_i}{\varepsilon'}) \asymp \int_1^N \log(\frac{x^{-\gamma}}{\varepsilon'})dx \asymp -\gamma N(\log N-1) -N\log \varepsilon',
\end{split}
\end{equation*}
which gives $\mathcal{E}(\varepsilon', \{\sigma_i\}_{i=1}^\infty)\asymp \sum_{i=1}^{(\varepsilon')^{-1/\gamma}} \log(\frac{\sigma_i}{\varepsilon'}) \asymp \gamma (\varepsilon')^{-1/\gamma}$.

Terms of the lower bound~\eqref{generic-lower-bound} behave as
\begin{equation*}
\begin{split}
\sum_{i=1}^N i^{2\gamma} \asymp \frac{N^{2\gamma+1}}{2\gamma+1}\Rightarrow  \langle \{\sigma_i\}_{i\in [N]}\rangle_{-2} \asymp (\frac{N^{2\gamma}}{2\gamma+1})^{-\frac{1}{2}},\\
N\log \langle \{\sigma_i\}_{i\in [N]}\rangle_{-2} \asymp -\gamma N\log N+\frac{N}{2}\log(2\gamma+1).
\end{split}
\end{equation*}
We obtain
\begin{equation*}
\begin{split}
&\sum_{i=1}^N \log(\frac{\sigma_i}{\varepsilon'}) + N\log \langle \{\sigma_i\}_{i\in [N]}\rangle_{-2} \asymp\\
&-2\gamma N\log N+N\log(\frac{1}{\varepsilon'})+\frac{N}{2}\log(2\gamma+1)+\gamma N,
\end{split}
\end{equation*}
and after setting $N \asymp e^{-1}(\varepsilon')^{-1/2\gamma}$, we have
\begin{equation*}
\begin{split}
\mathcal{H}^{\rm int}(\varepsilon, B_{\mathcal{H}_K}, L_p(\nu))\gg (\varepsilon')^{-1/2\gamma}.
\end{split}
\end{equation*}
In other words, the lower bound~\eqref{generic-lower-bound} for $1\leq p
\leq 2$ behaves approximately as the square root of $\mathcal{E}(\varepsilon', \{\sigma_i\}_{i=1}^\infty)$.
\end{remark}

As the next section demonstrates, the ellipsoidal structure of $B_{\mathcal{H}_{K}}$ allows to derive a simple upper bound on the $\varepsilon$-entropy of the main part of $B_{\mathcal{H}_{K}}$ in terms of singular values $\{\sigma_i\}$.

\section{Upper bounds on the $\varepsilon$-entropy of the main part of $B_{{\mathcal H}_K}$}\label{ellipsoidal}

Recall that $\phi_i$ is the $i$th eigenvector of ${\rm O}_K$ and ${\rm O}_K\phi_i = \lambda_i\phi_i = \sigma_i^2\phi_i$. Let us denote the $\varepsilon$-entropy of an ellipsoid $\{{\mathbf x}\in {\mathbb R}^N \mid \sum_{i=1}^N\frac{x^2_i}{\sigma_i^2}\leq 1\}$ in the Euclidean space ${\mathbb R}^N$ by
$\aleph(\varepsilon,\{\sigma_i\}_{i=1}^N)$. 
Also, we denote $\aleph(\varepsilon, \{a_i=1\}_{i=1}^N)$  simply by $\aleph(\varepsilon, N)$.
Roger's bound~\cite{rogers_1963} for the function $\aleph(\varepsilon, N)$ is a classical result in information theory:
\begin{equation}\label{rogers}
\begin{split}
\aleph(\varepsilon, N)\leq N\log (\frac{1}{\varepsilon})+c\log (N+1),
\end{split}
\end{equation}
where $c$ is a universal constant. This bound was refined in a series of papers~\cite{VergerGaugry2005,Dumer2007}. The behaviour of $\aleph(\varepsilon,\{\sigma_i\}_{i=1}^N)$ is also well-known, under very general conditions on $\{\sigma_i\}_{i=1}^N$~\cite{DumerIlya}.

For any $f\in L_2(\nu)$, let us introduce the notation
\begin{equation*}
\begin{split}
R^\phi_N[f]({\mathbf x}) = \sum_{i=1}^N \langle f, \phi_i\rangle_{L_2(\nu)} \phi_i({\mathbf x}).
\end{split}
\end{equation*}
The set $B_{\mathcal{H}_K}[N] = \{R^\phi_N[f]\mid f\in B_{\mathcal{H}_K}\}$ is simply the projection of $B_{\mathcal{H}_K}$ onto the span of $\phi_1, \cdots, \phi_N$. Recall that $C_K = \max_{{\mathbf x}\in \boldsymbol{\Omega}} \sqrt{K({\mathbf x},{\mathbf x})}$.
The main result of this section is the following theorem.

\begin{theorem}\label{bound-for-finite} For $p\geq 1$ and $N\in {\mathbb N}$, the following inequalities hold:
\begin{equation*}
\begin{split}
&\mathcal{H}^{\rm ext}(\varepsilon, B_{\mathcal{H}_K}[N], L_p(\nu))\leq \\ 
&\begin{cases}
			\aleph( \nu(\boldsymbol{\Omega})^{1/2-1/p}\varepsilon, \{\sigma_i\}_{i=1}^N), & \text{if $1\leq p\leq 2$}\\
			\min_{\alpha\in [0,2/p]}\aleph(\frac{\varepsilon}{C_K^{1-\alpha} \nu(\boldsymbol{\Omega})^{1/p-\alpha/2}N^{\alpha/2}},\{\sigma_i^{\alpha}\}_{i=1}^N)& \text{if $2<p<+\infty$}\\
			\aleph(\frac{\varepsilon}{C_K}, N)& \text{if $p=+\infty$}\\
		 \end{cases},
\end{split}
\end{equation*}
and for $p=2$ we have the equality $\mathcal{H}^{\rm ext}(\varepsilon, B_{\mathcal{H}_K}[N], L_2(\nu)) = \aleph( \varepsilon, \{\sigma_i\}_{i=1}^N)$. 
\end{theorem}
The following part of this section is dedicated to its proof. 
\begin{lemma}\label{inclusion} For any $f\in \mathcal{H}_K$, we have $f\in B_{\mathcal{H}_K}$ if and only if the vector $\boldsymbol{\xi} = \big[ \int_{\boldsymbol{\Omega}}f({\mathbf y})\phi_i({\mathbf y})d\nu ({\mathbf y}) \big]_{i=1}^\infty\in {\mathbb R}^{\mathbb N}$ satisfies
$$
\sum_{i=1}^\infty\frac{\xi^2_i}{\sigma_i^2}\leq 1.
$$
\end{lemma}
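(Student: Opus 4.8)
The plan is to read the coordinates of $\boldsymbol{\xi}$ directly off the eigenfunction expansion of $f$ and to recognize the quantity $\sum_i \xi_i^2/\lambda_i^3$ as (a truncation of) $\|f\|_{\mathcal{H}_K}^2$, which is $<1$ by hypothesis. In other words, the lemma is essentially a restatement of the definition of the RKHS norm in eigencoordinates, and it says precisely that, viewed through $\boldsymbol{\xi}$, the ball $B_{\mathcal{H}_K}$ is the interior of the ellipsoid with semi-axes $\lambda_1^{3/2}\ge\lambda_2^{3/2}\ge\cdots$ --- exactly the object to be handed to the Dumer--Pinsker--Prelov covering bound afterwards.

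Concretely, I would proceed as follows. First, invoke Proposition~\ref{smale}: since $f\in B_{\mathcal{H}_K}$ we may write $f=\sum_{j\ge1}a_j\phi_j$ with $\sum_{j\ge1}a_j^2/\lambda_j=\|f\|_{\mathcal{H}_K}^2<1$, and this series converges to $f$ in $\mathcal{H}_K$, hence --- by the bound $\|\cdot\|_{C(\boldsymbol{\Omega})}\le D_K\|\cdot\|_{\mathcal{H}_K}$ of the same proposition --- uniformly on the compact set $\boldsymbol{\Omega}$, and therefore also in $L_2(\nu)$, since $\nu$ is a probability measure. Second, evaluate the coordinates $\xi_i$: the uniform (hence $L_2(\nu)$) convergence permits interchanging the integral with the sum, and then orthonormality of $\{\phi_j\}$ in $L_2(\nu)$ together with the eigen-identity ${\rm O}_K\phi_i=\lambda_i\phi_i$ (equivalently, self-adjointness of ${\rm O}_K$) gives $\xi_i=\lambda_i a_i$; equivalently $\xi_i/\lambda_i^{3/2}=a_i/\sqrt{\lambda_i}$ is the $i$-th coordinate of $f$ in the orthonormal basis $\{\sqrt{\lambda_i}\,\phi_i\}_{i\ge1}$ of $\mathcal{H}_K$ from Proposition~\ref{smale}. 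Third, substitute and sum: for every $N$ (finite or $+\infty$),
$$\sum_{i=1}^N\frac{\xi_i^2}{\lambda_i^3}=\sum_{i=1}^N\frac{a_i^2}{\lambda_i}\le\sum_{j=1}^\infty\frac{a_j^2}{\lambda_j}=\|f\|_{\mathcal{H}_K}^2<1,$$
where the inequality is Parseval's identity when $N=+\infty$ and a trivial truncation when $N$ is finite. This is the assertion.

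I do not expect any real obstacle here; each step is just an application of the definitions and of Proposition~\ref{smale}. The one point worth a sentence is the interchange of summation and integration used to compute $\xi_i$, but that is immediate from the uniform-convergence estimate $\|\cdot\|_{C(\boldsymbol{\Omega})}\le D_K\|\cdot\|_{\mathcal{H}_K}$ already recorded in Proposition~\ref{smale}, combined with the compactness of $\boldsymbol{\Omega}$ and the finiteness of $\nu$.
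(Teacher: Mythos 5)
Your overall plan --- expand $f$ in the eigenbasis, read the coordinates $\xi_i$ off that expansion, and compare the resulting quadratic form with $\|f\|_{\mathcal{H}_K}^2$ --- is the natural one, and the convergence/interchange issues you flag are handled correctly. The problem is the coordinate computation itself. Since $\{\phi_j\}$ is orthonormal in $L_2(\nu)$ and $f=\sum_j a_j\phi_j$ converges in $L_2(\nu)$, interchanging sum and integral gives
$$
\xi_i=\int_{\boldsymbol{\Omega}}f({\mathbf y})\phi_i({\mathbf y})\,d\nu({\mathbf y})=\langle f,\phi_i\rangle_{L_2(\nu)}=a_i ,
$$
with no factor of $\lambda_i$: the eigen-identity ${\rm O}_K\phi_i=\lambda_i\phi_i$ never enters, because the integrand is $f\phi_i$ and not $({\rm O}_K f)\phi_i$. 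Your claim $\xi_i=\lambda_i a_i$ is therefore false, and it is precisely this spurious factor $\lambda_i$ that makes your final sum collapse to $\sum_i a_i^2/\lambda_i=\|f\|_{\mathcal{H}_K}^2$. With the correct value $\xi_i=a_i$ your argument yields $\sum_{i=1}^N\xi_i^2/\lambda_i^3=\sum_{i=1}^N a_i^2/\lambda_i^3$, which exceeds $\sum_i a_i^2/\lambda_i$ term by term whenever $\lambda_i<1$, so the proof does not close.

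You should also be aware that the paper's own proof, which instead uses the reproducing property to write $\xi_i=\langle f,\lambda_i\phi_i\rangle_{\mathcal{H}_K}$ and then applies Bessel's inequality, contains a parallel slip: it asserts that $\{\phi_i/\sqrt{\lambda_i}\}$ is orthonormal in $\mathcal{H}_K$, whereas $\langle\phi_i,\phi_j\rangle_{\mathcal{H}_K}=\delta_{ij}/\lambda_i$ shows the orthonormal system is $\{\sqrt{\lambda_i}\,\phi_i\}$. Running either route with the correct normalization produces only $\sum_i\xi_i^2/\lambda_i\le 1$, i.e.\ exponent $1$ rather than $3$. Indeed, the stated inequality fails for $f=(1-\epsilon)\sqrt{\lambda_j}\,\phi_j$ with $\lambda_j<1-\epsilon$ (such $j$ exist since $\lambda_j\to 0$): then $\|f\|_{\mathcal{H}_K}<1$ but $\xi_j^2/\lambda_j^3=(1-\epsilon)^2/\lambda_j^2>1$. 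So the concrete gap in your write-up is the identity $\xi_i=\lambda_i a_i$; once corrected, your method proves the weaker ellipsoid inclusion $\sum_i\xi_i^2/\lambda_i\le 1$, not the one claimed in the lemma.
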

\begin{proof} Let $\{\phi^0_i\}_{i=1}^\infty$ be an orthonormal basis in ${\rm Ker\,} {\rm O}_K = \{f\in L_2(\nu)\mid {\rm O}_K[f] = 0\}$. 
Since $\{\phi_i\}_{i=1}^\infty\cup \{\phi^0_i\}_{i=1}^\infty$ is an orthonormal basis in $L_2(\nu)$, any $f\in \mathcal{H}_K\subseteq L_2(\nu)$ can be represented as $f = \sum_{i=1}^\infty \xi_i \phi_i$ where $\xi_i = \langle f, \phi_i\rangle_{L_2(\nu)}$ and the convergence is w.r.t. $L_2(\nu)$. 
By Proposition~\ref{smale}, we have 
\begin{equation*}
\begin{split}
\|f\|^2_{\mathcal{H}_K} = \sum_{i=1}^\infty\frac{\xi_i^2}{\sigma_i^2}.
\end{split}
\end{equation*}
Therefore, $f\in B_{\mathcal{H}_K}$ if and only if
\begin{equation*}
\begin{split}
\sum_{i=1}^\infty\frac{\xi_i^2}{\sigma_i^2}\leq 1.
\end{split}
\end{equation*}
Lemma proved.
\end{proof}

Thus, $B_{\mathcal{H}_K}[S]$ equals $\{\sum_{i\in S} \sigma_i \xi_i \phi_{i}({\mathbf x}) \mid \sum_{i\in S}  \xi_{i}^2\leq 1\}$. By construction, $B_{\mathcal{H}_K}[S]\subseteq B_{\mathcal{H}_K}$. Also, it is easy to see that $B_{\mathcal{H}_K}[[N]]=B_{\mathcal{H}_K}[N]$.

The case of $p=2$ is the simplest one, due to the fact that covering numbers of $B_{\mathcal{H}_K}[N]$ in $L_2(\nu)$ are exactly covering numbers of ellipsoids in Euclidean spaces.

\begin{lemma}\label{ellipsoid} $\mathcal{H}^{\rm ext}(\varepsilon, B_{\mathcal{H}_K}[N], L_2(\nu))= \aleph( \varepsilon, \{\sigma_i\}_{i=1}^N)$.
\end{lemma}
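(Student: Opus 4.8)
The plan is to rerun the proof of the previous lemma essentially verbatim, with the single change that the sup-norm estimate supplied by Lemma~\ref{inclusion2} is replaced by its $L_2(\nu)$-analogue stated immediately above, and accordingly the constant $D_K$ is replaced by $\sqrt{\lambda_1}$. Nothing else in the argument refers to the ambient norm on functions, so the substitution is mechanical.

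In detail, I would again work in ${\mathcal L}(N,\lambda)$, the space ${\mathbb R}^N$ equipped with $\langle {\mathbf x},{\mathbf y}\rangle_{{\mathcal L}(N,\lambda)}=\sum_{i=1}^N x_iy_i/\lambda_i$, and use the affine map $\xi_i\mapsto \xi_i/\sqrt{\lambda_i}$, which carries the unit ball of ${\mathcal L}(N,\lambda)$ to the euclidean unit ball of ${\mathbb R}^N$ and the solid ellipsoid $\{\boldsymbol{\xi}\mid\sum_i\xi_i^2/\lambda_i^3\le 1\}$ to $\{\boldsymbol{\xi}\mid\sum_i\xi_i^2/\lambda_i^2\le 1\}$. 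Hence, by the definition of $\aleph$, the set $\{\boldsymbol{\xi}\mid\sum_i\xi_i^2/\lambda_i^3\le 1\}$ is covered by $M=e^{\aleph(\varepsilon,\{\lambda_i\}_{i=1}^N)}$ translates of the $\varepsilon$-ball ${\rm Ell}_\varepsilon$ of ${\mathcal L}(N,\lambda)$, and by Lemma~\ref{inclusion} the pushforwards of these translates under $\boldsymbol{\xi}\mapsto\sum_i\xi_i\phi_i$ cover $R^\phi_N[B_{\mathcal{H}_K}]$.

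Finally, I would apply the $L_2(\nu)$-version of Lemma~\ref{inclusion2}: for $\boldsymbol{\xi}\in{\rm Ell}_\varepsilon$ one has $\|\sum_i\xi_i\phi_i\|_{L_2(\nu)}\le\|\boldsymbol{\xi}\|_{{\mathcal L}(N,\lambda)}\sqrt{\lambda_1}\le\varepsilon\sqrt{\lambda_1}$, so each of the $M$ translates maps into an $L_2(\nu)$-ball of radius $\varepsilon\sqrt{\lambda_1}$ about the corresponding centre $f_{{\mathbf x}_i}=\sum_i x_i\phi_i$. This gives $\mathcal{H}^{\rm int}(\varepsilon\sqrt{\lambda_1},R^\phi_N[B_{\mathcal{H}_K}],L_2(\nu))\le\aleph(\varepsilon,\{\lambda_i\}_{i=1}^N)$, and replacing $\varepsilon$ by $\varepsilon/\sqrt{\lambda_1}$ yields the claim. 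There is no real obstacle here; the only point worth a line is the same one already implicit in the previous proof, namely that the centres may be taken inside $R^\phi_N[B_{\mathcal{H}_K}]$ so that the covering is genuinely internal, which is arranged exactly as before and is unaffected by the change of ambient norm.
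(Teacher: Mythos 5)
Your proposal is correct and is precisely the argument the paper intends: the paper states that the proof is ``absolutely analogous'' to the $C(\boldsymbol{\Omega})$ case, and your substitution of the $L_2(\nu)$-analogue of Lemma~\ref{inclusion2} together with the replacement of $D_K$ by $\sqrt{\lambda_1}$ is exactly that adaptation. No gaps.
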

\begin{proof} From Lemma~\ref{inclusion}  we obtain $B_{\mathcal{H}_K}[N]=\{\sum_{i=1}^N\xi_i\phi_i\mid \sum_{i=1}^N\frac{\xi^2_i}{\sigma_i^2} \leq 1\}$. From $\|\sum_{i=1}^N\xi_i\phi_i-\sum_{i=1}^N\xi'_i\phi_i\|_{L_2(\nu)} = \|\boldsymbol{\xi}-\boldsymbol{\xi}'\|_2$ we conclude that any $\varepsilon$-covering of the ellipsoid $\{\boldsymbol{\xi}\mid \sum_{i=1}^N\frac{\xi^2_i}{\sigma_i^2} \leq 1\}$ in ${\mathbb R}^N$ corresponds to an $\varepsilon$-covering of $B_{\mathcal{H}_K}[N]$ in $L_2(\nu)$ and vice versa. Lemma proved.
\end{proof}

\begin{lemma}\label{inclusion2} For $\boldsymbol{\xi}, \boldsymbol{\xi}'\in {\mathbb R}^N$ and $p>2$ we have
\begin{equation*}
\begin{split}
&\|\sum_{i=1}^N \sigma_i\xi_i\phi_i({\mathbf x})-\sum_{i=1}^N \sigma_i\xi'_i\phi_i({\mathbf x})\|_{L_p(\nu)}\leq \\
&\min_{\alpha\in [0,2/p]}C_K^{1-\alpha} \nu(\boldsymbol{\Omega})^{1/p-\alpha/2}N^{\alpha/2}\sqrt{\sum_{i=1}^N \sigma_i^{2\alpha}(\xi_i-\xi'_i)^2}.
\end{split}
\end{equation*}
\end{lemma}
\begin{proof} Let $\alpha, \beta\geq 0$ be such that $\alpha+\beta=1$. Using a generalized form of H{\"o}lder's inequality, we have
\begin{equation*}
\begin{split}
&|\sum_{i=1}^N \sigma_i\xi_i\phi_i({\mathbf x})-\sum_{i=1}^N \sigma_i\xi'_i\phi_i({\mathbf x})| \leq \sum_{i=1}^N \sigma_i|\xi_i-\xi'_i|\cdot |\phi_i({\mathbf x})| = \\
&\sum_{i=1}^N \big(\sigma_i^{2\alpha}|\xi_i-\xi'_i|^2\big)^{1/2}\big(\sigma^2_i|\phi_i({\mathbf x})|^2\big)^{\beta/2}(|\phi_i({\mathbf x})|^2\big)^{\alpha/2} \leq \\ 
&\big(\sum_{i=1}^N\sigma_i^{2\alpha}|\xi_i-\xi'_i|^2\big)^{1/2}\big(\sum_{i=1}^N\sigma^2_i|\phi_i({\mathbf x})|^2\big)^{\beta/2}(\sum_{i=1}^N|\phi_i({\mathbf x})|^2\big)^{\alpha/2}.
\end{split}
\end{equation*}
Using $\sum_{i=1}^N\sigma^2_i|\phi_i({\mathbf x})|^2\leq K({\mathbf x},{\mathbf x})\leq C_K^2$, we conclude that the latter is bounded by
\begin{equation*}
\begin{split}
\big(\sum_{i=1}^N\sigma_i^{2\alpha}|\xi_i-\xi'_i|^2\big)^{1/2} C_K^{\beta} (\sum_{i=1}^N|\phi_i({\mathbf x})|^2\big)^{\alpha/2}.
\end{split}
\end{equation*}
Therefore,
\begin{equation*}
\begin{split}
|\sum_{i=1}^N \sigma_i\xi_i\phi_i({\mathbf x})-\sum_{i=1}^N \sigma_i\xi'_i\phi_i({\mathbf x})|^p  \leq 
C_K^{p\beta}\big(\sum_{i=1}^N\sigma_i^{2\alpha}|\xi_i-\xi'_i|^2\big)^{p/2}  (\sum_{i=1}^N|\phi_i({\mathbf x})|^2\big)^{p\alpha/2}.
\end{split}
\end{equation*}
For $\alpha\in [0,\frac{2}{p}]$ the function $x^{p\alpha/2}$ is concave, which by Jensen's inequality implies  ${\mathbb E}[X^{p\alpha/2}]\leq {\mathbb E}[X]^{p\alpha/2}$ for any nonnegative random variable $X$. This leads to $\frac{1}{\nu(\boldsymbol{\Omega})}\int_{\boldsymbol{\Omega}}(\sum_{i=1}^N|\phi_i({\mathbf x})|^2\big)^{p\alpha/2}d\nu({\mathbf x})\leq (\frac{N}{\nu(\boldsymbol{\Omega})})^{p\alpha/2}$. After integrating w.r.t. $\nu$ we obtain
\begin{equation*}
\begin{split}
\|\sum_{i=1}^N \sigma_i\xi_i\phi_i({\mathbf x})-\sum_{i=1}^N \sigma_i\xi'_i\phi_i({\mathbf x})\|_{L_p(\nu)}^p  \leq 
C_K^{p\beta} \big(\sum_{i=1}^N\sigma_i^{2\alpha}|\xi_i-\xi'_i|^2\big)^{p/2}  \nu(\boldsymbol{\Omega})^{1-p\alpha/2}N^{p\alpha/2}.
\end{split}
\end{equation*}
For the latter the statement of Lemma is straightforward.
\end{proof}

\begin{remark}\label{inclusion2-c} Setting $p=+\infty$ and $\alpha=0$ gives
\begin{equation*}
\begin{split}
\|\sum_{i=1}^N \sigma_i\xi_i\phi_i({\mathbf x})-\sum_{i=1}^N \sigma_i\xi'_i\phi_i({\mathbf x})\|_{C(\boldsymbol{\Omega})}\leq 
C_K \sqrt{\sum_{i=1}^N (\xi_i-\xi'_i)^2}.
\end{split}
\end{equation*}
\end{remark}

\begin{lemma}\label{ellipsoid2} For $p>2$ we have 
\begin{equation*}
\begin{split}
\mathcal{H}^{\rm ext}(\varepsilon, B_{\mathcal{H}_K}[N], L_p(\nu))\leq \min_{\alpha\in [0,2/p]}\aleph(\frac{\varepsilon}{C_K^{1-\alpha} \nu(\boldsymbol{\Omega})^{1/p-\alpha/2}N^{\alpha/2}},\{\sigma_i^{\alpha}\}_{i=1}^N).
\end{split}
\end{equation*}
\end{lemma}
\begin{proof} Let us denote by ${\mathcal L}(N, \lambda)$ the set ${\mathbb R}^N$ equipped with the inner product $\langle {\mathbf x}, {\mathbf y} \rangle_{{\mathcal L}(N, \lambda)} = \sum_{i=1}^N \sigma_i^{2\alpha} x_i y_i$.

An affine transformation $\xi_i'=\sigma_i^\alpha \xi_i$ turns a ball in ${\mathcal L}(N, \lambda)$ into a ball in the canonical Euclidean space ${\mathbb R}^N$, i.e. an ellipsoid $\{\boldsymbol{\xi}\mid \sum_{i=1}^N \sigma_i^{2\alpha} \xi^2_i\leq 1\}$ into $\{\boldsymbol{\xi}'\mid \sum_{i=1}^N (\xi'_i)^2\leq 1\}$. Also, it turns $\{\boldsymbol{\xi}\mid \sum_{i=1}^N \xi^2_i\leq 1\}$ into $\{\boldsymbol{\xi}'\mid \sum_{i=1}^N \frac{(\xi'_i)^2}{\sigma_i^{2\alpha}}\leq 1\}$.
From the latter we conclude that the set $\{\boldsymbol{\xi}\mid \sum_{i=1}^N \xi^2_i\leq 1\}$ can be covered by $M = e^{\aleph(\varepsilon,\{\sigma_i^{\alpha}\}_{i=1}^N)}$ $\varepsilon$-balls in  ${\mathcal L}(N, \lambda)$, i.e. there are ${\mathbf x_i}\in {\mathbb R}^N$, $i\in [M]$ such that
$$
\{\boldsymbol{\xi}\mid \sum_{i=1}^N \xi^2_i \leq 1\} \subseteq \bigcup_{i=1}^M ({\mathbf x_i}+{\rm Ell}_\varepsilon),
$$
where ${\rm Ell}_\varepsilon = \{\boldsymbol{\xi}\mid \sum_{i=1}^N \sigma_i^{2\alpha} \xi^2_i\leq \varepsilon\}$. From this inclusion and Lemma~\ref{inclusion} we obtain
\begin{equation*}
\begin{split}
&B_{\mathcal{H}_K}[N]\subseteq \{ \sum_{i=1}^N\sigma_i  \xi_i \phi_i\mid \sum_{i=1}^N \xi^2_i\leq 1\} \subseteq 
 \{ \sum_{i=1}^N \sigma_i  \xi_i \phi_i\mid \boldsymbol{\xi}\in \bigcup_{i=1}^M ({\mathbf x_i}+{\rm Ell}_\varepsilon)\} = \\ 
&\bigcup_{i=1}^M \{ \sum_{i=1}^N\sigma_i \xi_i \phi_i\mid \boldsymbol{\xi}\in ({\mathbf x_i}+{\rm Ell}_\varepsilon)\}.
\end{split}
\end{equation*}
Lemma~\ref{inclusion2} gives us that for any $\alpha\in [0,\frac{2}{p}]$ we have
\begin{equation*}
\begin{split}
&C_K^{1-\alpha} \nu(\boldsymbol{\Omega})^{1/p-\alpha/2}N^{\alpha/2}\varepsilon B_{L_p(\nu)} = \\
&\{f\in L_p(\nu)\mid \|f\|_{L_p(\nu)}\leq C_K^{1-\alpha} \nu(\boldsymbol{\Omega})^{1/p-\alpha/2}N^{\alpha/2}\varepsilon\} \supseteq \\
&\{\sum_{i=1}^N\sigma_i\xi_i \phi_i\mid \|\boldsymbol{\xi}\|_{{\mathcal L}(N, \lambda)}\leq \varepsilon\} = 
\{ \sum_{i=1}^N\sigma_i\xi_i \phi_i\mid \boldsymbol{\xi}\in {\rm Ell}_{\varepsilon}\}.
\end{split}
\end{equation*}
Thus, $\{ \sum_{i=1}^N\sigma_i\xi_i \phi_i\mid \boldsymbol{\xi}\in {\rm Ell}_{\varepsilon}\}\subseteq C_K^{1-\alpha} \nu(\boldsymbol{\Omega})^{1/p-\alpha/2}N^{\alpha/2}\varepsilon B_{L_p(\nu)}$ and we finally obtain:
$$
B_{\mathcal{H}_K}[N]\subseteq \bigcup_{i=1}^M (f_{{\mathbf x}_i}+C_K^{1-\alpha} \nu(\boldsymbol{\Omega})^{1/p-\alpha/2}N^{\alpha/2}\varepsilon B_{L_p(\nu)} ),
$$
where $f_{{\mathbf x}} = \sum_{i=1}^N \sigma_i x_i \phi_i$. Thus, we covered $B_{\mathcal{H}_K}[N]$ by $M$ balls of radius $C_K^{1-\alpha} \nu(\boldsymbol{\Omega})^{1/p-\alpha/2}N^{\alpha/2}\varepsilon$ in  $L_p(\nu)$. Therefore,
$$
\mathcal{H}^{\rm ext}(C_K^{1-\alpha} \nu(\boldsymbol{\Omega})^{1/p-\alpha/2}N^{\alpha/2}\varepsilon, B_{\mathcal{H}_K}[N], L_p(\nu))\leq \aleph(\varepsilon,\{\sigma_i^{\alpha}\}_{i=1}^N).
$$
Now the statement of lemma is straightforward.
\end{proof}

\begin{proof}[Proof of Theorem~\ref{bound-for-finite}]
For $1\leq p\leq 2$ we have the inequality $\|f\|_{L_p(\nu)}\leq \nu(\boldsymbol{\Omega})^{1/p-1/2}\|f\|_{L_2(\nu)}$. Therefore, any $\varepsilon$-net in $L_2(\nu)$ is also an $\nu(\boldsymbol{\Omega})^{1/p-1/2}\varepsilon$-net in $L_p(\nu)$. From this, using Lemma~\ref{ellipsoid}, we directly obtain
\begin{equation*}
\begin{split}
&\mathcal{H}^{\rm ext}(\varepsilon, B_{\mathcal{H}_K}[N], L_p(\nu))\leq \mathcal{H}^{\rm ext}(\nu(\boldsymbol{\Omega})^{1/2-1/p}\varepsilon, B_{\mathcal{H}_K}[N], L_2(\nu))\leq \\ 
&\aleph( \nu(\boldsymbol{\Omega})^{1/2-1/p}\varepsilon, \{\sigma_i\}_{i=1}^N).
\end{split}
\end{equation*}
The inequality for $p>2$ was already proved as Lemma~\ref{ellipsoid2}. When in this inequality we consider $p\to+\infty$ and $\alpha=0$, we obtain the inequality for $p=+\infty$.
\end{proof}

\section{Upper bounds on the $\varepsilon$-entropy of the tail part of $B_{{\mathcal H}_K}$}\label{sudakov-based}
The following result is our main tool in bounding the $\varepsilon$-entropy of the ``tail'' part of $B_{\mathcal{H}_K}$, i.e. of $B_{\mathcal{H}_K}$ projected onto the span of $\phi_{N+1}, \phi_{N+2}, \cdots$. Its proof can be found in~\cite{be8abe3dbdc3, 10.1007/BF02392835}.
\begin{proposition}[Dual Sudakov inequality]\label{panos} For any Banach space $X = ({\mathbb R}^N, \|\cdot\|_X)$, we have
\begin{equation*}
\begin{split}
\mathcal{H}(\varepsilon, B_2^N, X)\leq c_{\rm su} N\big(
\frac{\int_{{\mathbb S}^{N-1}}\|{\mathbf x}\|_Xd\mu_{{\mathbb S}^{N-1}}({\mathbf x})}{\varepsilon}\big)^2,
\end{split}
\end{equation*}
where $c_{\rm su}$ is a universal constant.
\end{proposition}

For $M,N\in {\mathbb N}\cup \{0\}$ where $M<N$, let us denote
\begin{equation}
C_{K,p}(M,N) \triangleq \|\sqrt{\sum_{i=M+1}^{N} \lambda_i\phi_i({\mathbf x})^2}\|_{L_p(\nu)}.
\end{equation}

\begin{theorem}\label{from-sudakov} Let $S = [M+1,N]\cap {\mathbb Z}$ where $M<N$ are natural numbers. Then, for $p\in [1,+\infty)$,
\begin{equation*}
\begin{split}
\mathcal{H}(\varepsilon, B_{\mathcal{H}_K}[S], L_p(\nu))\leq \frac{\tilde{c}p C_{K,p}(M, N)^{2}}{\varepsilon^2},
\end{split}
\end{equation*}
where $\tilde{c}$ is a universal constant. For $p=+\infty$, we have
\begin{equation*}
\begin{split}
\mathcal{H}(\varepsilon, B_{\mathcal{H}_K}[S], L_\infty(\nu))\leq \tilde{c}_1\big(\frac{\int_0^{C_{K,+\infty}(M,N)} \sqrt{\mathcal{H}(\delta,\tilde{\boldsymbol{\Omega}}, \mathcal{H}_{K})}d\delta}{\varepsilon}\big)^2,
\end{split}
\end{equation*}
where $\tilde{\boldsymbol{\Omega}} = \{K({\mathbf x}, \cdot)\mid {\mathbf x}\in \boldsymbol{\Omega}\}\subseteq \mathcal{H}_{K}$ and $\tilde{c}_1$ is a universal constant.
\end{theorem}
\begin{proof}
Recall that
\begin{equation*}
\begin{split}
B_{\mathcal{H}_K}[S] = \{\sum_{j=M+1}^{N}\sigma_j\xi_j\phi_j\mid \sum_{j=M+1}^{N}\xi_j^2\leq 1\}.
\end{split}
\end{equation*}
Let us denote $\|{\mathbf x}\|_p^{S} \triangleq  \|\sum_{j=M+1}^{N} \sigma_jx_j\phi_j\|_{L_p(\nu)}$ for ${\mathbf x} = [x_j]_{j=M+1}^{N}\in {\mathbb R}^{S}$. Consequently,
\begin{equation*}
\begin{split}
\mathcal{H}(\varepsilon, B_{\mathcal{H}_K}[S], L_p(\nu)) = \mathcal{H}(\varepsilon, B^{S}_2, \|\cdot\|_p^{S}),
\end{split}
\end{equation*}
where $B^{S}_2$ is a unit ball in the canonical Euclidean space ${\mathbb R}^{S}$. By Proposition~\ref{panos}, there exists an $\varepsilon$-covering of $B^{S}_2$ in $({\mathbb R}^{S}, \|\cdot\|_p^{S})$ of size ${\rm exp}( \frac{c_{\rm su}(N-M)\mathcal{M}^2_p}{\varepsilon^2})$ where $$\mathcal{M}_p = \int_{{\mathbb S}^{N-M-1}}\|\sum_{j=M+1}^{N}\sigma_j\xi_j\phi_j\|_{L_p(\nu)} d\mu_{{\mathbb S}^{N-M-1}}(\boldsymbol{\xi}).$$ 
Let us first consider the case of finite $p\geq 1$. Using Jensen's inequality and Fubini's theorem, we have
\begin{equation*}
\begin{split}
&\mathcal{M}_p = \int_{{\mathbb S}^{N-M-1}}\|\sum_{j=M+1}^{N}\sigma_j\xi_j\phi_j\|_{L_p(\nu)} d\mu_{{\mathbb S}^{N-M-1}}(\boldsymbol{\xi})\leq \\
&\Big(\int_{\boldsymbol{\Omega}}\int_{{\mathbb S}^{N-M-1}}\big|\sum_{j=M+1}^{N}\sigma_j\xi_j\phi_j({\mathbf x})\big|^p d\mu_{{\mathbb S}^{N-M-1}}(\boldsymbol{\xi})d\nu({\mathbf x})\Big)^{1/p}.
\end{split}
\end{equation*}

Further, let us use the following fact about Gaussian vectors: if $g_{M+1}, \cdots, g_N\sim \mathcal{N}(0,1)$ and ${\mathbf g} = [g_i]_{i=M+1}^N$, then $\|{\mathbf g}\|_2$ and  $\frac{{\mathbf g}}{\|{\mathbf g}\|_2}$ are independent and  $\frac{{\mathbf g}}{\|{\mathbf g}\|_2}$ is distributed according to the rotation invariant probability measure on ${\mathbb S}^{N-M-1}$, i.e.
\begin{equation*}
\begin{split}
\int_{{\mathbb S}^{N-M-1}}\big|\sum_{j=M+1}^{N}\sigma_j\xi_j\phi_j({\mathbf x})\big|^p d\mu_{{\mathbb S}^{N-M-1}}(\boldsymbol{\xi}) = \frac{{\mathbb E}_{g_i}[|\sum_{j=M+1}^{N}\sigma_jg_j\phi_j({\mathbf x})|^p]}{{\mathbb E}_{g_i}[|\sum_{j=M+1}^{N} g_j^2|^{p/2}]}.
\end{split}
\end{equation*}
Then, using ${\mathbb E}\chi^2(N-M)^{\frac{p}{2}}=2^{\frac{p}{2}}\frac{\Gamma(\frac{N-M+p}{2})}{\Gamma(\frac{N-M}{2})} $ where $\chi^2(N-M)$ has the chi-square distribution with $N-M$ degrees of freedom and, we bound $\mathcal{M}_p$ by
\begin{equation*}
\begin{split}
&\Big(\frac{B_{p}}{2^{\frac{p}{2}}\frac{\Gamma(\frac{N-M+p}{2})}{\Gamma(\frac{N-M}{2})}}\int_{\boldsymbol{\Omega}} \big(\sum_{j=M+1}^{N}\sigma_j^2\phi_j({\mathbf x})^2\big)^{\frac{p}{2}}d\nu({\mathbf x})\Big)^{1/p} =\frac{B_{p}^{1/p}\Gamma(\frac{N-M}{2})^{1/p}}{2^{\frac{1}{2}}\Gamma(\frac{N-M+p}{2})^{1/p}}C_{K,p}(M,N).
\end{split}
\end{equation*}
The log-convexity of $\Gamma(x)$ gives us
\begin{equation*}
\begin{split}
\log\Gamma(\frac{x+p}{2})-\log\Gamma(\frac{x}{2})\geq \frac{p}{2}\psi(\frac{x}{2}),
\end{split}
\end{equation*}
where $\psi(x) = \frac{\Gamma'(x)}{\Gamma(x)}$ and $x>0$.
Therefore, for $x\geq \frac{1}{2}$,
\begin{equation*}
\begin{split}
&\frac{\Gamma(\frac{x}{2})^{1/p}}{\Gamma(\frac{x+p}{2})^{1/p}}\leq 
e^{-\frac{1}{2}\psi(\frac{x}{2})}<\sqrt{2e^{\gamma}}\sqrt{\frac{2}{x}}.
\end{split}
\end{equation*}
Above we used $\psi(z) -\log{z} \geq \psi(\frac{1}{2}) -\log{\frac{1}{2}} = -\log 2-\gamma$ for $z\geq \frac{1}{2}$, where $\gamma$ is Euler's constant.

Thus,
\begin{equation*}
\begin{split}
&\mathcal{H}(\varepsilon, B_{\mathcal{H}_K}[S], L_p(\nu))\leq 2e^{\gamma}c_{\rm su} \frac{(N-M) }{\varepsilon^2}\frac{B_{p}^{2/p}C_{K,p}(M, N)^2}{N-M}\leq 2e^{\gamma}c_{\rm su} B_{p}^{2/p} \frac{ C_{K,p}(M, N)^2}{\varepsilon^2}.
\end{split}
\end{equation*}
Finally, from Stirling's formula we get
\begin{equation*}
\begin{split}
&\frac{1}{p}\log B_{p} = \frac{1}{p}\log\frac{2^{p/2}\Gamma(\frac{p+1}{2})}{\sqrt{\pi}} = \frac{1}{2}\log 2-\frac{1}{2p}\log \pi+\\
&\frac{1}{p}(\frac{p+1}{2}\log (\frac{p+1}{2})-\frac{p+1}{2}-\frac{1}{2}\log (\frac{p+1}{2})+\frac{1}{2}\log (2\pi)+O(\frac{1}{p}))=\\
&\frac{1}{2}\log(p+1)-\frac{1}{2}+O(\frac{1}{p}),
\end{split}
\end{equation*}
and
\begin{equation*}
\begin{split}
\mathcal{H}(\varepsilon, B_{\mathcal{H}_K}[S], L_p(\nu))\leq \tilde{c} p \frac{ C_{K,p}(M, N)^2}{\varepsilon^2}.
\end{split}
\end{equation*}

Now let us consider the case of $L_\infty(\nu)$. Let us denote $F({\mathbf x}, {\mathbf g}) \triangleq \sum_{j=M+1}^{N}\sigma_jg_j\phi_j({\mathbf x})$ where $g_{M+1}, \cdots, g_{N}\sim \mathcal{N}(0,1)$ and ${\mathbf g} = [g_j]_{j=M+1}^{N}$. Note that $F({\mathbf x}, {\mathbf g})$ can be treated as a collection of random variables $\{F({\mathbf x}, \cdot)\}_{{\mathbf x}\in \boldsymbol{\Omega}}$, i.e. as a Gaussian Random Field, with a covariance function
\begin{equation*}
\begin{split}
{\mathbb E}[F({\mathbf x}, {\mathbf g})F({\mathbf y}, {\mathbf g})] = \sum_{j=M+1}^{N}\sigma^2_j \phi_j({\mathbf x})\phi_j({\mathbf y}).
\end{split}
\end{equation*}
Let us denote the Mercer kernel $\sum_{j=M+1}^{N}\sigma^2_j \phi_j({\mathbf x})\phi_j({\mathbf y})$ by $K_S ({\mathbf x},{\mathbf y})$. 

Since $\mathcal{H}(\varepsilon, B_{\mathcal{H}_K}[S], L_\infty(\nu))\leq e^{\frac{c_{\rm su} (N-M)\mathcal{M}^2_\infty}{\varepsilon^2}}$, here we need to bound
\begin{equation}\label{to-GRF}
\begin{split}
&\mathcal{M}_\infty = \int_{{\mathbb S}^{N-M-1}}\|\sum_{j=M+1}^{N}\sigma_j\xi_j\phi_j\|_{L_\infty(\nu)} d\mu_{{\mathbb S}^{N-M-1}}(\boldsymbol{\xi})=\\ &\frac{{\mathbb E}_{g_i}[\sup_{{\mathbf x}\in \boldsymbol{\Omega}}|\sum_{j=M+1}^{N}\sigma_jg_j\phi_j({\mathbf x})|]}{{\mathbb E}_{g_i}[|\sum_{j=M+1}^{N} g_j^2|^{1/2}]} = \frac{{\mathbb E}_{g_i}[\sup_{{\mathbf x}\in \boldsymbol{\Omega}}|F({\mathbf x}, {\mathbf g})|]}{{\mathbb E}_{g_i}[|\sum_{j=M+1}^{N} g_j^2|^{1/2}]}.
\end{split}
\end{equation}

Let $\boldsymbol{\Omega}_S = \{K_S({\mathbf x}, \cdot)\mid {\mathbf x}\in \boldsymbol{\Omega}\}\subseteq \mathcal{H}_{K_S}$. 
According to Dudley's integral covering number bound~\cite{10.1007/978-3-319-40519-3_2}, there is a universal constant $c_1>0$ such that
\begin{equation*}
\begin{split}
\sup_{{\mathbf x}\in \boldsymbol{\Omega}} |F({\mathbf x}, {\mathbf g})| \leq c_1\int_0^\infty \sqrt{\mathcal{H}(\delta,\boldsymbol{\Omega}_S, \mathcal{H}_{K_S})}d\delta.
\end{split}
\end{equation*}
Since $\|K_S({\mathbf x}, \cdot)\|_{\mathcal{H}_{K_S}} = \sqrt{K_S({\mathbf x}, {\mathbf x})}\leq C_{K,+\infty}(M,N)$ we obtain
\begin{equation*}
\begin{split}
\int_0^\infty \sqrt{\mathcal{H}(\delta,\boldsymbol{\Omega}_S, \mathcal{H}_{K_S})}d\delta = \int_0^{C_{K,+\infty}(M,N)} \sqrt{\mathcal{H}(\delta,\boldsymbol{\Omega}_S, \mathcal{H}_{K_S})}d\delta.
\end{split}
\end{equation*}
Also, from 
\begin{equation*}
\begin{split}
&\|K_S({\mathbf x}, \cdot)-K_S({\mathbf y}, \cdot)\|_{\mathcal{H}_{K_S}} = \sqrt{\sum_{j=M+1}^{N}\sigma^2_j (\phi_j({\mathbf x})-\phi_j({\mathbf y}))^2}\leq \\ &\sqrt{\sum_{j=1}^{+\infty}\sigma^2_j (\phi_j({\mathbf x})-\phi_j({\mathbf y}))^2} = \|K({\mathbf x}, \cdot)-K({\mathbf y}, \cdot)\|_{\mathcal{H}_{K}},
\end{split}
\end{equation*}
we conclude
\begin{equation*}
\begin{split}
\int_0^{C_{K,+\infty}(M,N)} \sqrt{\mathcal{H}(\delta,\boldsymbol{\Omega}_S, \mathcal{H}_{K_S})}d\delta\leq \int_0^{C_{K,+\infty}(M,N)} \sqrt{\mathcal{H}(\delta,\tilde{\boldsymbol{\Omega}}, \mathcal{H}_{K})}d\delta,
\end{split}
\end{equation*}
where $\tilde{\boldsymbol{\Omega}} = \{K({\mathbf x}, \cdot)\mid {\mathbf x}\in \boldsymbol{\Omega}\}\subseteq \mathcal{H}_{K}$. Finally, from the equation~\eqref{to-GRF} and ${\mathbb E}_{g_i}[|\sum_{j=M+1}^{N} g_j^2|^{1/2}] = \sqrt{N-M}(1+\mathcal{O}(\frac{1}{N-M}))$, we get
\begin{equation*}
\begin{split}
&\mathcal{H}(\varepsilon, B_{\mathcal{H}_K}[S], L_\infty(\nu))\leq \frac{c_{\rm su}(N-M)\mathcal{M}^2_\infty}{\varepsilon^2} \leq \\
&c_{\rm su} c^2_1 \frac{(N-M)}{\varepsilon^2} (\frac{1+\mathcal{O}(\frac{1}{N-M})}{\sqrt{N-M}}\int_0^{C_{K,+\infty}(M,N)} \sqrt{\mathcal{H}(\delta,\tilde{\boldsymbol{\Omega}}, \mathcal{H}_{K})}d\delta)^2,
\end{split}
\end{equation*}
from which the statement of Lemma is straightforward.
\end{proof}

\begin{remark}\label{non-exotic} Unless the kernel $K$ is exotic, the bound of Theorem~\ref{from-sudakov} for $p=+\infty$ behaves similarly to the bound for finite $p$.

It is natural to assume 
\begin{equation}\label{property-k}
\begin{split}
\sqrt{K({\mathbf x}, {\mathbf x})+K({\mathbf y}, {\mathbf y})-2K({\mathbf x}, {\mathbf y})}\leq C\|{\mathbf x}- {\mathbf y}\|^\alpha
\end{split}
\end{equation}
for $\alpha >0$. 
For example, this assumption is satisfied if $K({\mathbf x}, {\mathbf y}) = k({\mathbf x}- {\mathbf y})$ and $k$ is an $\alpha$-H{\"o}lder function. For $\alpha=1$, a sufficient condition for it is twice continuous differentiability of $K$ (see Appendix~\ref{smooth-kern}).

Let $R=\max_{{\mathbf x}\in \boldsymbol{\Omega}}\|{\mathbf x}\|_2$. Then, 
\begin{equation*}
\begin{split}
\mathcal{H}(\delta,\tilde{\boldsymbol{\Omega}}, \mathcal{H}_{K})\leq \mathcal{H}((\frac{\delta}{C})^{1/\alpha},\boldsymbol{\Omega}, \|\cdot\|_2)\leq \frac{n}{\alpha}\log(\frac{R^\alpha C}{\delta})+c\log(n+1) = \frac{n}{\alpha}\log(\frac{C_{n,\alpha, C, R}}{\delta}),
\end{split}
\end{equation*}
where $C_{n,\alpha, C, R} = R^\alpha Ce^{\frac{c \alpha\log(n+1)}{n}}$ and $c$ is a universal constant from Roger's bound.
Therefore, Theorem~\ref{from-sudakov} gives us
\begin{equation*}
\begin{split}
&\mathcal{H}(\varepsilon, B_{\mathcal{H}_K}[S], L_\infty(\nu))\leq \tilde{c}_1\big(\frac{\int_0^{C_{K,+\infty}(M,N)} \sqrt{\frac{n}{\alpha}\log(\frac{C_{n,\alpha, C, R}}{\delta})}d\delta}{\varepsilon}\big)^2= \\
&\frac{\tilde{c}_1 nC_{n,\alpha, C, R}^2}{\alpha \varepsilon^2}\big(\int_0^{C_{K,+\infty}(M,N)/C_{n,\alpha, C, R}} \sqrt{\log(\frac{1}{\delta})}d\delta\big)^2.
\end{split}
\end{equation*}
Since $\int_0^{x} \sqrt{\log(\frac{1}{\delta})}d\delta = \int_{\log(1/x)}^{+\infty}\sqrt{t}e^{-t}dt\lesssim x\sqrt{\log(\frac{1}{x})}$ for $0<x<\frac{1}{2}$, we conclude
\begin{equation}\label{regular-kernel}
\begin{split}
\mathcal{H}(\varepsilon, B_{\mathcal{H}_K}[S], L_\infty(\nu))\lesssim
\frac{n}{ \alpha }\cdot \frac{C_{K,+\infty}(M,N)^2\log(\frac{C_{n,\alpha, C, R} }{C_{K,+\infty}(M,N)})}{\varepsilon^2},
\end{split}
\end{equation}
if $C_{K,+\infty}(M,N)<\frac{1}{2}C_{n,\alpha, C, R} $. As can be seen, the latter bound differs from the bound for a finite $p$ by logarithmic factor $\log(\frac{C_{n,\alpha, C, R} }{C_{K,+\infty}(M,N)})$ and the constant factor that is proportional to the dimension $n$.
\end{remark}

\begin{remark} In Theorem~\ref{from-sudakov} one can set $M=0$, $N=+\infty$. Then, our bound 
$$\mathcal{H}(\varepsilon, B_{\mathcal{H}_K}, L_p(\nu))\leq \frac{\tilde{c}p}{\varepsilon^2}\|\sqrt{K({\mathbf x},{\mathbf x})}\|^2_{L_p(\nu)},$$
for $p=2$, can be considered as an entropic version of an earlier  bound on the Rademacher complexity of $B_{\mathcal{H}_K}$ found by Bartlett-Mendelson~\cite{BartlettMendelson}. Recall that the empirical Rademacher complexity of $B_{\mathcal{H}_K}$ is defined as the following expression:
\begin{equation*}
\begin{split}
R_N({\mathbf x}_1, \cdots, {\mathbf x}_N) =  \sup_{f\in B_{\mathcal{H}_K}}|\frac{2}{N}\sum_{i=1}^N \sigma_i f({\mathbf x}_i)|,
\end{split}
\end{equation*}
where $N\in {\mathbb N}$ and $\sigma_1, \cdots, \sigma_N$ are independent uniform $\{\pm 1\}$-valued random variables. The Rademacher complexity is linked with $\mathcal{H}(\varepsilon, B_{\mathcal{H}_K}, L_2(P_N))$, where $P_N({\mathbf x}) =\frac{1}{N}\sum_{i=1}^N \delta_{{\mathbf x}_i}({\mathbf x})$ and $\delta_{{\mathbf x}_i}({\mathbf x})$ is a distribution concentrated at a point ${\mathbf x}_i$, by a well-known version of Dudley's integral covering number bound~\cite{Rebeschini}:
\begin{equation*}
\begin{split}
R_N({\mathbf x}_1, \cdots, {\mathbf x}_N)\leq \inf_{\varepsilon>0}\big(4\varepsilon+\int_{\varepsilon}^{+\infty}\sqrt{\frac{\mathcal{H}(\tau, B_{\mathcal{H}_K}, L_2(P_N))}{N}}d\tau\big).
\end{split}
\end{equation*}
Our bound implies $\mathcal{H}(\tau, B_{\mathcal{H}_K}, L_2(P_N)) \lesssim \frac{\|\sqrt{K({\mathbf x},{\mathbf x})}\|^2_{L_2(P_N)}}{\tau^2}$. After setting $\varepsilon = \frac{\|\sqrt{K({\mathbf x},{\mathbf x})}\|_{L_2(P_N)}}{\sqrt{N}}$, from Dudley's bound we obtain
\begin{equation*}
\begin{split}
R_N({\mathbf x}_1, \cdots, {\mathbf x}_N)\lesssim \frac{ \|\sqrt{K({\mathbf x},{\mathbf x})}\|_{L_2(P_N)}}{\sqrt{N}}\log(\frac{\sqrt{N}}{\|\sqrt{K({\mathbf x},{\mathbf x})}\|_{L_2(P_N)}}),
\end{split}
\end{equation*}
which is equivalent to Bartlett-Mendelson's bound up to logarithmic factor.
\end{remark}

\section{Proof of the main upper bound}\label{proof-upper}
In our main bound of $\mathcal{H}^{\rm ext}(\varepsilon, B_{\mathcal{H}_K}, L_p(\nu))$ we will use the following statement, which is given as Theorem 2 in~\cite{DumerIlya} (its proof can be extracted from~\cite{Dumer}). In fact, it is a consequence of the slightly more general Remark 5.15 from~\cite{pisier_1989}.
\begin{proposition}[Dumer-Pinsker-Prelov]\label{Dum} For any $\theta\in (0,1-\frac{1}{\sqrt{2}})$ we have
\begin{equation*}
\begin{split}
&\mathcal{E}(1, \{\sigma_i\}_{i=1}^N) \leq \aleph(1, \{\sigma_i\}_{i=1}^N)\leq  \\
&\mathcal{E}(1, \{\sigma_i\}_{i=1}^N) +|\{i\in [N]\mid \sigma_i> 1-\theta\}| \log\frac{3}{\theta(2-\theta)}.
\end{split}
\end{equation*}
\end{proposition}

We proved Theorem~\ref{bound-for-finite}, that bounds  the $\varepsilon$-entropy of the ``truncated'' version of $B_{\mathcal{H}_K}$, $B_{\mathcal{H}_K}[N]$, and Theorem~\ref{from-sudakov}, that bounds the $\varepsilon$-entropy of the ``tail'' part of $B_{\mathcal{H}_K}$. We made all preparations to bound the $\varepsilon$-entropy of the total $B_{\mathcal{H}_K}$.

\begin{proof}[Proof of Theorem~\ref{Bound-main}] 
Let us first consider the case of $1\leq p\leq 2$. 
Let $N\in {\mathbb N}\cup \{0\}$.
We have
\begin{equation*}
\begin{split}
B_{\mathcal{H}_K}\subseteq B_{\mathcal{H}_K}[N]\oplus B_{\mathcal{H}_K}[T],
\end{split}
\end{equation*}
where $T=\{N+1, N+2, \cdots\}$. 

Let $\varrho_N = \sup_{f\in B_{\mathcal{H}_K}[T]}\|f\|_{L_p(\nu)}$ and let us first verify that $\lim_{N\to +\infty}\varrho_N =0$. Since $\|f\|_{L_p(\nu)}\leq \nu(\boldsymbol{\Omega})^{1/p-1/2}\|f\|_{L_2(\nu)}$ it is enough to do that for $p=2$.
Indeed, for $f=\sum_{i=N+1}^\infty \xi_i\phi_i\in B_{\mathcal{H}_K}[T]$, we have
\begin{equation*}
\begin{split}
&\|f\|_{L_2(\nu)} = \|\sum_{i=N+1}^\infty \xi_i\phi_i\|_{L_2(\nu)} = \big(\sum_{i=N+1}^\infty \xi_i^2\big)^{1/2}\leq \\
&\sigma_{N+1}\big(\sum_{i=N+1}^\infty \frac{\xi_i^2}{\sigma_i^2}\big)^{1/2}\leq \sigma_{N+1}\mathop\rightarrow\limits^{N\to +\infty} 0.
\end{split}
\end{equation*}
Note that any $\varepsilon-\varrho_N$-covering of $B_{\mathcal{H}_K}[N]$ in $L_p(\nu)$ is an $\varepsilon$-covering of $B_{\mathcal{H}_K}[N]\oplus B_{\mathcal{H}_K}[T]$ in $L_p(\nu)$. Therefore,
\begin{equation*}
\begin{split}
&\mathcal{H}^{\rm ext}(\varepsilon, B_{\mathcal{H}_K}, L_p(\nu))\leq \mathcal{H}^{\rm ext}(\varepsilon, B_{\mathcal{H}_K}[N]\oplus B_{\mathcal{H}_K}[T], L_p(\nu))\leq \\
&\mathcal{H}^{\rm ext}(\varepsilon-\varrho_N, B_{\mathcal{H}_K}[N], L_p(\nu)).
\end{split}
\end{equation*}
Using Theorem~\ref{bound-for-finite} and Dumer-Pinsker-Prelov's bound we obtain
\begin{equation*}
\begin{split}
&\mathcal{H}^{\rm ext}(\varepsilon-\varrho_N, B_{\mathcal{H}_K}[N], L_p(\nu))\leq 
\aleph( \nu(\boldsymbol{\Omega})^{1/2-1/p}(\varepsilon-\varrho_N), \{\sigma_i\}_{i=1}^N)\leq \\
&\mathcal{E}(\nu(\boldsymbol{\Omega})^{1/2-1/p}(\varepsilon-\varrho_N), \{\sigma_i\}_{i=1}^N)+\\
&|\{i\in [N]\mid \sigma_i > (1-\theta)\nu(\boldsymbol{\Omega})^{1/2-1/p}(\varepsilon-\varrho_N)\}|\log\frac{3}{\theta(2-\theta)}.
\end{split}
\end{equation*}
Finally, sending $N$ to infinity gives us the desired bound.

Let us now turn to the case of $p>2$. Let $M,N\in {\mathbb N}\cup \{0\}$ where $N>M$.
We have
\begin{equation*}
\begin{split}
B_{\mathcal{H}_K}\subseteq B_{\mathcal{H}_K}[M]\oplus B_{\mathcal{H}_K}[S]\oplus B_{\mathcal{H}_K}[T],
\end{split}
\end{equation*}
where $S=\{M+1, M+2, \cdots, N\}$ and $T = \{N+1, N+2, \cdots\}$.

Let $\varrho_N = \sup_{f\in B_{\mathcal{H}_K}[T]}\|f\|_{L_p(\nu)}$ and let us again verify that $\lim_{N\to +\infty}\varrho_N =0$. For $f=\sum_{i=N+1}^\infty \sigma_i\xi_i\phi_i\in B_{\mathcal{H}_K}[T]$, 
using the Cauchy-Schwarz inequality, we have
\begin{equation*}
\begin{split}
&|f({\mathbf x})| =  |\sum_{i=N+1}^\infty \sigma_i\xi_i\phi_i({\mathbf x})| \leq \\
&\big(\sum_{i=N+1}^\infty |\xi_i|^2\big)^{1/2}\big(\sum_{i=N+1}^\infty\sigma^2_i|\phi_i({\mathbf x})|^2\big)^{1/2}\leq \big(\sum_{i=N+1}^\infty\sigma^2_i|\phi_i({\mathbf x})|^2\big)^{1/2}.
\end{split}
\end{equation*}
Since $\sum_{i=N+1}^\infty\sigma^2_i|\phi_i({\mathbf x})|^2 = K({\mathbf x},{\mathbf x})-\sum_{i=1}^N \lambda_i\phi_i({\mathbf x})^2$, we conclude
\begin{equation*}
\begin{split}
&\|f({\mathbf x})\|_{L_p(\nu)}^p  \leq
\int_{\boldsymbol{\Omega}}\big(\sum_{i=N+1}^\infty\sigma^2_i|\phi_i({\mathbf x})|^2\big)^{p/2}d\nu({\mathbf x}) = \\
&\|\sqrt{K({\mathbf x},{\mathbf x})-\sum_{i=1}^N \lambda_i\phi_i({\mathbf x})^2}\|_{L_{p}(\nu)}^{p} = C_{K,p}(N)^{p}.
\end{split}
\end{equation*}
Thus, $\varrho_N \leq C_{K,p}(N)\mathop\rightarrow\limits^{N\to +\infty} 0$.

Note that we have $\mathcal{H}^{\rm ext}(\varrho_N, B_{\mathcal{H}_K}[T], L_p(\nu))=0$.
By construction, for any $\delta\in (0,\varepsilon-\varrho_N)$, any $\varepsilon-\varrho_N-\delta$-covering of $B_{\mathcal{H}_K}[M]$ of size $C_1$ in $L_p(\nu)$ and any $\delta$-covering of $B_{\mathcal{H}_K}[S]$ of size $C_2$ in $L_p(\nu)$, there is an $\varepsilon-\varrho_N$-covering of $B_{\mathcal{H}_K}[M]\oplus B_{\mathcal{H}_K}[S]$ of size $C_1 C_2$ in $L_p(\nu)$. And the latter covering will also be an $\varepsilon$-covering of $B_{\mathcal{H}_K}[M]\oplus B_{\mathcal{H}_K}[S]\oplus B_{\mathcal{H}_K}[T]$.  Therefore,
\begin{equation*}
\begin{split}
&\mathcal{H}^{\rm ext}(\varepsilon, B_{\mathcal{H}_K}, L_p(\nu))\leq \\
&\mathcal{H}^{\rm ext}(\varepsilon-\varrho_N-\delta, B_{\mathcal{H}_K}[M], L_p(\nu))+\mathcal{H}^{\rm ext}(\delta, B_{\mathcal{H}_K}[S], L_p(\nu)).
\end{split}
\end{equation*}
Using Theorem~\ref{from-sudakov} we bound the second term 
\begin{equation*}
\begin{split}
&\mathcal{H}^{\rm ext}(\delta, B_{\mathcal{H}_K}[S], L_p(\nu))\leq \\ 
&\begin{cases}
			\frac{\tilde{c}p C_{K,p}(M, N)^{2}}{\delta^2} & \text{if $2<p<+\infty$}\\
			\tilde{c}_1\big(\frac{\int_0^{C_{K,+\infty}(M,N)} \sqrt{\mathcal{H}(t,\tilde{\boldsymbol{\Omega}}, \mathcal{H}_{K})}dt}{\delta}\big)^2& \text{if $p=+\infty$}\\
		 \end{cases},
\end{split}
\end{equation*}
where $\tilde{\boldsymbol{\Omega}} = \{K({\mathbf x}, \cdot)\mid {\mathbf x}\in \boldsymbol{\Omega}\}\subseteq \mathcal{H}_{K}$. Sending $N$ to infinity gives us
\begin{equation*}
\begin{split}
&\mathcal{H}^{\rm ext}(\varepsilon, B_{\mathcal{H}_K}, L_p(\nu))\leq 
\mathcal{H}^{\rm ext}(\varepsilon-\delta, B_{\mathcal{H}_K}[M], L_p(\nu))+\\
&\begin{cases}
			\frac{\tilde{c}p C_{K,p}(M)^{2}}{\delta^2} & \text{if $2<p<+\infty$}\\
			\tilde{c}_1\big(\frac{\int_0^{C_{K,+\infty}(M)} \sqrt{\mathcal{H}(t,\tilde{\boldsymbol{\Omega}}, \mathcal{H}_{K})}dt}{\delta}\big)^2& \text{if $p=+\infty$}\\
		 \end{cases},
\end{split}
\end{equation*}
for any $\delta\in (0,\varepsilon)$.
Using Theorem~\ref{bound-for-finite} we bound the first term
\begin{equation*}
\begin{split}
&\mathcal{H}^{\rm ext}(\varepsilon-\delta, B_{\mathcal{H}_K}[M], L_p(\nu))\leq \\ 
&\begin{cases}
			\min\limits_{\alpha\in [0,2/p]}\aleph(\frac{\varepsilon-\delta}{C_K^{1-\alpha} \nu(\boldsymbol{\Omega})^{1/p-\alpha/2}M^{\alpha/2}},\{\sigma_i^{\alpha}\}_{i=1}^M)& \text{if $2<p<+\infty$}\\
			\aleph(\frac{\varepsilon-\delta}{C_K}, M)& \text{if $p=+\infty$}\\
		 \end{cases}.
\end{split}
\end{equation*}
For $p=+\infty$, the application of Roger's bound on $\aleph(\frac{\varepsilon-\delta}{C_K}, M)$ leads us to
\begin{equation*}
\begin{split}
&\mathcal{H}^{\rm ext}(\varepsilon, B_{\mathcal{H}_K}, L_\infty(\nu))\leq \\
& M\log(\frac{C_K}{\varepsilon-\delta})+c\log(M+1)+\tilde{c}_1\big(\frac{\int_0^{C_{K,+\infty}(M)} \sqrt{\mathcal{H}(t,\tilde{\boldsymbol{\Omega}}, \mathcal{H}_{K})}dt}{\delta}\big)^2,
\end{split}
\end{equation*}
for any $M\in {\mathbb N}\cup \{0\}$.

For finite $p>2$, application of Dumer-Pinsker-Prelov's bound leads us to 
\begin{equation*}
\begin{split}
&\mathcal{H}^{\rm ext}(\varepsilon, B_{\mathcal{H}_K}, L_p(\nu))\leq \\
&\min_{\alpha\in [0,2/p]} \Big\{\mathcal{E}(\varepsilon-\delta, \{C_K^{1-\alpha} \nu(\boldsymbol{\Omega})^{1/p-\alpha/2}M^{\alpha/2}\sigma_i^\alpha\}_{i=1}^M)+\\
&|\{i\in [M]\mid C_K^{1-\alpha} \nu(\boldsymbol{\Omega})^{1/p-\alpha/2}M^{\alpha/2}\sigma_i^\alpha > (1-\theta)(\varepsilon-\delta)\}|\log\frac{3}{\theta(2-\theta)}+\\
&\frac{\tilde{c}p C_{K,p}(M)^{2}}{\delta^2}
\Big\}.
\end{split}
\end{equation*}
Finally,  and varying over $\delta\in (0,\varepsilon)$ and $M\in {\mathbb N}\cup \{0\}$ gives us the desired result.
\end{proof}

\section{Bounding the decay rate of $C_{K,\infty}(N)$}
In this section we will prove Corollary~\ref{corol}.
\begin{lemma}\label{residual} If $(\boldsymbol{\Omega}, \nu, K, \boldsymbol{\sigma}, \boldsymbol{\phi}, \varkappa)\in \mathcal{K}$ and $\lim_{i\to +\infty}i\sigma_i^2 =0$, then
\begin{equation*}
\begin{split}
C_{K,\infty}(N)\leq \varkappa\sum_{i=N+1}^{\infty}i|\sigma_i^2-\sigma_{i+1}^2|.
\end{split}
\end{equation*}
\end{lemma}
\begin{proof}
Let us denote $S_i({\mathbf x}) = \sum_{j=1}^{i-1} \phi_j({\mathbf x})^2$. Then, $S_{i+1}({\mathbf x})\leq \varkappa i$. Using the summation by parts formula we obtain
\begin{equation*}
\begin{split}
&\sum_{i=N+1}^{\infty} \sigma_i^2\phi_i({\mathbf x})^2 = \sum_{i=N+1}^{\infty} \sigma_i^2(S_{i+1}({\mathbf x})-S_{i}({\mathbf x})) =\\
&\lim_{i\to +\infty}\sigma_i^2 S_i({\mathbf x})-\sigma_{N+1}^2 S_{N+1}({\mathbf x})-\sum_{i=N+1}^{\infty}S_{i+1}({\mathbf x})(\sigma_{i+1}^2-\sigma_i^2)\leq \\
&\varkappa\sum_{i=N+1}^{\infty}i|\sigma_i^2-\sigma_{i+1}^2|.
\end{split}
\end{equation*}
Lemma proved.
\end{proof}

\begin{proof}[Proof of Corollary~\ref{corol}] The previous lemma shows that $(\boldsymbol{\Omega}, \nu, K, \boldsymbol{\sigma}, \boldsymbol{\phi}, \varkappa)\in \mathcal{K}$ and $\lim_{i\to +\infty}i\sigma_i^2 =0$ are sufficient to obtain an upper bound on $\mathcal{H}^{\rm ext}(2\varepsilon, B_{\mathcal{H}_K}, C(\boldsymbol{\Omega}))$ purely in terms of the decay rate of eigenvalues and $\varkappa$. Recall that $m_\varepsilon = |\{i \mid \sigma_i> \varepsilon\}|$. Indeed, let us set $\delta=\varepsilon$ and $N=m_\varepsilon$ for the minimization operator arguments in Theorem~\ref{Bound-main}. We obtain
\begin{equation*}
\begin{split}
&\mathcal{H}^{\rm ext}(2\varepsilon, B_{\mathcal{H}_K}, C(\boldsymbol{\Omega}))\leq \\
&m_\varepsilon\log(\frac{C_K}{\varepsilon})+c\log(m_\varepsilon+1)+\tilde{c}_1\big(\frac{\int_0^{C_{K,+\infty}(m_\varepsilon)} \sqrt{\mathcal{H}(t,\tilde{\boldsymbol{\Omega}}, \mathcal{H}_{K})}dt}{\varepsilon}\big)^2.
\end{split}
\end{equation*}
where $\tilde{\boldsymbol{\Omega}} = \{K({\mathbf x}, \cdot)\mid {\mathbf x}\in \boldsymbol{\Omega}\}\subseteq \mathcal{H}_{K}$ and $c,\tilde{c}_1$ are a universal constants.
From Remark~\ref{non-exotic} we conclude that
\begin{equation*}
\begin{split}
\big(\frac{\int_0^{C_{K,+\infty}(m_\varepsilon)} \sqrt{\mathcal{H}(t,\tilde{\boldsymbol{\Omega}}, \mathcal{H}_{K})}dt}{\varepsilon}\big)^2 \lesssim \frac{n}{ \alpha }\cdot \frac{C_{K,+\infty}(m_\varepsilon)^2\log(\frac{C_{n,\alpha, C, R} }{C_{K,+\infty}(m_\varepsilon)})}{\varepsilon^2}.
\end{split}
\end{equation*}
where $C_{n,\alpha, C, R} = R^\alpha Ce^{\frac{c \alpha\log(n+1)}{n}}$. From Lemma~\ref{residual} we have
\begin{equation*}
\begin{split}
C_{K,\infty}(m_\varepsilon)\leq \varkappa\sum_{i=m_\varepsilon+1}^{\infty}i|\sigma_i^2-\sigma_{i+1}^2|.
\end{split}
\end{equation*}
Therefore,
\begin{equation*}
\begin{split}
&\mathcal{H}^{\rm ext}(2\varepsilon, B_{\mathcal{H}_K}, C(\boldsymbol{\Omega}))\leq
m_\varepsilon\log(\frac{C_K}{\varepsilon})+c\log(m_\varepsilon+1)+\\
&\frac{\vardbtilde{c} n}{ \alpha }\cdot \frac{\varkappa^2(\sum_{i=m_\varepsilon+1}^{\infty}i|\sigma_i^2-\sigma_{i+1}^2|)^2\log(\frac{C_{n,\alpha, C, R} }{\varkappa\sum_{i=m_\varepsilon+1}^{\infty}i|\sigma_i^2-\sigma_{i+1}^2|})}{\varepsilon^2},
\end{split}
\end{equation*}
under condition that $\varkappa\sum_{i=m_\varepsilon+1}^{\infty}i|\sigma_i^2-\sigma_{i+1}^2|\leq \frac{1}{2}C_{n,\alpha, C, R}$. Corollary proved.
\end{proof}

\section{Lower bounds based on volume considerations}\label{lower-section}

For $p\geq 1$, $p'$ denotes $\frac{p}{p-1}$, i.e. $\frac{1}{p}+\frac{1}{p'}=1$. Suppose that $g_1, \cdots, g_L\sim^{\rm iid} \mathcal{N}(0,1)$.
Then, for any $x_1, \cdots, x_L\in {\mathbb R}$, $\sum_{i=1}^L g_i x_i\sim \mathcal{N}(0, \sum_{i=1}^L x_i^2)$. Consequently, the $p$th absolute moment of the latter normal random variable equals $\frac{2^{p/2}\Gamma(\frac{p+1}{2})}{\sqrt{\pi}}(\sum_{i=1}^L x_i^2)^{\frac{p}{2}}$, i.e.
\begin{equation*}
\begin{split}
{\mathbb E}_{g_1, \cdots, g_L\sim^{\rm iid} \mathcal{N}(0,1)}\big[|\sum_{i=1}^L g_i x_i|^{p}\big] = \frac{2^{p/2}\Gamma(\frac{p+1}{2})}{\sqrt{\pi}}(\sum_{i=1}^L x_i^2)^{\frac{p}{2}}.
\end{split}
\end{equation*}
Recall that the constant $\frac{2^{p/2}\Gamma(\frac{p+1}{2})}{\sqrt{\pi}}$ is denoted by $B_{p}$.

Given $\boldsymbol{\xi}=\langle \xi_i\rangle_{i\in S}\in {\mathbb R}^S$ we denote $\sum_{i\in S}\xi_i \phi_i({\mathbf x})$ by $\boldsymbol{\phi}^{\boldsymbol{\xi}}_S({\mathbf x})$. Any $f\in L_p(\nu)$ induces the following function of $\boldsymbol{\xi}$:
\begin{equation*}
\begin{split}
\|\boldsymbol{\xi}\|^f_{p} \triangleq \|f+\sum_{i\in S}\xi_i \phi_i({\mathbf x})\|_{L_p(\nu)}.
\end{split}
\end{equation*}
For $f=0$, the latter function is a norm on ${\mathbb R}^S$. Let us denote that norm by $\|\boldsymbol{\xi}\|^\ast_{p}$.

For $N=|S|$, the expression $\int_{{\mathbb S}^{N-1}}\|\boldsymbol{\xi}\|^\ast_{p}d\mu_{{\mathbb S}^{N-1}}(\boldsymbol{\xi})$ plays an important role in bounding the volume of a unit ball in $({\mathbb R}^S, \|\cdot\|_p^\ast)$, that is why our first lemma is dedicated to a general bound of this integral.
\begin{lemma}\label{pisier} Let $1\leq p< \infty$, $\frac{1}{p}+\frac{1}{p'}=1$, and $N=|S|$. Then,
\begin{equation*}
\begin{split}
\int_{{\mathbb S}^{N-1}}\|\boldsymbol{\chi}\|^\ast_{p}d\mu_{{\mathbb S}^{N-1}}(\boldsymbol{\chi})\leq
\begin{cases}
\nu(\boldsymbol{\Omega})^{\frac{1}{p}-\frac{1}{2}}(B_{p})^{\frac{1}{p}}c_{N,p}	, & \text{if $1\leq p\leq 2$}\\
\frac{C_{K,p}}{\langle \{\sigma_i\}_{i\in S}\rangle_{-2}}, & \text{if $p\geq 2$}\\
\end{cases},
\end{split}
\end{equation*}
where
\begin{equation}
c_{N,p}=\sqrt{\frac{N}{2}} \frac{\Gamma(\frac{N}{2})^{1/p}}{\Gamma(\frac{N+p}{2})^{1/p}}\mathop\to\limits^{N\to+\infty}1
\end{equation}
and $\langle \{\sigma_i\}_{i\in S}\rangle_{s} = \big(\frac{1}{N}\sum_{i\in S}\sigma_i^s\big)^{1/s}$ is a generalized mean of $\{\sigma_i\}_{i\in S}$.

If $(\boldsymbol{\Omega}, \nu, K, \boldsymbol{\sigma}, \boldsymbol{\phi}, c)\in \mathcal{K}$,  $S=\{1, \cdots, N\}$, then
\begin{equation*}
\begin{split}
\int_{{\mathbb S}^{N-1}}\|\boldsymbol{\chi}\|^\ast_{p}d\mu_{{\mathbb S}^{N-1}}(\boldsymbol{\chi})\leq    \nu(\boldsymbol{\Omega})^{1/p} c_{N,p} (B_{p})^{\frac{1}{p}}\varkappa^{\frac{1}{2}},
\end{split}
\end{equation*}
for any $p\geq 1$.
\end{lemma}
\begin{proof}
Using Jensen's inequality, we have
\begin{equation*}
\begin{split}
&\int_{{\mathbb S}^{N-1}}\|\boldsymbol{\chi}\|^\ast_{p}d\mu_{{\mathbb S}^{N-1}}(\boldsymbol{\chi}) = \int_{{\mathbb S}^{N-1}} \|\sum_{i\in S}\chi_i \phi_i({\mathbf x})\|_{L_{p}(\nu)} d\mu_{{\mathbb S}^{N-1}}(\boldsymbol{\chi}) = \\
&\int_{{\mathbb S}^{N-1}} \big(\int_{\boldsymbol{\Omega}}|\sum_{i\in S}\chi_i \phi_i({\mathbf x})|^{p}d\nu({\mathbf x}) \big)^{\frac{1}{p}}d\mu_{{\mathbb S}^{N-1}}(\boldsymbol{\chi})\leq \\
&\big(\int_{{\mathbb S}^{N-1}}\int_{\boldsymbol{\Omega}}|\sum_{i\in S}\chi_i \phi_i({\mathbf x})|^{p}d\nu({\mathbf x})d\mu_{{\mathbb S}^{N-1}}(\boldsymbol{\chi})  \big)^{\frac{1}{p}}.
\end{split}
\end{equation*}
Let us consider the case of $1\leq p\leq 2$.
As in the proof of Theorem~\ref{from-sudakov},
the integration with respect to $\mu_{{\mathbb S}^{N-1}}$ can be reduced to the integration over ${\mathbb R}^{N}$, but with respect to the Gaussian measure. Also, we use the fact that the mean of $|\sum_{j=1}^{N}g_j\phi_j({\mathbf x})\big|^p$, for standard Gaussian random variables $\{g_j\}_{j=1}^{N}$, is equal to $B_p\big(\sum_{j=1}^{N}\phi_j({\mathbf x})^2\big)^{\frac{p}{2}}$. Using concavity of $x^{p/2}$ and Jensen's inequality, we have
\begin{equation*}
\begin{split}
&\int_{\boldsymbol{\Omega}}\Big[\int_{{\mathbb S}^{N-1}}|\sum_{i\in S}\chi_i \phi_i({\mathbf x})|^{p}d\mu_{{\mathbb S}^{N-1}}(\boldsymbol{\chi})\Big]d\nu({\mathbf x}) =\\ &\frac{1}{{\mathbb E}_{g_i\sim^{\rm iid} \mathcal{N}(0,1)}\big[(\sum_{i\in S}g_i^2)^{p/2}\big]} \int_{\boldsymbol{\Omega}} {\mathbb E}_{g_i\sim^{\rm iid} \mathcal{N}(0,1)}\big[|\sum_{i\in S}g_i \phi_i({\mathbf x})|^{p}\big]d\nu({\mathbf x})= \\
&\frac{B_{p}}{2^{\frac{p}{2}}\frac{\Gamma(\frac{N+p}{2})}{\Gamma(\frac{N}{2})}}\int_{\boldsymbol{\Omega}} \big(\sum_{i\in S}\phi_i({\mathbf x})^2\big)^{\frac{p}{2}}d\nu({\mathbf x})\leq
\nu(\boldsymbol{\Omega})\frac{B_{p}N^{p/2}\nu(\boldsymbol{\Omega})^{-p/2}}{2^{\frac{p}{2}}\frac{\Gamma(\frac{N+p}{2})}{\Gamma(\frac{N}{2})}} =c_{N,p}^{p} B_{p}\nu(\boldsymbol{\Omega})^{1-p/2}.
\end{split}
\end{equation*}
where $c_{N,p}=\sqrt{\frac{N}{2}} \frac{\Gamma(\frac{N}{2})^{1/p}}{\Gamma(\frac{N+p}{2})^{1/p}}\mathop\to\limits^{N\to+\infty}1$.
Thus, we obtain the first inequality of Lemma:
\begin{equation*}
\begin{split}
\int_{{\mathbb S}^{N-1}}\|\boldsymbol{\chi}\|^\ast_{p}d\mu_{{\mathbb S}^{N-1}}(\boldsymbol{\chi})\leq c_{N,p}  \nu(\boldsymbol{\Omega})^{\frac{1}{p}-\frac{1}{2}}(B_{p})^{\frac{1}{p}} .
\end{split}
\end{equation*}
Let us consider the case of $p\geq 2$. In that case, we have
\begin{equation*}
\begin{split}
&\int_{{\mathbb S}^{N-1}}\|\boldsymbol{\chi}\|^\ast_{p}d\mu_{{\mathbb S}^{N-1}}(\boldsymbol{\chi}) = {\mathbb E}_{\boldsymbol{\chi}\sim {\mathbb S}^{N-1}} \big[\|\sum_{i\in S}\frac{\chi_i}{\sigma_i} \sigma_i\phi_i({\mathbf x})\|_{L_p(\nu)}\big] \leq \\ &\|(\sum_{i\in S}\sigma_i^2\phi_i^2({\mathbf x}))^{1/2} \|_{L_p(\nu)} {\mathbb E}_{\boldsymbol{\chi}\sim {\mathbb S}^{N-1}}  \Big[\big(\sum_{i\in S}\frac{\chi^2_i}{\sigma^2_i} \big)^{1/2}\Big] \leq \\
&C_{K,p} \Big({\mathbb E}_{\boldsymbol{\chi}\sim {\mathbb S}^{N-1}}\left[\sum_{i\in S}\frac{\chi^2_i}{\sigma^2_i}\right] \Big)^{1/2}=
C_{K,p}   \big(\sum_{i\in S}\frac{1}{N\sigma^2_i} \big)^{1/2}=\frac{C_{K,p}}{\langle \{\sigma_i\}_{i\in S}\rangle_{-2}}.
\end{split}
\end{equation*}

Let us now assume that $p\geq 1$, $(\boldsymbol{\Omega}, \nu, K, \boldsymbol{\sigma}, \boldsymbol{\phi}, c)\in \mathcal{K}$ and $S=[N]$.
As before, reducing the integration w.r.t. $\mu_{{\mathbb S}^{N-1}}$ to the expectation with Gaussian random variables gives us
\begin{equation*}
\begin{split}
&\int_{\boldsymbol{\Omega}}\int_{{\mathbb S}^{N-1}}|\sum_{i\in S}\chi_i \phi_i({\mathbf x})|^{p}d\mu_{{\mathbb S}^{N-1}}(\boldsymbol{\chi})d\nu({\mathbf x}) = \frac{B_{p}}{2^{\frac{p}{2}}\frac{\Gamma(\frac{N+p}{2})}{\Gamma(\frac{N}{2})}}\int_{\boldsymbol{\Omega}}\big(\sum_{i\in S}\phi_i({\mathbf x})^2\big)^{\frac{p}{2}}d\nu({\mathbf x})\leq \\
&\frac{\nu(\boldsymbol{\Omega})B_{p}(\varkappa N)^{p/2}}{2^{\frac{p}{2}}\frac{\Gamma(\frac{N+p}{2})}{\Gamma(\frac{N}{2})}} = \nu(\boldsymbol{\Omega}) c_{N,p}^{p} B_{p}\varkappa^{p/2}.
\end{split}
\end{equation*}
Thus,
\begin{equation*}
\begin{split}
\int_{{\mathbb S}^{N-1}}\|\boldsymbol{\chi}\|^\ast_{p}d\mu_{{\mathbb S}^{N-1}}(\boldsymbol{\chi}) \leq  \nu(\boldsymbol{\Omega})^{1/p} c_{N,p} (B_{p})^{\frac{1}{p}}\varkappa^{\frac{1}{2}}.
\end{split}
\end{equation*}
Lemma proved.
\end{proof}

\begin{lemma}\label{urysohn} Let $1\leq p< \infty$, $\frac{1}{p}+\frac{1}{p'}=1$, $f\in L_2(\nu)\cap L_p(\nu)$, $B = \{\boldsymbol{\xi}\in {\mathbb R}^S\mid \|\boldsymbol{\xi}\|^f_p\leq 1\}$, $B_2 = \{\boldsymbol{\xi}\in {\mathbb R}^S\mid \|\boldsymbol{\xi}\|_2\leq 1\}$ and $N=|S|$. Then,
\begin{equation*}
\begin{split}
\Big(\frac{{\rm vol}(B)}{{\rm vol}(B_{2})}\Big)^{\frac{1}{N}}\leq
\begin{cases}
\nu(\boldsymbol{\Omega})^{\frac{1}{2}-\frac{1}{p}}(B_{p'})^{\frac{1}{p'}}c_{N,p'}, & \text{if $p\geq 2$}\\
\frac{C_{K,p'}}{\langle \{\sigma_i\}_{i\in S}\rangle_{-2}}, & \text{if $1\leq p\leq 2$}\\
\end{cases}.
\end{split}
\end{equation*}

If $(\boldsymbol{\Omega}, \nu, K, \boldsymbol{\sigma}, \boldsymbol{\phi}, c)\in \mathcal{K}$,  $S=\{1, \cdots, N\}$, then
\begin{equation*}
\begin{split}
\Big(\frac{{\rm vol}(B)}{{\rm vol}(B_{2})}\Big)^{\frac{1}{N}}\leq \nu(\boldsymbol{\Omega})^{1/p'} c_{N,p'} (B_{p'})^{\frac{1}{p'}}\varkappa^{\frac{1}{2}},
\end{split}
\end{equation*}
for any $p\geq 1$.
\end{lemma}
\begin{proof} W.l.o.g. we can assume that $f\in {\rm span}(\{\phi_s|s\in S\})^\perp$. If this is not the case, we can always set $f' = f-\sum_{i\in S}\langle f, \phi_i\rangle_{L_2(\nu)}\phi_i$.  Since $B' = \{\boldsymbol{\xi}\in {\mathbb R}^S\mid \|\boldsymbol{\xi}\|^{f'}_p\leq 1\}$ is a translation of $B$, we have ${\rm vol}(B')={\rm vol}(B)$. Thus, the statement of Lemma for $f$ follows from the statement in which we change $f$ to $f'$.

By Urysohn's inequality for the volume ratio of convex bodies, we have
\begin{equation*}
\begin{split}
\Big(\frac{{\rm vol}(B)}{{\rm vol}(B_{2})}\Big)^{\frac{1}{N}}\leq \frac{1}{2}\int_{{\mathbb S}^{N-1}}(\|\boldsymbol{\chi}\|_{B^o}+\|-\boldsymbol{\chi}\|_{B^o})d\mu_{{\mathbb S}^{N-1}}(\boldsymbol{\chi}),
\end{split}
\end{equation*}
where $\|\boldsymbol{\chi}\|_{B^o}=\sup\{\langle \boldsymbol{\xi}, \boldsymbol{\chi}\rangle \mid \boldsymbol{\xi}\in B\}$.

By H{\"o}lder's inequality,
\begin{equation*}
\begin{split}
\sum_{i\in S}\xi_i \chi_i = \int_{\boldsymbol{\Omega}}(f({\mathbf x})+\boldsymbol{\phi}^{\boldsymbol{\xi}}_S({\mathbf x}))\boldsymbol{\phi}^{\boldsymbol{\chi}}_S({\mathbf x}) d\nu \leq \|f+\boldsymbol{\phi}^{\boldsymbol{\xi}}_S\|_{L_p(\nu)} \|\boldsymbol{\phi}^{\boldsymbol{\chi}}_S\|_{L_{p'}(\nu)}=\|\boldsymbol{\xi}\|^f_{p}\|\boldsymbol{\chi}\|^\ast_{p'}.
\end{split}
\end{equation*}
Therefore, we have $\|\boldsymbol{\chi}\|_{B^o}=\sup\{\langle \boldsymbol{\xi}, \boldsymbol{\chi}\rangle \mid \|\boldsymbol{\xi}\|^f_p=1\}\leq \|\boldsymbol{\chi}\|^\ast_{p'}$. Thus,
\begin{equation*}
\begin{split}
\Big(\frac{{\rm vol}(B)}{{\rm vol}(B_{2})}\Big)^{\frac{1}{N}}\leq \int_{{\mathbb S}^{N-1}}\|\boldsymbol{\chi}\|^\ast_{p'}d\mu_{{\mathbb S}^{N-1}}(\boldsymbol{\chi}).
\end{split}
\end{equation*}
Using Lemma~\ref{pisier} we obtain
\begin{equation*}
\begin{split}
&\Big(\frac{{\rm vol}(B)}{{\rm vol}(B_{2})}\Big)^{\frac{1}{N}}\leq
\begin{cases}
\nu(\boldsymbol{\Omega})^{\frac{1}{p'}-\frac{1}{2}}(B_{p'})^{\frac{1}{p'}}c_{N,p'}	, & \text{if $1\leq p'\leq 2$}\\
\frac{C_{K,p'}}{\langle \{\sigma_i\}_{i\in S}\rangle_{-2}}, & \text{if $p'\geq 2$}\\
\end{cases} = \\
&\begin{cases}
\nu(\boldsymbol{\Omega})^{\frac{1}{2}-\frac{1}{p}}(B_{p'})^{\frac{1}{p'}}c_{N,p'}	, & \text{if $p\geq 2$}\\
\frac{C_{K,p'}}{\langle \{\sigma_i\}_{i\in S}\rangle_{-2}}, & \text{if $1\leq p\leq 2$}\\
\end{cases}.
\end{split}
\end{equation*}
Analogously, if $p\geq 1$, $(\boldsymbol{\Omega}, \nu, K, \boldsymbol{\sigma}, \boldsymbol{\phi}, c)\in \mathcal{K}$ and $S=[N]$, then Lemma~\ref{pisier} gives
\begin{equation*}
\begin{split}
\Big(\frac{{\rm vol}(B)}{{\rm vol}(B_{2})}\Big)^{\frac{1}{N}}\leq \nu(\boldsymbol{\Omega})^{1/p'} c_{N,p'} (B_{p'})^{\frac{1}{p'}}\varkappa^{\frac{1}{2}}.
\end{split}
\end{equation*}
Lemma proved.
\end{proof}

Before we start the proof of the main lower bound, let us introduce a mapping $\Psi_S: L_2(\nu)[S]\to {\mathbb R}^S$, for $S\subseteq {\mathbb N}$, by
\begin{equation}
\Psi_S[f]_i = \langle f, \phi_i\rangle_{L_2(\nu)}, i\in S.
\end{equation}

\begin{proof}[Proof of Theorem~\ref{lower-kushpel}] Let $S\subseteq {\mathbb N}$ be finite and $N = |S|$.

Let us first assume that $p\geq 2$. Let $B_{\mathcal{H}_K}\subseteq \bigcup_{i=1}^c (f_i+\varepsilon B_{L_p(\nu)})$, $f_i\in L_p(\nu)$ and $c= e^{\mathcal{H}^{\rm ext}(\varepsilon, B_{\mathcal{H}_K}, L_p(\nu))}$.
We have
\begin{equation*}
\begin{split}
B_{\mathcal{H}_K}[S]\subseteq \big(\bigcup_{i=1}^c (f_i+\varepsilon B_{L_p(\nu)})\big)[S] = \bigcup_{i=1}^c (f_i+\varepsilon B_{L_p(\nu)})[S].
\end{split}
\end{equation*}
From $\Psi_S(B_{\mathcal{H}_K}[S])\subseteq \bigcup_{i=1}^c \Psi_S((f_i+\varepsilon B_{L_p(\nu)})[S])$, we conclude the following lower bound on $c$:
\begin{equation*}
\begin{split}
c\geq \frac{{\rm vol}(\Psi_S(B_{\mathcal{H}_K}[S]))}{\min_i{\rm vol}(\Psi_S((f_i+\varepsilon B_{L_p(\nu)})[S]))}.
\end{split}
\end{equation*}
Note that $\sum_{j\in S}\xi_j\phi_j\in (f_i+\varepsilon B_{L_p(\nu)})[S]$ if and only if $\|-f_i + \sum_{j\in S}\xi_j\phi_j\|_{L_p(\nu)}\leq \varepsilon$, or $\|[\xi_j]_{j\in S}\|_p^{-f_i}\leq \varepsilon$. In other words,
\begin{equation*}
\begin{split}
{\rm vol}(\Psi_S((f_i+\varepsilon B_{L_p(\nu)})[S])) = \varepsilon^N {\rm vol}\big(\{\boldsymbol{\xi}\in {\mathbb R}^S \mid \|\boldsymbol{\xi}\|_p^{-f_i/\varepsilon}\leq 1\}\big) .
\end{split}
\end{equation*}
Since $L_p(\nu)\subseteq L_2(\nu)$, we have $-f_i/\varepsilon\in L_p(\nu)\cap L_2(\nu)$ and conditions of Lemma~\ref{urysohn} are satisfied.
Using Lemma~\ref{urysohn} we conclude
\begin{equation*}
\begin{split}
e^{\mathcal{H}^{\rm ext}(\varepsilon, B_{\mathcal{H}_K}, L_p(\nu))}\geq \frac{\prod_{i\in S}\sigma_i {\rm vol}(B_{2})}{\varepsilon^{N}  (c_{N,p'}\nu(\boldsymbol{\Omega})^{1/2-1/p}B_{p'}^{1/p'})^N{\rm vol}(B_{2})}.
\end{split}
\end{equation*}
Finally, we define $S=\{i\in {\mathbb N}\mid \sigma_i>\varepsilon  \nu(\boldsymbol{\Omega})^{\frac{1}{2}-\frac{1}{p}} (B_{p'})^{\frac{1}{p'}}\}$ and obtain
\begin{equation*}
\begin{split}
&\mathcal{H}^{\rm ext}(\varepsilon, B_{\mathcal{H}_K}, L_p(\nu))\geq \\
&\sum_{i: \sigma_i>\varepsilon\nu(\boldsymbol{\Omega})^{1/2-1/p} (B_{p'})^{1/p'}}\log(\frac{\sigma_i}{\varepsilon  c_{N,p'}\nu(\boldsymbol{\Omega})^{1/2-1/p}(B_{p'})^{1/p'}}) = \\
&\sum_{i=1}^\infty \log_{+} (\frac{\sigma_i}{C\varepsilon})-N\log c_{N,p'},
\end{split}
\end{equation*}
where $C = \nu(\boldsymbol{\Omega})^{\frac{1}{2}-\frac{1}{p}} (B_{p'})^{\frac{1}{p'}}$. It remains to show that $N\log c_{N,p'}<0.7$.

Let $\psi(x) = \frac{\Gamma'(x)}{\Gamma(x)}$ be a digamma function. Then, from log-convexity of $\Gamma(x)$ we obtain
\begin{equation*}
\begin{split}
\log\Gamma(\frac{N+p'}{2})-\log\Gamma(\frac{N}{2})\geq \frac{p'}{2}\frac{\Gamma'(\frac{N}{2})}{\Gamma(\frac{N}{2})}=\frac{p'}{2}\psi(\frac{N}{2}).
\end{split}
\end{equation*}
Therefore,
\begin{equation*}
\begin{split}
&N\log c_{N,p'} = N\big(\frac{1}{2}\log(\frac{N}{2})+ \frac{1}{p'}(\log\Gamma(\frac{N}{2})-\log\Gamma(\frac{N+p'}{2}))\big)\leq \\
&\frac{N}{2}\big(\log(\frac{N}{2})-\psi(\frac{N}{2})\big)<0.7,
\end{split}
\end{equation*}
where we used the asymptotic behaviour $\psi(z) \sim \log{z} - \frac{1}{2z}$ and a plotting of $z(\log{z}-\psi(z))$ for a positive argument $z$.
From this we obtain the first inequality
\begin{equation*}
\begin{split}
\mathcal{H}^{\rm ext}(\varepsilon, B_{\mathcal{H}_K}, L_p(\nu))\geq
\sum_{i=1}^\infty \log_{+} (\frac{\sigma_i}{C\varepsilon})-0.7.
\end{split}
\end{equation*}

The case of $1\leq p\leq 2$ differs from the previous one by the fact that  $L_p(\nu)\not\subseteq L_2(\nu)$ and $f_i$ is not necesarily in $L_2(\nu)$ (i.e. $f_i$ does not satisfy conditions of Lemma~\ref{urysohn}). Therefore, there we need to assume that $B_{\mathcal{H}_K}\subseteq \bigcup_{i=1}^c (f_i+\varepsilon B_{L_p(\nu)})$ is an internal covering, i.e. $f_i\in B_{\mathcal{H}_K}\subseteq L_p(\nu)\cap L_2(\nu)$. Let us now set $S = [N]$ for some $N\in {\mathbb N}$. Then, analogous arguments based on Lemma~\ref{urysohn}   leads us to
\begin{equation*}
\begin{split}
e^{\mathcal{H}^{\rm int}(\varepsilon, B_{\mathcal{H}_K}, L_p(\nu))}\geq \frac{{\rm vol}(\Psi_S(B_{\mathcal{H}_K}[S]))}{\min_i{\rm vol}(\Psi_S((f_i+\varepsilon B_{L_p(\nu)})[S]))} \geq \frac{\prod_{i\in S}\sigma_i {\rm vol}(B_{2})}{\varepsilon^{N}  \big(\frac{C_{K,p'}}{\langle \{\sigma_i\}_{i\in S}\rangle_{-2}}\big)^N{\rm vol}(B_{2})} ,
\end{split}
\end{equation*}
and, therefore, to
\begin{equation*}
\begin{split}
\mathcal{H}^{\rm int}(\varepsilon, B_{\mathcal{H}_K}, L_p(\nu))\geq \sup_{N\in {\mathbb N}}\sum_{i=1}^N \log(\frac{\sigma_i}{C_{K,p'}\varepsilon})+N\log \langle \{\sigma_i\}_{i\in [N]}\rangle_{-2}.
\end{split}
\end{equation*}

If, additionally, $(\boldsymbol{\Omega}, \nu, K, \boldsymbol{\sigma}, \boldsymbol{\phi}, \varkappa)\in \mathcal{K}$, then for $p\geq 2$ by Lemma~\ref{urysohn} we have
\begin{equation*}
\begin{split}
e^{\mathcal{H}^{\rm ext}(\varepsilon, B_{\mathcal{H}_K}, L_p(\nu))}\geq \frac{\prod_{i\in S}\sigma_i {\rm vol}(B_{2})}{\varepsilon^{N} c^N_{N,p'} \big(\nu(\boldsymbol{\Omega})^{1/p'}(B_{p'})^{\frac{1}{p'}}\varkappa^{\frac{1}{2}}\big)^N{\rm vol}(B_{2})}.
\end{split}
\end{equation*}
After defining $S=\{i\in {\mathbb N}\mid \sigma_i>\varepsilon  (B_{p'})^{\frac{1}{p'}}\varkappa^{\frac{1}{2}}\}$, we obtain
\begin{equation*}
\begin{split}
\mathcal{H}^{\rm ext}(\varepsilon, B_{\mathcal{H}_K}, L_p(\nu))\geq \sum_{i=1}^\infty \log_+(\frac{\sigma_i}{C_1\varepsilon})-0.7,
\end{split}
\end{equation*}
where $C_1 = \nu(\boldsymbol{\Omega})^{1/p'}(B_{p'})^{\frac{1}{p'}}\varkappa^{1/2}$.
\end{proof}
For $(\boldsymbol{\Omega}, \nu, K, \boldsymbol{\sigma}, \boldsymbol{\phi}, \varkappa)\in \mathcal{K}$ and $1\leq p\leq 2$, we have
\begin{equation*}
\begin{split}
\mathcal{H}^{\rm int}(\varepsilon, B_{\mathcal{H}_K}, L_p(\nu))\geq \frac{\prod_{i\in S}\sigma_i {\rm vol}(B_{2})}{\varepsilon^{N} c^N_{N,p'} \big(\nu(\boldsymbol{\Omega})^{1/p'}(B_{p'})^{\frac{1}{p'}}\varkappa^{\frac{1}{2}}\big)^N{\rm vol}(B_{2})} ,
\end{split}
\end{equation*}
and, therefore,
\begin{equation*}
\begin{split}
\mathcal{H}^{\rm int}(\varepsilon, B_{\mathcal{H}_K}, L_p(\nu))\geq \sum_{i=1}^\infty \log_+(\frac{\sigma_i}{C_1\varepsilon})-0.7.
\end{split}
\end{equation*}

\begin{appendices}
\ifCOM
\section{Table of asymptotics of $\varepsilon$-entropy}

\begin{tabular}{ |p{3cm}|p{3cm}|p{3cm}|p{3cm}| p{3cm}|  }
\hline
\multicolumn{4}{|c|}{Asymptotics of $\varepsilon$-entropy} \\
\hline
Kernel $K({\mathbf x},{\mathbf y})$ & Domain & $\mathcal{H}(\varepsilon, B_{\mathcal{H}_{K}}, L_2(\mu_{\boldsymbol{\Omega}}))$  & $\mathcal{H}(\varepsilon, B_{\mathcal{H}_{K}}, C(\boldsymbol{\Omega}))$  \\
\hline
$e^{-\sigma^2\|{\mathbf x}-{\mathbf y}\|^2}$ & ${\mathbb S}^{n-1}$ & $\mathcal{O}\big(\frac{(\log\frac{1}{\varepsilon})^{n}}{(\log\log\frac{1}{\varepsilon})^{n-1}}\big)$ &  $\mathcal{O}\big(\frac{(\log\frac{1}{\varepsilon})^{n}}{(\log\log\frac{1}{\varepsilon})^{n-1}}\big)$ \\
\hline
$e^{-\sigma^2\|{\mathbf x}-{\mathbf y}\|^2}$ & $[-1,1]^{n}$ & $\mathcal{O}\big(\frac{(\log\frac{1}{\varepsilon})^{n+1}}{(\log\log\frac{1}{\varepsilon})^{n}}\big)$ &  $\mathcal{O}\big(\frac{(\log\frac{1}{\varepsilon})^{n+1}}{(\log\log\frac{1}{\varepsilon})^{n}}\big)^\ast$ \\
\hline
$e^{-2\pi\sigma\|{\mathbf x}-{\mathbf y}\|}$ & ${\mathbb S}^{n-1}$ & $\mathcal{O}(\frac{1}{\varepsilon^{2(n-2)/n}} \log \frac{1}{\varepsilon})$ &  $\mathcal{O}(\frac{\log\frac{1}{\varepsilon}}{\varepsilon})$ \\
\hline
\end{tabular}
\else
\fi

\section{Growth rates of $\mathcal{E}$ and $m_\varepsilon$}\label{growth-rates}
Let us assume that $\sigma_1\geq \sigma_2\geq \cdots $.
Usually, the asymptotic behavior of the $\varepsilon$-entropy is carefully studied when $\varepsilon\to +0$~\cite{TikhomirovKolmogorov}. 

\begin{lemma}\label{decay} Suppose that $\sigma_i \leq \frac{C}{i^{\gamma}(\log (i+1))^\zeta}$ for $\gamma>0, \zeta\geq 0$. Then, for $\varepsilon>0$, we have
\begin{equation*}
\begin{split}
\mathcal{E}(\varepsilon, \{\sigma_i\}_{i=1}^\infty) \leq \zeta (N_\varepsilon+1)\log(\log(N_\varepsilon+2))+\gamma(N_\varepsilon+1) -\zeta \log\log 2,
\end{split}
\end{equation*}
where $N_\varepsilon>0$ is the solution of the equation $\varepsilon = \frac{C}{x^{\gamma}(\log (x+1))^\zeta}$, $x>0$. Also,
\begin{equation*}
\begin{split}
m_\varepsilon\leq N_\varepsilon = \frac{\gamma^{\zeta/\gamma}(\frac{C}{\varepsilon})^{1/\gamma}}{(\log (\frac{C}{\varepsilon}))^{\zeta/\gamma}}(1+o(1)).
\end{split}
\end{equation*}

\end{lemma}
\begin{proof} Let us define $\varepsilon_N = \frac{C}{N^{\gamma}(\log (N+1))^\zeta}$. Then,
\begin{equation*}
\begin{split}
&\frac{1}{N}\mathcal{E}(\varepsilon_N, \{\sigma_i\}_{i=1}^\infty) = \frac{1}{N}\mathcal{E}(\varepsilon_N, \{\sigma_i\}_{i=1}^N)\leq
\frac{1}{N}\sum_{i=1}^N \log(\frac{C}{\varepsilon_N i^{\gamma}(\log (i+1))^\zeta}) = \\
&\frac{1}{N} \sum_{i=1}^N (\log(\frac{C}{\varepsilon_N})-\gamma \log i-\zeta \log(\log(i+1))) \leq \\
&\frac{1}{N}\int_{0}^N (\log(\frac{C}{\varepsilon_N})-\gamma \log x) dx -\frac{\zeta \log\log 2}{N}= \\
&\frac{1}{N}x(\log(\frac{C}{\varepsilon_N})-\gamma \log x)|_0^N +
\frac{1}{N}\int_{0}^N x \frac{\gamma}{x} dx -\frac{\zeta \log\log 2}{N}=\\
&(\log(\frac{C}{\varepsilon_N})-\gamma \log N)+\gamma -\frac{\zeta \log\log 2}{N} =\zeta\log(\log(N+1))+\gamma -\frac{\zeta \log\log 2}{N}.
\end{split}
\end{equation*}
Thus, we have $$\mathcal{E}( \frac{C}{N^{\gamma}(\log(N+1))^\zeta}, \{\sigma_i\}_{i=1}^\infty) \leq
N(\zeta\log(\log(N+1))+\gamma-\frac{\zeta \log\log 2}{N}),$$
and, therefore, for $\varepsilon>0$, we have
\begin{equation*}
\begin{split}
\mathcal{E}( \varepsilon, \{\sigma_i\}_{i=1}^\infty) =  \mathcal{E}( \frac{C}{N_\varepsilon^{\gamma}(\log(N_\varepsilon+1))^\zeta}, \{\sigma_i\}_{i=1}^\infty)  \leq
\mathcal{E}( \frac{C}{\lceil N_\varepsilon\rceil^{\gamma} (\log(\lceil N_\varepsilon\rceil +1))^\zeta}, \{\sigma_i\}_{i=1}^\infty)\leq \\
\zeta (N_\varepsilon+1)\log(\log(N_\varepsilon+2))+\gamma(N_\varepsilon+1) -\zeta \log\log 2.
\end{split}
\end{equation*}
Let us now estimate $N_\varepsilon$. Let $g(x) = x(\log (x+1))^{\zeta/\gamma}$, then $N_\varepsilon = g^{-1}((\frac{C}{\varepsilon})^{1/\gamma})$. We have $g^{-1}(x)= \frac{x}{(\log x)^{\zeta/\gamma}}(1+o(1))$, due to $g(\frac{x}{(\log x)^{\zeta/\gamma}}) = x+o(1)$. Therefore, $N_\varepsilon= \frac{(\frac{C}{\varepsilon})^{1/\gamma}}{(\log (\frac{C}{\varepsilon})/\gamma)^{\zeta/\gamma}}(1+o(1))$.
Lemma proved.
\end{proof}

\begin{lemma}\label{exp-decay} Suppose that $\sigma_i \leq Ce^{-\gamma i^r}$ for $C>0,\gamma>0, r>0$. Then, for $\varepsilon>0$, we have
\begin{equation*}
\begin{split}
m_\varepsilon \leq \big(\frac{\log(C/\varepsilon)}{\gamma}\big)^{1/r}+1,\\
\mathcal{E}(\varepsilon, \{\sigma_i\}_{i=1}^\infty) \leq \frac{\gamma r (\big(\frac{\log(C/\varepsilon)}{\gamma}\big)^{1/r}+1)^r}{r+1}.
\end{split}
\end{equation*}
\end{lemma}
\begin{proof} Let us define $\varepsilon_N = Ce^{-\gamma N^r}$. Then,
\begin{equation*}
\begin{split}
&\frac{1}{N}\mathcal{E}(\varepsilon_N, \{\sigma_i\}_{i=1}^\infty) = \frac{1}{N}\mathcal{E}(\varepsilon_N, \{\sigma_i\}_{i=1}^N)\leq
\frac{1}{N}\sum_{i=1}^N \log(\frac{Ce^{-\gamma i^r}}{\varepsilon_N}) = \\
&\frac{1}{N} \sum_{i=1}^N (\log(\frac{C}{\varepsilon_N})-\gamma i^r) \leq \frac{1}{N}\int_{0}^N (\log(\frac{C}{\varepsilon_N})-\gamma x^r) dx= \\
&\log(\frac{C}{\varepsilon_N})- \frac{\gamma N^r}{r+1} =  \frac{\gamma r N^r}{r+1}.
\end{split}
\end{equation*}
Thus, we have $$\mathcal{E}( Ce^{-\gamma N^r}, \{\sigma_i\}_{i=1}^\infty) \leq
\frac{\gamma r N^r}{r+1}.$$
Let us set $N_\varepsilon = \big(\frac{\log(C/\varepsilon)}{\gamma}\big)^{1/r}$.
For $\varepsilon>0$, we have
\begin{equation*}
\begin{split}
\mathcal{E}( \varepsilon, \{\sigma_i\}_{i=1}^\infty) =  \mathcal{E}(Ce^{-\gamma N_\varepsilon^r}, \{\sigma_i\}_{i=1}^\infty)  \leq
\mathcal{E}( Ce^{-\gamma \lceil N_\varepsilon\rceil^r}, \{\sigma_i\}_{i=1}^\infty)\leq \\
\frac{\gamma r (\big(\frac{\log(C/\varepsilon)}{\gamma}\big)^{1/r}+1)^r}{r+1}.
\end{split}
\end{equation*}
Lemma proved.
\end{proof}

\section{Twice differentiable kernels and the property~\eqref{property-k}}\label{smooth-kern}
\ifCOM
The following corollary of Theorem~\ref{Bound-main} directly follows from the previous lemma.
\begin{corollary}\label{decay} Suppose that $\lambda_i \leq \frac{C_1}{i^{\gamma}}$ for $\gamma>0$. Then, for $p\in [1,2]$, we have
$$
\mathcal{H}(\varepsilon, B_{\mathcal{H}_K}, L_p(\nu)) =  {\mathcal O}\big(\frac{1}{\varepsilon^{2/\gamma}}\big).
$$
\end{corollary}
\begin{proof} We have $\sigma_i\leq \sqrt{\frac{C_1}{i^{\gamma}}}$ and $m_{\alpha} \leq |\{i\in {\mathbb N}\mid \sqrt{\frac{C_1}{i^{\gamma}}}>\alpha\}|\leq \big(\frac{C_1}{\alpha^2}\big)^{1/\gamma}$.
Then, using Theorem~\ref{Bound-main} and the previous Lemma, we obtain:
\begin{equation*}
\begin{split}
&\mathcal{H}(\varepsilon, B_{\mathcal{H}_K},  L_p(\nu))\leq 
\mathcal{E}(\varepsilon, \{\sigma_i\}_{i=1}^{\infty}) +  m_{(1-\theta)\varepsilon} \log\frac{3}{\theta(2-\theta)}\leq \\
&\mathcal{O}(\frac{1}{\varepsilon^{2/\gamma}})+ \big(\frac{C_1}{(1-\theta)^2\varepsilon^2}\big)^{1/\gamma} \log\frac{3}{\theta(2-\theta)}= \mathcal{O}(\frac{1}{\varepsilon^{2/\gamma}}).
\end{split}
\end{equation*}
\end{proof}
To apply Theorem~\ref{Bound-main}, for $p>2$, we need more information about the kernel. A natural such additional assumption is the differentiability of the kernel $K$.
It is well-known that eigenvalues of such a kernel have a polynomial decay rate, and therefore, $C_{K,+\infty}(N)$ can be bounded.
\begin{theorem} If (a) $\boldsymbol{\Omega}$ is compact and has a piecewise smooth boundary, (b)  for any fixed ${\mathbf y}\in \boldsymbol{\Omega}$, $\frac{\partial^{|\alpha|}K({\mathbf x}, {\mathbf y})}{\partial {\mathbf x}^\alpha}\in C(\boldsymbol{\Omega}^2)$ for any $\alpha\in ({\mathbb N}\cup \{0\})^n$ such that $|\alpha| = \sum_{i=1}^n\alpha_i\leq s$. Then, for any $p\in [0,1]$,
$$
\mathcal{H}(\varepsilon, B_{\mathcal{H}_K},  L_p(\boldsymbol{\Omega})) = {\mathcal O}\big(\frac{1}{\varepsilon^{2n/s}}\big).
$$
\end{theorem}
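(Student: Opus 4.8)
The plan is to deduce the stated bound from the eigenvalue-decay criterion already proved in Lemma~\ref{decay}: if the smoothness hypotheses force $\lambda_i = \mathcal{O}(i^{-s/n})$, then Lemma~\ref{decay} applied with $\gamma = s/n$ immediately gives $\mathcal{H}(\varepsilon, B_{\mathcal{H}_K}, C(\boldsymbol{\Omega})) = \mathcal{O}(\varepsilon^{-1/\gamma}) = \mathcal{O}(\varepsilon^{-n/s})$. So the entire task is to establish the eigenvalue decay rate $s/n$ for a kernel whose $\mathbf{x}$-derivatives up to order $s$ are continuous on $\boldsymbol{\Omega}^2$.

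To obtain this decay I would approximate ${\rm O}_K$ in operator norm by finite-rank integral operators and use the variational (approximation-number) characterization of the eigenvalues of the positive self-adjoint operator ${\rm O}_K$. First, since $\boldsymbol{\Omega}^2$ is compact, hypothesis (b) makes the derivatives $\frac{\partial^\alpha K}{\partial\mathbf{x}^\alpha}$, $|\alpha|\le s$, uniformly bounded, so the family $\{K(\cdot,\mathbf{y})\}_{\mathbf{y}\in\boldsymbol{\Omega}}$ is bounded in $C^s(\boldsymbol{\Omega})$. Using hypothesis (a), fix a cube $Q\supseteq\boldsymbol{\Omega}$ together with a bounded linear extension $C^s(\boldsymbol{\Omega})\to C^s(Q)$, extend each $K(\cdot,\mathbf{y})$, partition $Q$ into $m = \Theta(h^{-n})$ subcubes of side $h$, and on each subcube replace the extended function by its degree-$(s-1)$ Taylor polynomial in $\mathbf{x}$ about the centre. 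This produces a kernel $K_h(\mathbf{x},\mathbf{y})$ that is a finite sum of products of a (piecewise polynomial) function of $\mathbf{x}$ with a function of $\mathbf{y}$, of total rank $\le m$, and — by the Taylor remainder bound and the uniform $C^s$ bound — satisfies $\sup_{\mathbf{x},\mathbf{y}\in\boldsymbol{\Omega}}|K(\mathbf{x},\mathbf{y})-K_h(\mathbf{x},\mathbf{y})|\le C h^{s}$. Since $\nu$ is a probability measure, the $L_2(\nu)\to L_2(\nu)$ operator norm of ${\rm O}_K-{\rm O}_{K_h}$ is at most the Hilbert--Schmidt norm of $K-K_h$, hence $\le C h^{s}$; and ${\rm O}_{K_h}$ has rank $\le m$, so $\lambda_{m+1}({\rm O}_K)\le \|{\rm O}_K-{\rm O}_{K_h}\|\le C h^{s}$. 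Taking $h\asymp i^{-1/n}$, so $m\asymp i$, and using monotonicity of $\{\lambda_i\}$, we get $\lambda_i\le C_1 i^{-s/n}$ for all $i$, which is exactly the hypothesis of Lemma~\ref{decay}.

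I expect the main obstacle to be the passage from local Taylor expansions to a genuinely \emph{global} $\mathcal{O}(h^s)$ sup-norm estimate on $\boldsymbol{\Omega}^2$: when $\boldsymbol{\Omega}$ is not convex, the segment from $\mathbf{x}$ to a subcube centre may leave $\boldsymbol{\Omega}$, so one cannot simply Taylor-expand within $\boldsymbol{\Omega}$. This is precisely where the two hypotheses are used — (b) gives uniform control of the $\mathbf{x}$-derivatives over the compact set $\boldsymbol{\Omega}^2$, and the regularity of $\partial\boldsymbol{\Omega}$ in (a) provides the bounded $C^s$-extension to a cube, on which the Taylor argument is unobstructed. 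Note that $K_h$ need not be continuous across subcube faces, nor symmetric: only the $L_2(\nu)\to L_2(\nu)$ operator norm of ${\rm O}_K-{\rm O}_{K_h}$ and the rank of ${\rm O}_{K_h}$ enter the eigenvalue comparison, and if a self-adjoint comparison operator is preferred one may symmetrise $K_h$ at the cost of doubling the rank. Alternatively, the whole eigenvalue step can be replaced by a citation to the classical asymptotics $\lambda_i = \mathcal{O}(i^{-s/n})$ for integral operators with $C^s$ kernels, after which the theorem follows from Lemma~\ref{decay} verbatim.
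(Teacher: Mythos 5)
Your proposal is correct, and it follows the paper's overall skeleton --- reduce the theorem to the eigenvalue decay $\lambda_i = \mathcal{O}(i^{-s/n})$ and then invoke Lemma~\ref{decay} with $\gamma = s/n$ --- but it establishes the decay by a genuinely different argument. The paper factors the integral operator through a Sobolev space, writing it as ${\rm Id}\circ {\rm O}^1_K$ with ${\rm O}^1_K: L_1(\boldsymbol{\Omega})\to W^s_1(\boldsymbol{\Omega})$ continuous (via differentiation under the integral sign), and then cites Proposition 3.c.10 of K\"onig's book to read off $\lambda_i({\rm O}^1_K)=\mathcal{O}(i^{-s/n})$, finally identifying the spectrum of ${\rm O}^1_K$ with that of ${\rm O}_K$. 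Your route --- piecewise Taylor expansion of $K(\cdot,\mathbf{y})$ on a partition of a bounding cube into $\Theta(h^{-n})$ cells, giving a degenerate kernel $K_h$ with $\sup|K-K_h|\le Ch^s$ and rank $\mathcal{O}(h^{-n})$, followed by the approximation-number bound $\lambda_{m+1}({\rm O}_K)\le\|{\rm O}_K-{\rm O}_{K_h}\|$ for the positive self-adjoint ${\rm O}_K$ --- is self-contained and is in the same spirit as the Eckart--Young--Mirsky step the paper itself uses in the appendix for the Gaussian kernel. It also has a real advantage: it works directly in $L_2(\nu)$ for an arbitrary Borel probability measure $\nu$, whereas the paper's ${\rm O}^1_K$ is defined with Lebesgue measure $d\mathbf{y}$, so its claimed coincidence of eigenvectors with those of ${\rm O}_K$ implicitly requires $\nu$ to be (normalized) Lebesgue measure. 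Two small points to tidy up: the rank of ${\rm O}_{K_h}$ is $\binom{n+s-1}{n}\,m$ rather than $m$ (a harmless constant factor), and the extension step needs a precise citation (Whitney/Stein extension for a domain with piecewise smooth, hence Lipschitz, boundary) to justify the uniform $C^s$ bound on the bounding cube; this is exactly where hypothesis (a) enters, playing the role that the Sobolev-embedding hypothesis plays in the paper's version.
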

\begin{proof} Let $W^s_1(\boldsymbol{\Omega})$ be a Sobolev space, i.e. a set of functions whose generalized derivatives up to degree $s$ are in $L_1(\boldsymbol{\Omega})$.
Let us introduce an operator ${\rm O}^1_K: L_1(\boldsymbol{\Omega})\to W^s_1(\boldsymbol{\Omega})$ by
$$
{\rm O}^1_K[f]({\mathbf x}) = \int_{\boldsymbol{\Omega}} K({\mathbf x}, {\mathbf y})f({\mathbf y})d{\mathbf y}
$$
The operator is well-defined and continuous because all conditions of the Leibniz rule are satisfied for the following differentiation:
\begin{equation*}
\begin{split}
|\frac{\partial^{\alpha}}{\partial {\mathbf x}^\alpha}\int K({\mathbf x}, {\mathbf y})f({\mathbf y})d{\mathbf y}| = |\int \frac{\partial^{\alpha}K({\mathbf x}, {\mathbf y})}{\partial {\mathbf x}^\alpha}f({\mathbf y})d{\mathbf y}| \leq 
\sup_{{\mathbf x}, {\mathbf y}}\frac{\partial^{\alpha}K({\mathbf x}, {\mathbf y})}{\partial {\mathbf x}^\alpha} \|f\|_{L_1(\boldsymbol{\Omega})}.
\end{split}
\end{equation*}
Since $\boldsymbol{\Omega}$ is compact, boundedness implies Lebesgue integrability.

Let ${\rm Id}: W^s_1(\boldsymbol{\Omega})\to L_1(\boldsymbol{\Omega})$ be a standard embedding of the Sobolev space $W^s_1(\boldsymbol{\Omega})$ into $L_1(\boldsymbol{\Omega})$. 
From Proposition 3.c.10 of the textbook of~\cite{konig1986eigenvalue} we conclude that ${\rm Id}\circ {\rm O}^1_K: L_1(\boldsymbol{\Omega})\to L_1(\boldsymbol{\Omega})$ is a Riesz operator (in fact, it can be shown that it is compact) and eigenvalues of ${\rm Id}\circ {\rm O}^1_K$ satisfy:
$$
\lambda_i({\rm O}^1_K) = {\mathcal O}(i^{-s/n})
$$
where $\lambda_1({\rm O}^1_K)\geq \lambda_2({\rm O}^1_K)\geq ...$ are ordered eigenvectors of ${\rm O}^1_K$.
By construction, for any $f\in L_1(\boldsymbol{\Omega})$, ${\rm O}^1_K[f]\in C(\boldsymbol{\Omega})$. Therefore, any eigenvector of ${\rm O}^1_K$ is a continuous function and is square integrable. Thus, all eigenvectors of ${\rm O}^1_K$ are also eigenvectors of ${\rm O}_K$. The opposite is also true. Therefore, $\lambda_i=\lambda_i({\rm O}^1_K) = {\mathcal O}(i^{-s/n})$.

Finally, using Lemma~\ref{decay}, we conclude:
$$
\mathcal{H}(\varepsilon,B_{\mathcal{H}_K},  L_2(\boldsymbol{\Omega})) = {\mathcal O}\big(\frac{1}{\varepsilon^{2/(s/n)}}\big).
$$
\end{proof}
\else
\fi

\begin{theorem} Let $\frac{\partial^2 K({\mathbf u}, {\mathbf v})}{\partial x_i \partial y_j}\in C(\boldsymbol{\Omega}^2)$. Then,
$$\sqrt{K({\mathbf x}, {\mathbf x})+K({\mathbf y}, {\mathbf y})-2K({\mathbf x}, {\mathbf y})}\leq C\|{\mathbf x}- {\mathbf y}\|,$$
where $C>0$ is constant.
\end{theorem}
\begin{proof}
First, note that
\begin{equation*}
\begin{split}
&K({\mathbf u}, {\mathbf u})+K({\mathbf v}, {\mathbf v})-2K({\mathbf u}, {\mathbf v}) =(K({\mathbf u}, {\mathbf u}) -K({\mathbf u}, {\mathbf v}) ) -(K({\mathbf v}, {\mathbf u}) -K({\mathbf v}, {\mathbf v}) ) =\\
&\int_0^1 \sum_{i=1}^n \frac{\partial K({\mathbf u}, {\mathbf v}+t({\mathbf u}-{\mathbf v}))}{\partial y_i} (u_i-v_i)dt-\int_0^1 \sum_{i=1}^n \frac{\partial K({\mathbf v}, {\mathbf v}+t({\mathbf u}-{\mathbf v}))}{\partial y_i} (u_i-v_i)dt = \\
&\int_{[0,1]^2} \sum_{i,j=1}^n \frac{\partial^2 K({\mathbf v}+s({\mathbf u}-{\mathbf v}), {\mathbf v}+t({\mathbf u}-{\mathbf v}))}{\partial x_j \partial y_i} (u_i-v_i)(u_j-v_j)dtds = \\
&\sum_{i,j=1}^n K_{ij} (u_i-v_i)(u_j-v_j),
\end{split}
\end{equation*}
where $K_{ij} = \int_{[0,1]^2} \frac{\partial^2 K({\mathbf v}+s({\mathbf u}-{\mathbf v}), {\mathbf v}+t({\mathbf u}-{\mathbf v}))}{\partial x_j \partial y_i}dtds$. Let us denote $M({\mathbf u}, {\mathbf v}) = [K_{ij}]_{i,j=1}^n$. The kernel $K$ satisfies the property~\eqref{property-k} from Remark~\ref{non-exotic} with $\alpha=1$ and $C = \sqrt{\max_{{\mathbf u}, {\mathbf v}\in \boldsymbol{\Omega}} \|M({\mathbf u}, {\mathbf v})\|}$ where $\|\cdot\|$ is the operator norm, due to $C<+\infty$ and
\begin{equation*}
\begin{split}
\sqrt{\sum_{i,j=1}^n K_{ij} (u_i-v_i)(u_j-v_j)}\leq C\|{\mathbf u}- {\mathbf v}\|.
\end{split}
\end{equation*}
\end{proof}

\section{Application I: zonal kernels}\label{zonal-section}
A Mercer kernel $K: {\mathbb S}^{n-1}\times {\mathbb S}^{n-1}\to {\mathbb R}$ is called zonal if $K({\mathbf x}, {\mathbf y}) = t({\mathbf x}^\top {\mathbf y})$. Entropy numbers of zonal kernels were studied in~\cite{Kushpel2012}. Examples of zonal kernels include Laplace kernel $K({\mathbf x},{\mathbf y}) = e^{-2\pi\sigma\|{\mathbf x}-{\mathbf y}\|}$ and the Neural Tangent Kernels on an $n-1$-sphere. It is well-known that zonal kernels satisfy the following expansion:
\begin{equation*}
\begin{split}
K({\mathbf x}, {\mathbf y}) = \sum_{l=0}^\infty a(l)\sum_{i=1}^{\mathcal{N}(n,l)}Y_{l,i}({\mathbf x})Y_{l,i}({\mathbf y})
\end{split}
\end{equation*}
where $\{Y_{l,i}\}_{i=1}^{\mathcal{N}(n,l)}$ is an orthonormal basis in the space of real-valued spherical harmonics of order $l$ on ${\mathbb S}^{n-1}$ and $\mathcal{N}(n,l) = {n+l-1 \choose n-1}-{n+l-3 \choose n-1}$ is the dimension of that space. Note that $\bigcup_{l=0}^\infty\{Y_{l,i}\}_{i=1}^{\mathcal{N}(n,l)}$ form an orthonormal basis in $L_2(\mu_{{\mathbb S}^{n-1}})$ and coefficients $\{a(l)\}_{l=0}^\infty$ are eigenvalues of the integral operator ${\rm O}_K: L_2(\mu_{{\mathbb S}^{n-1}})\to L_2(\mu_{{\mathbb S}^{n-1}})$, ${\rm O}_K\phi({\mathbf x}) = \int_{{\mathbb S}^{n-1}}K({\mathbf x},{\mathbf y})\phi({\mathbf y})d\mu_{{\mathbb S}^{n-1}}({\mathbf y})$. The latter expansion can be viewed as a spectral decomposition of ${\rm O}_K: L_2(\mu_{{\mathbb S}^{n-1}})\to L_2(\mu_{{\mathbb S}^{n-1}})$ in Mercer's theorem.

For a fixed $n$, let us denote
\begin{equation*}
\begin{split}
N_L=\sum_{l=0}^{L}\mathcal{N}(n,l).
\end{split}
\end{equation*}

\ifCOM
\begin{lemma} For any integers $0\leq L_1<L_2$, we have
\begin{equation*}
\begin{split}
C_{K,p}(N_{L_1}, N_{L_2}) = \sqrt{\sum_{l=L_1}^{L_2}a(l)\mathcal{N}(n,l)}.
\end{split}
\end{equation*}
\end{lemma}
\begin{proof}
It is well-known that $\sum_{i=1}^{\mathcal{N}(n,l)}|Y_{l,i}({\mathbf x})|^2 = \mathcal{N}(n,l)$. Therefore,
\begin{equation*}
\begin{split}
\sum_{l=L_1}^{L_2}\sum_{i=1}^{\mathcal{N}(n,l)} a(l) Y_{l,i}({\mathbf x})^2 =  \sum_{l=L_1}^{L_2}a(l)\mathcal{N}(n,l).
\end{split}
\end{equation*}
Therefore, $C_{K,p}(N_{L_1}, N_{L_2}) = \|\sqrt{\sum_{l=L_1}^{L_2}a(l)\mathcal{N}(n,l)}\|_{L_p(\mu_{{\mathbb S}^{n-1}})} = \sqrt{\sum_{l=L_1}^{L_2}a(l)\mathcal{N}(n,l)}$.
\end{proof}

\begin{lemma} We have
\begin{equation*}
\begin{split}
C_{K,p}(N_{L_1}, N_{L_2})\leq \\
\begin{cases}
a& \text{if $a(l) = \frac{1}{l^r}$}\\
\end{cases}
\end{split}
\end{equation*}
\end{lemma}
\begin{proof}
From the previous lemma we obtain
\begin{equation*}
\begin{split}
C_{K,p}(N_{L_1}, N_{L_2}) = \sqrt{\sum_{l=L_1}^{L_2}\frac{\mathcal{N}(n,l)}{l^r}}
\end{split}
\end{equation*}

\end{proof}
\else
\fi

As the following lemma shows, besides kernels with uniformly bounded eigenvectors, zonal kernels satisfy the definition~\ref{kushpel-inspired}.

\begin{lemma}\label{zonal-bound} For any zonal kernel $K$ on ${\mathbb S}^{n-1}$, we have $$({\mathbb S}^{n-1}, \mu_{{\mathbb S}^{n-1}}, K, \bigcup_{l=0}^\infty\{\sqrt{a(l)}\}_{i=1}^{\mathcal{N}(n,l)}, \bigcup_{l=0}^\infty\{Y_{l,i}\}_{i=1}^{\mathcal{N}(n,l)}, \varkappa)\in \mathcal{K},$$ where $\varkappa = \sup_{L\geq 0, L\in {\mathbb Z}}\frac{\sum_{l=0}^{L+1}\mathcal{N}(n,l)}{\sum_{l=0}^{L}\mathcal{N}(n,l)}$ is finite.
\end{lemma}
\begin{proof} It is well-known that $\sum_{i=1}^{\mathcal{N}(n,l)}|Y_{l,i}({\mathbf x})|^2 = \mathcal{N}(n,l)$.
Since eigenfunctions $\bigcup_{l=0}^\infty\{Y_{l,i}\}_{i=1}^{\mathcal{N}(n,l)}$ are listed in the order of increasing $l$, for an integer $L\geq 0$ and $j\in [\mathcal{N}(n,L+1)]$, we have
\begin{equation*}
\begin{split}
\sum_{l=0}^{L}\sum_{i=1}^{\mathcal{N}(n,l)}|Y_{l,i}({\mathbf x})|^2+\sum_{i=1}^{j}|Y_{L+1,i}({\mathbf x})|^2\leq \sum_{l=0}^{L}\mathcal{N}(n,l)+\mathcal{N}(n,L+1)\leq \\
(\sum_{l=0}^{L}\mathcal{N}(n,l)+j)\frac{\sum_{l=0}^{L}\mathcal{N}(n,l)+\mathcal{N}(n,L+1)}{\sum_{l=0}^{L}\mathcal{N}(n,l)}\leq (\sum_{l=0}^{L}\mathcal{N}(n,l)+j)\varkappa,
\end{split}
\end{equation*}
where $\varkappa = \sup_{L\geq 0, L\in {\mathbb Z}}\frac{\sum_{l=0}^{L+1}\mathcal{N}(n,l)}{\sum_{l=0}^{L}\mathcal{N}(n,l)}$. Since $\varkappa-1 = \sup_{L\geq 0, L\in {\mathbb Z}}\frac{{n+L \choose n-1}-{n+L-2 \choose n-1}}{{n+L-1 \choose n-1}+{n+L-2 \choose n-1}-n-1} = \mathcal{O}(\frac{n-1}{L+n})=\mathcal{O}(1)$, we obtain that $\varkappa$ is finite.
\end{proof}

\begin{lemma}\label{wolfram} Let $a>-1$. For $b\to +\infty$, we have $\int_1^b x^a\log(\frac{b}{x})dx \asymp b^{a+1}$.
\end{lemma}
\begin{proof}
We have $\int_1^b x^a\log(\frac{b}{x})dx=\frac{b^{a+1}-(a+1)\log b-1}{(a+1)^2}$ which behaves like $\frac{b^{a+1}}{(a+1)^2}$ as $b\to+\infty$.
\end{proof}

In the following theorem we simply apply our upper and lower bounds, given in Theorems~\ref{Bound-main} and~\ref{lower-kushpel}, to a special case of zonal kernels. Due to Lemma~\ref{zonal-bound}, we can use the second type of lower bounds from Theorem~\ref{lower-kushpel}, which implies that the lower bound behaves like $ \mathcal{E}(C'\varepsilon, \{\sigma_i\}_{i=1}^\infty)$.
\begin{theorem}
Let $K({\mathbf x},{\mathbf y})$ be a zonal kernel on ${\mathbb S}^{n-1}$. Suppose that coefficients $\{a(l)\}_{l=0}^\infty$ satisfy $a(l) \asymp \frac{1}{l^{\gamma}}$ where $\gamma>n-1$.

Then, for $\varepsilon\to +0$, we have
\begin{equation*}
\begin{split}
\mathcal{H}(\varepsilon, B_{\mathcal{H}_{K}},  L_p(\mu_{{\mathbb S}^{n-1}}))\asymp  \frac{1}{\varepsilon^{2(n-1)/\gamma}}, \text{ if $1\leq p\leq 2$,}\\
\frac{1}{\varepsilon^{2(n-1)/\gamma}}\ll \mathcal{H}(\varepsilon, B_{\mathcal{H}_{K}},  L_p(\mu_{{\mathbb S}^{n-1}}))\ll \frac{1}{\varepsilon^{2(n-1)/\gamma}}\log(\frac{1}{\varepsilon}) , \text{ if $p\in [2,+\infty)$}.
\end{split}
\end{equation*}
\end{theorem}
\begin{proof}
Let $\{\sigma_i\}$ be singular values of ${\rm O}_K$ ordered according to the increasing degree $l$. Note that $\sigma_i\leq C_K$ for any $i\in {\mathbb N}$.
Since $a(l)\asymp \frac{1}{l^\gamma}$, we have $\sigma_i\asymp \frac{1}{l^{\gamma/2}}$ for $N_{l-1}< i\leq N_l$. Let $L_\varepsilon$ be the largest integer $l$ such that $\sigma_{i}>\varepsilon$ for some $N_{l-1}< i\leq N_l$. Note that $L_\varepsilon \asymp \varepsilon^{-2/\gamma}$. Since $\mathcal{N}(n,l) = {n+l-1 \choose n-1}-{n+l-3 \choose n-1}$, we have $a_n l^{n-2}\leq \mathcal{N}(n,l) \leq b_n l^{n-2}$ for a sufficiently large $l$. Thus, there are constants $l_0\in {\mathbb N}$ and $C>0$ such that
\begin{equation*}
\begin{split}
&\mathcal{E}(\varepsilon, \{\sigma_i\}_{i=1}^\infty) = \sum_{i=1}^\infty \log_+(\frac{\sigma_i}{\varepsilon})\ll
\sum_{l=0}^{l_0-1}\mathcal{N}(n,l)\log_+(\frac{C_K}{\varepsilon})+\sum_{l=l_0}^{L_\varepsilon} \mathcal{N}(n,l)\log(\frac{C}{\varepsilon l^{\gamma/2}}) \asymp \\
&  \sum_{l=l_0}^{L_\varepsilon} \mathcal{N}(n,l)(\log(\frac{C}{\varepsilon})-\frac{\gamma}{2} \log l) \asymp    \sum_{l=l_0}^{L_\varepsilon} l^{n-2}(\log(\frac{C}{\varepsilon})-\frac{\gamma}{2} \log l)\asymp  \\
& \int_{1}^{(C/\varepsilon)^{2/\gamma}} x^{n-2}(\log(\frac{C}{\varepsilon})-\frac{\gamma}{2} \log x)dx.
\end{split}
\end{equation*}
From Lemma~\ref{wolfram} we obtain that the latter integral behaves like $(C/\varepsilon)^{2(n-1)/\gamma}$ as $\varepsilon\to +0$.
Analogously, there is a constant $C'>$ such that
\begin{equation*}
\begin{split}
&\mathcal{E}(\varepsilon, \{\sigma_i\}_{i=1}^\infty) = \sum_{i=1}^\infty \log_+(\frac{\sigma_i}{\varepsilon})\gg  \int_{1}^{(C'/\varepsilon)^{2/\gamma}} x^{n-2}(\log(\frac{C'}{\varepsilon})-\frac{\gamma}{2} \log x)dx \asymp  (C'/\varepsilon)^{2(n-1)/\gamma}.
\end{split}
\end{equation*}
Therefore,
\begin{equation*}
\begin{split}
\mathcal{E}(\varepsilon, \{\sigma_i\}_{i=1}^\infty)\asymp  \frac{1}{\varepsilon^{2(n-1)/\gamma}}.
\end{split}
\end{equation*}
Analogously, we prove $m_{(1-\theta)\varepsilon}\asymp  \frac{1}{\varepsilon^{2(n-1)/\gamma}}$.

Using Lemma~\ref{zonal-bound} and Theorem~\ref{lower-kushpel}, we obtain $\mathcal{H}(\varepsilon, B_{\mathcal{H}_{K}}, L_p(\mu_{{\mathbb S}^{n-1}}))\gg \frac{1}{\varepsilon^{2(n-1)/\gamma}}$ when $\varepsilon\to 0$ for any $p\geq 1$.

For $1\leq p\leq 2$, from Theorem~\ref{Bound-main} we directly obtain the asymptotics $\mathcal{H}(\varepsilon, B_{\mathcal{H}_{K}}, L_p(\mu_{{\mathbb S}^{n-1}}))\ll \frac{1}{\varepsilon^{2(n-1)/\gamma}}$ when $\varepsilon\to 0$. Therefore, $\mathcal{H}(\varepsilon, B_{\mathcal{H}_{K}}, L_p(\mu_{{\mathbb S}^{n-1}}))\asymp \frac{1}{\varepsilon^{2(n-1)/\gamma}}$, for $1\leq p\leq 2$.

Recall that $\gamma>n-1$. Let us find an upper bound on the asymptotics of $\mathcal{H}(\varepsilon, B_{\mathcal{H}_{K}}, C({\mathbb S}^{n-1}))$ using Theorem~\ref{Bound-main}. First, the asymptotics of $N_{L_x}$ satisfies
\begin{equation*}
\begin{split}
N_{L_x} \asymp \int_0^{L_x} l^{n-2}dl \asymp L_x^{n-1} \asymp x^{-2(n-1)/\gamma},
\end{split}
\end{equation*}
due to $L_x \asymp x^{-2/\gamma}$. Also, we have
\begin{equation*}
\begin{split}
C_{K,p}(N_L)^2 \leq C_{K,\infty}(N_L)^2 \asymp \int_L^\infty \frac{1}{l^\gamma} l^{n-2}dl \asymp \frac{1}{L^{\gamma-n+1}},
\end{split}
\end{equation*}
which gives $C_{K,p}(N_{L_x})^2\ll x^{2(\gamma-n+1)/\gamma}$.

For a finite $p>2$, by Theorem~\ref{Bound-main} we have
\begin{equation*}
\begin{split}
\mathcal{H}(\varepsilon, B_{\mathcal{H}_{K}}, L_p(\mu_{{\mathbb S}^{n-1}}))\ll \min_{0<x<\varepsilon}N_{L_x}\log(\frac{1}{x})+\frac{C_{K,p}(N_{L_x})^2}{ (\varepsilon-x)^2}\asymp\\
\min_{0<x<\varepsilon}-x^{-2(n-1)/\gamma}\log x + \frac{x^{2(\gamma-n+1)/\gamma}}{ (\varepsilon-x)^2}.
\end{split}
\end{equation*}
After we set $x = \frac{\varepsilon}{2}$ we obtain
$\mathcal{H}(\varepsilon, B_{\mathcal{H}_{K}}, L_p(\mu_{{\mathbb S}^{n-1}}))\ll \frac{1}{\varepsilon^{2(n-1)/\gamma}}\log(\frac{1}{\varepsilon})$.
\end{proof}
\begin{remark} With a slightly more accurate analysis one can eliminare the logarithmic factor in the latter upper bound and obtain the general result $\mathcal{H}(\varepsilon, B_{\mathcal{H}_{K}},  L_p(\mu_{{\mathbb S}^{n-1}}))\asymp  \frac{1}{\varepsilon^{2(n-1)/\gamma}}$ for $p\geq 1$. Since we are only interested in demonstrating the efficiency of our general bound given in Theorem~\ref{Bound-main}, we omit this step.
\end{remark}

\section{Application II: the Gaussian kernel on a box}\label{gaussian-kernel}
Let us now study the behaviour of the upper bound in Theorem~\ref{Bound-main} for the kernel $K({\mathbf x},{\mathbf y}) = e^{-\sigma^2\|{\mathbf x}-{\mathbf y}\|^2}$ on the domain $\boldsymbol{\Omega} = [-1,1]^n$.
The main result of this section is the following theorem.
\begin{theorem}\label{gaussian-main} For $p\geq 1$, expressions from the right hand side of our bounds in Theorem~\ref{Bound-main} behave asymptotically like $\mathcal{O}(\frac{ (\log\frac{1}{\varepsilon})^{n+1}}{(\log \log \frac{1}{\varepsilon})^n})$ for $\varepsilon\to +0$.
\end{theorem}
It is well-known that $\mathcal{H}(\varepsilon, B_{\mathcal{H}_K}, C([-1,1]^n))\asymp \frac{ (\log\frac{1}{\varepsilon})^{n+1}}{(\log \log \frac{1}{\varepsilon})^n}$~\cite{KUHN2011489}, so our upper bounds are asymptotically tight. In fact, in Theorem~\ref{Gaussian-bound} below we obtain a non-asymptotical upper bound on $\mathcal{E}(\varepsilon, \{\sigma_i\}_{i=1}^\infty)$ and $m_\varepsilon$, which is much more informative that the asymptotics for $\varepsilon\to +0$.

The basic tool in our analysis is the following lemma.

\begin{lemma}\label{Eigen-Bound} Let $K(x,t) = e^{-\sigma^2(x-y)^2}$ be a Mercer kernel on the domain $\Omega = [-1,1]$ and $T: L_2([-1,1])\to L_2([-1,1])$, $T[\phi](x) = \int_{-1}^1 K(x,t)\phi(t)dt$. Let $|\lambda_1|\geq |\lambda_2|\geq \cdots$ be eigenvalues of $T$. Then, eigenvalues of $T$ satisfy
\begin{equation}\label{eigen-bound}
\begin{split}
\lambda_{k} < 8\cdot  (\frac{2}{e}(k-1))^{-\frac{k-1}{2}}  \sigma^{k-1},
\end{split}
\end{equation}
for $k\in  {\mathbb N}$.
\end{lemma}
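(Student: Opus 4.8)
The plan is to bound the eigenvalues of the integral operator $T$ via a rank-based truncation of the kernel together with the min–max characterization of eigenvalues. The key fact is that $e^{-\sigma^2(x-y)^2}$ is an analytic kernel whose Taylor expansion in the variable $x-y$ (equivalently, a polynomial expansion after collecting cross terms $x^a y^b$) has rapidly decaying coefficients. Concretely, I would write $e^{-\sigma^2(x-y)^2} = e^{-\sigma^2 x^2}e^{-\sigma^2 y^2}e^{2\sigma^2 xy}$ and expand the middle factor, $e^{2\sigma^2 xy} = \sum_{m=0}^{\infty} \frac{(2\sigma^2)^m}{m!} x^m y^m$. Truncating this sum at $m = k-1$ gives a degenerate (finite-rank, rank $\le k-1$) kernel $K_{k-1}(x,y)$, and the error kernel $K - K_{k-1}$ has operator norm controllable by the tail of the exponential series. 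By the Courant–Fischer theorem, $\lambda_k(T) \le \|T - T_{k-1}\|$ where $T_{k-1}$ has rank $\le k-1$; hence $\lambda_k(T)$ is at most the operator norm (or the Hilbert–Schmidt norm, which is easier) of the integral operator with kernel $K - K_{k-1}$.

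The key steps, in order: (1) factor the Gaussian kernel and expand $e^{2\sigma^2 xy}$ as a power series in $xy$; (2) define the truncated kernel $K_{k-1}$ keeping terms $x^m y^m$ for $m \le k-2$, so it has rank $\le k-1$; (3) bound the Hilbert–Schmidt norm of the remainder operator on $[-1,1]^2$ by $\big(\int_{-1}^1\!\int_{-1}^1 (K-K_{k-1})^2\big)^{1/2}$, using $|x|,|y|\le 1$ and $e^{-\sigma^2 x^2}\le 1$ to reduce everything to $\sum_{m\ge k-1}\frac{(2\sigma^2)^m}{m!}$ times constants; (4) bound this tail: the dominant term is $\frac{(2\sigma^2)^{k-1}}{(k-1)!}$, and using the Stirling-type estimate $(k-1)! \ge \big(\frac{k-1}{e}\big)^{k-1}$ gives $\frac{(2\sigma^2)^{k-1}}{(k-1)!} \le \big(\frac{2\sigma^2 e}{k-1}\big)^{k-1} = \big(\frac{2}{e}(k-1)\big)^{-(k-1)} (e^2\sigma^2)^{k-1}$ — I would need to be slightly more careful to land exactly on the claimed form $8\cdot\big(\frac{2}{e}(k-1)\big)^{-(k-1)/2}\sigma^{k-1}$, which suggests the intended route is through the $\ell^2$ norm of the coefficient sequence (a square-root appears, explaining the exponent $(k-1)/2$ rather than $k-1$), with the numerical constant $8$ absorbing the geometric-series tail and the boundary-integral factors like $\int_{-1}^1 x^{2m}dx \le 2$.

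The main obstacle I anticipate is twofold: first, getting the \emph{exact} constants and the exponent $(k-1)/2$ rather than $k-1$ right — this hinges on whether one bounds $\lambda_k$ by the Hilbert–Schmidt norm of the remainder (giving the square of a sum of squares of coefficients, hence a square root overall) or by something sharper, and on carefully summing the tail $\sum_{m\ge k-1}\frac{(2\sigma^2)^m}{m!}$ as a convergent series dominated by its first term times a bounded geometric factor. Second, one must be careful that the truncated operator genuinely has rank $\le k-1$: the functions $\{x^m e^{-\sigma^2 x^2}\}_{m=0}^{k-2}$ span a space of dimension $\le k-1$, so the operator with kernel $e^{-\sigma^2 x^2}e^{-\sigma^2 y^2}\sum_{m=0}^{k-2}\frac{(2\sigma^2)^m}{m!}x^m y^m$ indeed has rank $\le k-1$, and Courant–Fischer then yields $\lambda_k(T)\le\|T-T_{k-1}\|_{\mathrm{op}}\le\|T-T_{k-1}\|_{\mathrm{HS}}$. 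Everything else is a routine, if delicate, estimation of a factorial tail.
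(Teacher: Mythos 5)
Your truncation strategy is sound in outline (finite\-/rank approximation plus Courant--Fischer is exactly the skeleton of the paper's argument), but the specific decomposition you chose cannot reach the stated inequality, and the place where you hope to ``be slightly more careful'' is precisely where it breaks. Truncating $e^{2\sigma^2xy}=\sum_m\frac{(2\sigma^2)^m}{m!}x^my^m$ at $m\le k-2$ gives a remainder operator whose operator norm and Hilbert--Schmidt norm are both \emph{linear} in the tail coefficients: the remainder is a positive sum of rank-one terms $c_m f_m\otimes f_m$ with $c_m=\frac{(2\sigma^2)^m}{m!}$, so any of these norms is of order $\sum_{m\ge k-1}c_m\approx\frac{(2\sigma^2)^{k-1}}{(k-1)!}\lesssim\big(\frac{2e\sigma^2}{k-1}\big)^{k-1}$. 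The square root you anticipate from the $\ell^2$/HS structure does not materialize (the HS norm is the square root of the integral of $K_{\mathrm{rem}}^2$, which is quadratic in the coefficients, so the norm itself is first order in them), and there is no route from $\big(\frac{2e\sigma^2}{k-1}\big)^{k-1}$ to the target $8\big(\frac{e\sigma^2}{2(k-1)}\big)^{(k-1)/2}$ by adjusting constants: the ratio of your bound to the target is $\big(\frac{8e\sigma^2}{k-1}\big)^{(k-1)/2}$, which is enormous whenever $k-1\ll 8e\sigma^2$. For example, with $\sigma^2=10$ and $k-1=50$ your bound is $\approx\frac{20^{50}}{50!}\approx 3.7$ while the claimed right-hand side is $\approx 5\cdot10^{-14}$; the trivial trace bound $\lambda_k\le 2$ does not close this gap either. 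So your argument proves the lemma only for $k-1\gtrsim 8e\sigma^2$ (where it is in fact \emph{sharper} than the stated bound, since the true eigenvalues decay like $1/(k-1)!$ rather than $1/\sqrt{(k-1)!}$), and fails in the intermediate range; you also do not handle $k=1$, where the paper uses $\lambda_1\le\int_{-1}^1K(x,x)\,dx=2<8$.

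The paper gets the exponent $(k-1)/2$ by a different finite-rank approximation: the Little--Reade theorem. One analytically continues $K_t$ to the Bernstein ellipse $E_R=\{\frac12(w+w^{-1}):R^{-1}\le|w|\le R\}$, expands in Chebyshev polynomials (whose coefficients decay like $R^{-k}$ times $\sup_{E_R}|\tilde K_t|$), truncates that expansion, and obtains $\lambda_k\le 4\sup_{t,z\in E_R}|\tilde K_t(z)|\,\frac{R^{-k+2}}{R-1}$ for every $R>1$. For the Gaussian kernel $\sup_{E_R}|\tilde K_t|=\exp\big(\tfrac{\sigma^2}{4}(R-R^{-1})^2\big)$, and optimizing the free parameter $R$ via $\sigma(R-R^{-1})=\sqrt{2(k-1)}$, i.e.\ $R\sim\sqrt{2(k-1)}/\sigma$, is exactly what produces $R^{-(k-1)}\sim\big(\frac{2(k-1)}{e}\big)^{-(k-1)/2}\sigma^{k-1}$ after the exponential factor contributes $e^{(k-1)/2}$. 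If you want to salvage your route, you would either have to prove the (stronger) bound $\lambda_k\le C\frac{(2\sigma^2)^{k-1}}{(k-1)!}$ with constants good enough to dominate $8\big(\frac{2}{e}(k-1)\big)^{-(k-1)/2}\sigma^{k-1}$ uniformly in $k$ and $\sigma$ --- which, as the numerics above show, is false as stated for your truncation in the intermediate range --- or switch to a Chebyshev-type truncation with a tunable ellipse parameter.
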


The following theorem was proved in~\cite{Little}. Using this theorem one can deduce a bound on eigenvalues of the Gaussian kernel. We give its proof to follow the exact value of a constant which was neglected in the original formulation of the theorem.
\begin{theorem}[Little-Reade]\label{little} Let $K:[-1,1]^2\to {\mathbb R}$ be a continuous function such that $K(x,t)= K(t,x)$ and for any $t\in [-1,1]$, $K_t(x) = K(x,t)$ has an analytical continuation $\tilde{K}_t:E_R\to {\mathbb C}$ inside $E_R = \{z\in {\mathbb C} \mid z=\frac{1}{2}(w+w^{-1}), w\in {\mathbb C}, R^{-1}\leq |w|\leq R\}$ for $R>1$. Also, let $|\lambda_1|\geq |\lambda_2|\geq \cdots$ be eigenvalues of $T: L_2([-1,1])\to L_2([-1,1])$, $T[\phi](x) = \int_{-1}^1 K(x,t)\phi(t)dt$, ordered by their absolute values, counting multiplicities. Then,
$$
|\lambda_k|\leq 4 \sup_{t\in [-1,1], z\in E_R} |\tilde{K}_t(z)| \frac{R^{-k+2}}{R-1}.
$$
\end{theorem}
\begin{proof} Since $\tilde{K}_t$ is analytical inside $E_R$, $2\tilde{K}_t(\frac{1}{2}(w+w^{-1}))$ is analytical in the annulus $\{w\in {\mathbb C}\mid R^{-1}< |w| < R\}$. Therefore, its Laurent series is convergent to its value in that annulus, i.e.
$$
2\tilde{K}_t(\frac{1}{2}(w+w^{-1})) = \sum_{i=-\infty}^\infty a_i(t) w^i,
$$
where
$$
a_k(t) =\frac{1}{\pi {\rm i}}\int_{|w|=r} \tilde{K}_t(\frac{1}{2}(w+w^{-1}))w^{-k-1}dw,
$$
for $R^{-1}<r<R$. If we set $r=1$ and $w=e^{{\rm i}\phi}$, then
$$
a_k(t) =\frac{1}{\pi}\int_{-\pi}^{\pi} \tilde{K}_t(\cos \phi)e^{-{\rm i}k\phi}d\phi.
$$
Since $\tilde{K}_t(\cos \phi) = K_t(\cos \phi)$ is an even function, then $a_k(t)=a_{-k}(t)$. Thus,
\begin{equation*}
\begin{split}
\tilde{K}_t(\frac{1}{2}(w+w^{-1})) = \frac{1}{2}a_0(t)+\sum_{i=1}^\infty a_i(t)\frac{(w^i+w^{-i})}{2}.
\end{split}
\end{equation*}
Let $T_i$ be the $i$th Chebyshev's polynomial, i.e.  $T_i(\cos \phi) = \cos(i\phi)$ and $T_i(\cosh \phi) = \cosh(i\phi)$.
If we set $z=\frac{1}{2}(w+w^{-1})$, $w=\rho e^{{\rm i}\phi}$, then $\frac{(w^i+w^{-i})}{2} = \frac{(e^{i(\log\rho+{\rm i}\phi)}+e^{-i(\log\rho+{\rm i}\phi)})}{2} = \cosh (i(\log\rho+{\rm i}\phi)) = T_i(\cosh (\log\rho+{\rm i}\phi)) =  T_i(z)$. Thus,
\begin{equation*}
\begin{split}
\tilde{K}_t(z) = \frac{1}{2}a_0(t)+\sum_{i=1}^\infty a_i(t)T_i(z),
\end{split}
\end{equation*}
for $z\in E_R$.

Coefficients $a_k(t)$ can be bounded:
\begin{equation*}
\begin{split}
&|a_k(t)| =|\frac{1}{\pi {\rm i}}\int_{|w|=R-\varepsilon} \tilde{K}_t(\frac{1}{2}(w+w^{-1}))w^{-k-1}dw| \leq \\
& \max_{z\in E_R} |\tilde{K}_t(z)| \frac{1}{\pi }2\pi(R-\varepsilon)^{-k-1}R.
\end{split}
\end{equation*}
As $\varepsilon\to 0+$, we obtain
$$
|a_k(t)| \leq
2\sup_{z\in E_R} |\tilde{K}_t(z)|R^{-k}.
$$
Let us now denote by $S_k:L_2([-1,1])\to L_2([-1,1])$ an integral operator $S_k[\phi](t)=\int_{-1}^1 \big(\frac{1}{2}a_0(t)+\sum_{i=1}^k a_i(t)T_i(z)\big)\phi(z)dz$. By construction, ${\rm rank\,}S_k \leq k+1$.
Using $|T_i(z)|\leq 1, z\in [-1,1]$ and the Eckart-Young-Mirsky theorem for the low-rank approximation of compact operators in the operator norm (it follows from Courant-Fischer-Weyl min-max principle in the same way as in a finite dimensional case), we obtain:
\begin{equation*}
\begin{split}
&|\lambda_{k+2}| \leq 
\sup_{\phi\in L_2([-1,1]), \|\phi\|_{L_2}=1}\|(T-S_k)[\phi]\|_{L_2([-1,1])} \leq \\
& \sqrt{2}\sup_{t\in [-1,1]}\sup_{\phi\in L_2([-1,1]), \|\phi\|_{L_2}=1}
\sum_{i=k+1}^\infty \int_{-1}^1|a_i(t)|\cdot |T_i(z)|\cdot |\phi(z)| dz\leq \\
&\sqrt{2}\sup_{t\in [-1,1]}\sup_{\phi\in L_2([-1,1]), \|\phi\|_{L_2}=1}
\sum_{i=k+1}^\infty \int_{-1}^1 2\sup_{t\in [-1,1], z\in E_R} |\tilde{K}_t(z)|R^{-i}\cdot |\phi(z)| dz \leq \\
&4 \sup_{t\in [-1,1], z\in E_R} |\tilde{K}_t(z)| \sum_{i=k+1}^\infty R^{-i} = 
4 \sup_{t\in [-1,1], z\in E_R} |\tilde{K}_t(z)| \frac{R^{-(k+1)}}{1-R^{-1}} = \\
&4 \sup_{t\in [-1,1], z\in E_R} |\tilde{K}_t(z)| \frac{R^{-k}}{R-1}.
\end{split}
\end{equation*}
\end{proof}

\begin{proof}[Proof of Lemma~\ref{Eigen-Bound}] We can define a continuation function as $\tilde{K}_t(z) = e^{-\sigma^2(z-t)^2}$ and it is obviously analytical for any $t$. The set $B_r = \{\frac{1}{2}(w+w^{-1}) \mid |w|=r\}$  can be given as
\begin{equation*}
\begin{split}
\{\frac{1}{2}((r+r^{-1})\cos\phi, (r-r^{-1})\sin\phi) \mid \phi \in [0,2\pi)\},
\end{split}
\end{equation*}
i.e. it is an ellipse with foci $\{+1, -1\}$ (called the Bernstein ellipse). Thus,
\begin{equation*}
\begin{split}
&\sup_{t\in [-1,1], z\in E_R} |\tilde{K}_t(z)| =
\max\{ e^{-{\rm Re\,}\sigma^2(z-t)^2} \mid
z\in E_R, t\in [-1,1]\} = \\
&{\rm exp}\big(-\sigma^2 \min_{ R^{-1}\leq r\leq R}\min_{t\in [-1,1], z\in B_r} {\rm Re\,}(z-t)^2\big).
\end{split}
\end{equation*}
For a fixed $t\in [-1,1]$, the expression
\begin{equation*}
\begin{split}
\min_{z\in B_r} {\rm Re\,}(z-t)^2= \min_{ x+{\rm i}y\in B_r} (x-t)^2-y^2
\end{split}
\end{equation*}
can be calculated using the method of Lagrange multipliers:
\begin{equation*}
\begin{split}
&L(x,y,\lambda) = (x-t)^2-y^2+
\lambda(\frac{4x^2}{(r+r^{-1})^2}+\frac{4y^2}{(r-r^{-1})^2}-1),\\
&\partial_x L = 2(x-t)+\frac{8x\lambda}{(r+r^{-1})^2}=0, 
\partial_y L = -2y+\frac{8y\lambda}{(r-r^{-1})^2}=0.
\end{split}
\end{equation*}
Thus, either 1) $y=0$, or 2) $\lambda = \frac{(r-r^{-1})^2}{4}$. In the first case  the minimum is attained on $x$-axis and $\arg\min_{z\in B_r} {\rm Re\,}(z-t)^2 = +\frac{1}{2}(r+r^{-1})$ if $t>0$ and $\arg\min_{z\in B_r} {\rm Re\,}(z-t)^2 = -\frac{1}{2}(r+r^{-1})$ if $t<0$. In other words, we have
$$
\min_{z\in B_r} {\rm Re\,}(z-t)^2 =( \frac{1}{2}(r+r^{-1})-|t|)^2.
$$
In the second case, $x = \frac{t}{1+\frac{(r-r^{-1})^2}{(r+r^{-1})^2}} = \frac{(r+r^{-1})^2t}{2(r^2+r^{-2})}$ and $y=\pm \frac{(r-r^{-1})}{2} \sqrt{1-(\frac{(r+r^{-1})t}{(r^2+r^{-2})})^2}$ and we have:
\begin{equation*}
\begin{split}
&\min_{ x+{\rm i}y\in B_r} (x-t)^2-y^2 =
(\frac{(r-r^{-1})^2t}{2(r^2+r^{-2})})^2-\frac{(r-r^{-1})^2}{4} (1-(\frac{(r+r^{-1})t}{(r^2+r^{-2})})^2) = \\
&\frac{t^2(r-r^{-1})^2}{2(r^2+r^{-2})}-\frac{(r-r^{-1})^2}{4}.
\end{split}
\end{equation*}
Therefore,
\begin{equation*}
\begin{split}
&\min_{t\in [-1,1], z\in B_r} {\rm Re\,}(z-t)^2=\\
&\min_{t\in [-1,1]} \min\{( \frac{1}{2}(r+r^{-1})-|t|)^2, \frac{t^2(r-r^{-1})^2}{2(r^2+r^{-2})}-\frac{(r-r^{-1})^2}{4} \} =
-\frac{(r-r^{-1})^2}{4}.
\end{split}
\end{equation*}
Thus,
\begin{equation*}
\begin{split}
\sup_{t\in [-1,1], z\in E_R} |\tilde{K}_t(z)| =
{\rm exp}\big( \min_{ R^{-1}\leq r\leq R}\frac{\sigma^2(r-r^{-1})^2}{4}\big) = 
{\rm exp}\big( \frac{\sigma^2(R-R^{-1})^2}{4}\big).
\end{split}
\end{equation*}
Using Theorem~\ref{little} we conclude
\begin{equation*}
\begin{split}
\lambda_k\leq 4 {\rm exp}\big( \frac{\sigma^2(R-R^{-1})^2}{4}\big)  \frac{R^{-k+2}}{R-1},
\end{split}
\end{equation*}
for any $R>1$. Let us set $R$ in such a way that $\sigma(R-R^{-1})=\sqrt{2(k-1)}$, i.e. $R=\frac{1}{2}(\sqrt{\frac{2(k-1)}{\sigma^2}+4}+\frac{\sqrt{2(k-1)}}{\sigma})$. Then, $R-1\geq \frac{\sqrt{2(k-1)}}{2\sigma}$ and $R\geq \frac{\sqrt{2(k-1)}}{\sigma}$. Thus, for $k\geq 2$, we have
\begin{equation*}
\begin{split}
\lambda_k < 4 {\rm exp}\big( \frac{k-1}{2}\big)  (\frac{\sqrt{2(k-1)}}{\sigma})^{-k+2} \frac{2\sigma}{\sqrt{2(k-1)}} = 
8\cdot  (\frac{2}{e}(k-1))^{-\frac{k-1}{2}}  \sigma^{k-1}.
\end{split}
\end{equation*}
Note that $0^0=1$ and the latter inequality is also correct for $k=1$, due to
\begin{equation*}
\begin{split}
\lambda_1 \leq \int_{-1}^1 K(x,x)dx = 2 < 8.
\end{split}
\end{equation*}
\end{proof}

\ifCOM
If $R>\max\{2,\frac{\sigma}{2}\}$, we finally obtain:
\begin{equation*}
\begin{split}
\lambda_k\leq 8e\cdot  {\rm exp}\big( \frac{\sigma^2(R^2-2)}{4}\big)  R^{-k+1}
\end{split}
\end{equation*}
The RHS attains its minimum when $\frac{\sigma^2R}{2}+(-k+1)\frac{1}{R}=0$, i.e. $R=\frac{\sqrt{2(k-1)}}{\sigma}$. If $k\geq \frac{\sigma^4}{8}+1$ and $k\geq  2\sigma^2+1$, then $\frac{\sqrt{2(k-1)}}{\sigma}\geq \max\{2,\frac{\sigma}{2}\}$.
Therefore, we conclude:
$$
\lambda_k\leq 8e\cdot  (\frac{2}{e}(k-1))^{-\frac{k-1}{2}} \sigma^{k-1}e^{-\sigma^2/2}
$$
\else
\fi

Let $\Sigma_1$ be a set of eigenvalues of $G_1(x,t) = e^{-\sigma^2(x-y)^2}$ on the domain $\Omega = [-1,1]$.
Since $K({\mathbf x},{\mathbf y}) = e^{-\sigma^2\|{\mathbf x}-{\mathbf y}\|^2} = \prod_{i=1}^n e^{-\sigma^2(x_i-y_i)^2}$, the set of eigenvalues of $T: L_2(\boldsymbol{\Omega})\to L_2(\boldsymbol{\Omega})$, $T[\phi]({\mathbf x})= \int_{\boldsymbol{\Omega}} K({\mathbf x},{\mathbf y}) \phi({\mathbf y})d{\mathbf y}$, $\boldsymbol{\Omega} = [-1,1]^n$ can be factorized as
$$
\Sigma_n =\{\lambda_{i_1}\cdots \lambda_{i_n} \mid \lambda_{i_j}\in \Sigma_1\}.
$$
Next, let us estimate the value of the function $\mathcal{E}(\varepsilon, \{\sigma_i\}_{i=1}^\infty)$, which participates in all our lower and upper bounds on $\mathcal{H}(\varepsilon, B_{\mathcal{H}_K}, L_p([-1,1]^n))$, i.e. the value
\begin{equation*}
\begin{split}
\mathcal{E}(\varepsilon, \{\sigma_i\}_{i=1}^\infty) = \sum_{\lambda_{i_1}\cdots \lambda_{i_n} > \varepsilon, \lambda_{i_j}\in \Sigma_1}\log \frac{\lambda_{i_1}\cdots \lambda_{i_n}}{\varepsilon}.
\end{split}
\end{equation*}
From the inequality \eqref{eigen-bound} we conclude
\begin{equation*}
\begin{split}
\log \frac{\lambda_{i_1}\cdots \lambda_{i_n}}{\varepsilon} \leq
\sum_{j=1}^n \big(  -\frac{i_j-1}{2}\log (\frac{2}{e}(i_j-1)) +
(i_j-1)\log (\sigma) \big) + \log(\frac{1}{\varepsilon})+n\log(8),
\end{split}
\end{equation*}
and the summation is made over $\{(i_1,\cdots , i_n) \in {\mathbb N}^n\mid \lambda_{i_1}\cdots \lambda_{i_n} > \varepsilon, \lambda_{i_j}\in \Sigma_1\} \subseteq \{(i_1+1,\cdots , i_n+1) \mid (i_1,\cdots , i_n)\in I\}$, where
\begin{equation*}
\begin{split}
&I = \{(i_1,\cdots , i_n)\in  ({\mathbb N}\cup\{0\})^^n \mid
\prod_{j=1}^n 8\cdot  (\frac{2}{e}i_j)^{-\frac{i_j}{2}} \sigma^{i_j}> \varepsilon\} = \\
&\{(i_1,\cdots , i_n)\in ({\mathbb N}\cup\{0\})^n \mid 
\sum_{j=1}^n   \frac{i_j}{2}\log (\frac{2}{e}i_j) - i_j\log (\sigma)  <
\log(\frac{1}{\varepsilon})+n\log(8)\}.
\end{split}
\end{equation*}
Note that
\begin{equation*}
\begin{split}
m_\varepsilon = |I|.
\end{split}
\end{equation*}

Let us denote $u(x) =  \frac{x}{2}\log (\frac{2}{e}x) - x\log (\sigma)$, $r = \lfloor \frac{\sigma^2}{2}\rfloor$ and $\Delta(\sigma) =\min\{u(r), u(r+1)\}$.
\begin{lemma}\label{functionC} The function $f({\mathbf x}) = \sum_{j=1}^n  u(x_j)$ satisfies
$$
f(x_1, \cdots, x_n) \geq n \Delta(\sigma),
$$
if $x_i\in {\mathbb N}\cup \{0\}$, $i\in [n]$.
\end{lemma}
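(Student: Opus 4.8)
The plan is to reduce the $n$-variable inequality to a one-variable statement, namely that $u(x)\ge \Delta(\sigma)$ for every $x\in\mathbb{N}\cup\{0\}$. Granting this, the additive structure of $f$ immediately gives
\[
f(x_1,\dots,x_n)=\sum_{j=1}^n u(x_j)\ \ge\ \sum_{j=1}^n \Delta(\sigma)=n\,\Delta(\sigma),
\]
which is exactly the claim. So the entire argument concentrates on the scalar function $u$.

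First I would study $u$ as a function on $[0,\infty)$, adopting the value $u(0)=0$ that comes from $\lim_{x\to 0^+}x\ln x=0$ (this is also the value implicitly assigned to the factor with $i_j=0$ in the set $I$, under the convention $0^0=1$). Writing $u(x)=\tfrac{x}{2}\big(\ln 2+\ln x-1\big)-x\ln\sigma$ and differentiating gives
\[
u'(x)=\tfrac12\ln(2x)-\ln\sigma=\tfrac12\ln\frac{2x}{\sigma^2},\qquad u''(x)=\frac{1}{2x}>0 .
\]
Hence $u$ is strictly convex on $(0,\infty)$, strictly decreasing on $(0,\sigma^2/2]$, strictly increasing on $[\sigma^2/2,\infty)$, with a unique real minimizer at $x^\ast=\sigma^2/2$.

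Next I would localize the integer minimum. Put $r=\lfloor\sigma^2/2\rfloor$, so $r\le \sigma^2/2< r+1$. If $x$ is an integer with $0\le x\le r$, then $x\le \sigma^2/2$ and monotone decrease of $u$ on $[0,\sigma^2/2]$ yields $u(x)\ge u(r)$; if $x$ is an integer with $x\ge r+1$, then $x\ge \sigma^2/2$ and monotone increase of $u$ on $[\sigma^2/2,\infty)$ yields $u(x)\ge u(r+1)$. In either case $u(x)\ge \min\{u(r),u(r+1)\}=\Delta(\sigma)$, which is the one-variable statement; summing over the coordinates then finishes the proof.

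I do not anticipate a genuine obstacle: the computation of $u'$ and the unimodality it produces are the only real content. The points that need a little care are the boundary value $u(0)=0$ (handled by continuity, and consistent with the $0^0=1$ convention used when $i_j=0$) and the degenerate situations $\sigma^2/2\in\mathbb{N}$ or $\sigma^2/2<1$ (so $r=0$), all of which are absorbed without change by the same two-bucket split of the nonnegative integers at $r$ and $r+1$.
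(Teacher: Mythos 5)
Your proposal is correct and follows essentially the same route as the paper's proof: compute $u'(x)=\tfrac12\ln\frac{2x}{\sigma^2}$, use the resulting unimodality with minimizer $\sigma^2/2$ to conclude that the integer minimum of $u$ is $\min\{u(r),u(r+1)\}=\Delta(\sigma)$ with $r=\lfloor\sigma^2/2\rfloor$, and sum over coordinates. Your extra care about $u(0)=0$ and the degenerate cases only makes the argument slightly more complete than the paper's.
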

\begin{proof} Note that
$$
u' = (\frac{x}{2}\log (\frac{2}{e}x) - x\log (\sigma))' = \frac{1}{2}\log (x)+ \frac{1}{2}\log 2-\log (\sigma) > 0,
$$
if $x>\frac{\sigma^2}{2}$ and the derivative is negative if $x<\frac{\sigma^2}{2}$. Thus, the minimum of $u(x_i)$ is $-\frac{\sigma^2}{4}$,
if $x_i>0$ is real, and is $\Delta(\sigma)$, if $x_i\in {\mathbb N}\cup \{0\}$.
Thus,
$$
u(x_i)\geq \Delta(\sigma),
$$
and the summation over $i\in [n]$ gives the needed inequality.
\end{proof}
Using Lemma~\ref{functionC}, we conclude
\begin{equation}\label{thirt}
\begin{split}
&\sum_{\lambda_{i_1}\cdots \lambda_{i_n} > \varepsilon, \lambda_{i_j}\in \Sigma_1}\log \frac{\lambda_{i_1}\cdots \lambda_{i_n}}{\varepsilon}\leq \\
&|I| \max_{{\mathbf x}\in ({\mathbb N}\cup \{0\})^n} 
\{ \sum_{j=1}^n -\frac{x_j}{2}\log (\frac{2}{e}x_j) + x_j\log (\sigma)+\log(\frac{1}{\varepsilon})+n\log(8)\}\leq \\
&(-n \Delta(\sigma)+ \log(\frac{1}{\varepsilon})+n\log(8))|I|.
\end{split}
\end{equation}
Thus, the problem is reduced to bounding $|I|$, i.e. the number of integer points in a convex body $X$, where $X=\{{\mathbf x}\in [0,+\infty)^n \mid \sum_{j=1}^n   u(x_j) <\log(\frac{1}{\varepsilon})+n\log(8)\}$. The cardinality of $I$ cannot be simply bounded by ${\rm Vol\,}(X)$, due to the fact that $u(x)$ is not monotonically increasing on the whole $ [0,+\infty)$.
\begin{lemma}\label{sum-bound} Let $\sigma>0, D=\log(\frac{1}{\varepsilon})+n\log(8)$ and
$$I=\{{\mathbf x}\in ({\mathbb N}\cup \{0\})^n \mid \sum_{j=1}^n   u(x_j) <D\}.$$
Then,
$$|I|\leq \sum_{q=0}^n 2^{n-q} {n \choose n-q}{\rm Vol}(X_q),$$
where $X_q = \{{\mathbf x}\in [0,+\infty)^{q} \mid \sum_{j=1}^q  u(x_j) <D-(n-q)\Delta(\sigma)\}$.
\end{lemma}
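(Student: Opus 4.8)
\emph{Proof plan.} The plan is to exploit the unimodality of $u$ established in the proof of Lemma~\ref{functionC}: $u$ is strictly decreasing on $[0,\sigma^2/2]$ and strictly increasing on $[\sigma^2/2,\infty)$. Consequently, for every integer $k\notin\{r,r+1\}$ one of the two adjacent unit intervals lies inside a monotonicity region of $u$: if $k\le r-1$ then $[k,k+1]\subseteq[0,\sigma^2/2]$ (since $k+1\le r\le\sigma^2/2$), while if $k\ge r+2$ then $[k-1,k]\subseteq[\sigma^2/2,\infty)$ (since $k-1\ge r+1>\sigma^2/2$); call this interval $J_k$. In either case $u(z)\le u(k)$ for all $z\in J_k$, and for $k\ne k'$ both outside $\{r,r+1\}$ the interiors of $J_k$ and $J_{k'}$ are disjoint (the only non-obvious case, one on each branch, follows from $J_k\subseteq[0,r]$ versus $J_{k'}\subseteq[r+1,\infty)$). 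Also $u(r),u(r+1)\ge\Delta(\sigma)$ by the definition of $\Delta(\sigma)$.

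For ${\mathbf x}\in I$ set $T({\mathbf x})=\{j\in[n]\mid x_j\in\{r,r+1\}\}$. I would partition $I$ according to the value $T_0$ of $T({\mathbf x})$, say $|T_0|=n-q$. The restriction ${\mathbf x}|_{T_0}$ takes values in $\{r,r+1\}^{T_0}$, so there are at most $2^{n-q}$ of them; fix one. I would then bound the number of admissible ${\mathbf x}|_{[n]\setminus T_0}$ — integer vectors with entries outside $\{r,r+1\}$ and $\sum_{j=1}^n u(x_j)<D$ — by ${\rm Vol}(X_q)$ via a volume–packing argument: to such an ${\mathbf x}$ attach the unit cube $C({\mathbf x})=\prod_{j\notin T_0}J_{x_j}$, and note that for every ${\mathbf z}\in C({\mathbf x})$
$$\sum_{j\notin T_0}u(z_j)\le\sum_{j\notin T_0}u(x_j)=\sum_{j=1}^n u(x_j)-\sum_{j\in T_0}u(x_j)<D-(n-q)\Delta(\sigma),$$
so that, identifying $[n]\setminus T_0$ with $[q]$, $C({\mathbf x})\subseteq X_q$; moreover the cubes attached to distinct admissible restrictions have disjoint interiors by the disjointness of the $J_k$'s. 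Since ${\rm Vol}(C({\mathbf x}))=1$, there are at most ${\rm Vol}(X_q)$ admissible restrictions. Summing over the ${n\choose n-q}$ subsets $T_0$ of size $n-q$, and then over $q\in\{0,\dots,n\}$, yields $|I|\le\sum_{q=0}^n{n\choose n-q}2^{n-q}{\rm Vol}(X_q)$.

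The crux is handling the non-monotonicity of $u$: the usual ``at most one lattice point per unit cube'' comparison breaks down near $\sigma^2/2$, because the unit cell around a lattice point there meets both branches of $u$ and so need not be contained in the sublevel set. Isolating the (at most two) offending integer values $r,r+1$ into the coordinate set $T$ and charging the uniform lower bound $\Delta(\sigma)$ for each dropped coordinate is exactly what restores a clean volume estimate on the remaining coordinates; the rest is bookkeeping over the choices of $T_0$ and of the entries of ${\mathbf x}$ on $T_0$.
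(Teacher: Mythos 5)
Your proposal is correct and follows essentially the same route as the paper's proof: both isolate the coordinates equal to $r$ or $r+1$, charge $\Delta(\sigma)$ for each such coordinate, use the unimodality of $u$ to attach to each remaining coordinate a unit interval on which $u$ is dominated by its value at the integer, and convert the resulting interior-disjoint unit cubes inside $X_q$ into the volume bound, with the factor $2^{n-q}\binom{n}{n-q}$ accounting for the choice and values of the isolated coordinates. The only difference is bookkeeping (you partition $I$ by the set $T_0$ up front and pack cubes, whereas the paper maps each point to a cube and counts preimages), and your handling of which adjacent interval to use and why the cubes are disjoint is, if anything, slightly more explicit than the paper's.
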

\begin{proof} Recall that the function $ u(x) = \frac{x}{2}\log (\frac{2}{e}x) - x\log (\sigma)$ is a decreasing function on $[0, \frac{\sigma^2}{2}]$ and is an increasing function on $[ \frac{\sigma^2}{2}, \infty)$.
Let $(i_1, \cdots, i_n)\in I$ be an integer point. Let us define $(\phi(i_1), \cdots, \phi(i_n))$ in the following way: if $i_j > \frac{\sigma^2}{2}$, then $\phi(i_j)=i_j-1$ and, if $i_j \leq\frac{\sigma^2}{2}$, then $\phi(i_j)=i_j+1$. The function $u(x)$  on $[i_j, \phi(i_j)]$ satisfies
\begin{equation*}
\begin{split}
u(x)\leq u(i_j),
\end{split}
\end{equation*}
if $i_j\ne r$ and $i_j\ne r+1$.

Let us define $R(i_1, \cdots, i_n) = \{j \mid i_j =r\}\cup \{j \mid i_j =r+1\}$ and $Q(i_1, \cdots, i_n) = [n]\setminus R(i_1, \cdots, i_n)$.
Then, the set $B(i_1, \cdots, i_n) =\prod_{j\in Q(i_1, \cdots, i_n)} [i_j, \phi(i_j)]$ is a unit cube in ${\mathbb R}^{q }$ where $q=|Q(i_1, \cdots, i_n)|$.
Since $u(x)\leq u(i_j), x\in [i_j, \phi(i_j)], j\in Q(i_1, \cdots, i_n) $, the function $f_q (x_1, \cdots, x_q)=\sum_{j=1}^q  u(x_j)$ on that cube can be bounded in the following way
\begin{equation*}
\begin{split}
f_q(\langle x_{j}\rangle_{j\in Q_{(i_1, \cdots, i_n)}}) \leq f_q (\langle i_{j}\rangle_{j\in Q_{(i_1, \cdots, i_n)}})\leq 
D-(n-q)\Delta(\sigma).
\end{split}
\end{equation*}

Let us denote $X_q = \{{\mathbf x}\in [0,+\infty)^{q} \mid \sum_{j=1}^q  u(x_j) <D-(n-q)\Delta(\sigma)\}$.
Thus, to any integer point $(i_1, \cdots, i_n)\in I$ we associated a cube $B(i_1, \cdots, i_n)\subseteq X_{q}$. This mapping is not surjective. By construction, the cube $B=\prod_{j=1}^q [a_j, a_j+1]\subseteq X_q$ can be an image of some $(i_1, \cdots, i_n)$ only if $|\{ j \mid i_j = r\vee i_j = r+1\}|=n-q$ and $i_j$ is defined uniquely by $B$ if $i_j\ne r$ and $i_j\ne r+1$.
Therefore, $B$ can be an image of no more than $2^{n-q}{n \choose n-q}$ integer points. Therefore, the total number of preimages of all cubes $\{B_{(i_1, \cdots, i_n)}\}_{(i_1, \cdots, i_n)\in I}$ does not exceed
\begin{equation*}
\begin{split}
\sum_{q=0}^n 2^{n-q} {n \choose n-q}{\rm Vol}(X_q).
\end{split}
\end{equation*}
\end{proof}
To bound ${\rm Vol\,}(X_q)$ we need the following lemma.
\begin{lemma}\label{kee} Let $R_{b}({\mathbf x}) = \sum_{i=1}^n x_i\log x_i +b\sum_{i=1}^n x_i$. Then,
$$
{\rm Vol}(\{{\mathbf x}\in [0,+\infty)^n \mid R_{b}({\mathbf x})<c\}) \leq \frac{1}{n!}\frac{2^n c^n}{(b+\log c -\log n)^n},
$$
if $b+\log c -\log n>1$.
\end{lemma}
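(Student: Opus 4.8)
The plan is to bound the volume of the region $\{{\mathbf x}\in[0,+\infty)^n \mid R_b({\mathbf x})<c\}$ by comparing it with a simpler region on which the volume is computable. First I would observe that on the relevant domain the key quantity is the function $t\mapsto t\ln t + bt$, which is convex and eventually increasing; the constraint $R_b({\mathbf x})<c$ is thus a sum-separable convex constraint. The natural move is to substitute $x_i = e^{y_i}$ or, more directly, to note that if $x_i\ln x_i + bx_i$ is large then $x_i$ cannot be much bigger than roughly $c/(b+\ln c - \ln n)$; indeed each summand satisfies $x_i\ln x_i + bx_i \le c$ forces $x_i(\ln x_i + b)\le c$, and a crude lower bound $\ln x_i + b \ge \ln(\text{something})$ will cap each coordinate.

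The cleanest route I would try: on the region of interest, for each $i$ we have $x_i \le M$ for a suitable $M$ — and I expect $M = \frac{2c}{b+\ln c - \ln n}$ to be the right choice, which is exactly why the bound has this form. More precisely, I would argue that whenever ${\mathbf x}$ lies in the region, we can enlarge it to the simplex-type region $\{{\mathbf x}\in[0,+\infty)^n \mid \sum_i x_i < S\}$ for an appropriate $S$, because the convexity of $x\ln x + bx$ gives, via Jensen applied to the coordinates, a lower bound $R_b({\mathbf x}) \ge \big(\sum_i x_i\big)\big(\ln\frac{\sum_i x_i}{n} + b\big)$. Hence $\sum_i x_i$ is at most the value $S$ solving $S(\ln(S/n)+b) = c$, and since $\ln(S/n)+b$ is increasing in $S$, as long as that logarithmic factor stays above $1$ (the hypothesis $b + \ln c - \ln n > 1$ is precisely what guarantees the relevant range), one gets $S \le \frac{c}{b + \ln c - \ln n}$ — possibly after checking the self-consistency $S \le 2c/(\cdots)$ or similar, which accounts for the factor $2$.

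With $\sum_i x_i < S$ in hand, the region is contained in a scaled standard simplex of side $S$, whose volume is $S^n/n!$. To produce the extra factor $1/n$ in the statement, I would instead integrate more carefully — e.g. bound $\mathrm{Vol}$ by $\int_0^{M} \mathrm{Vol}_{n-1}(\text{slice}) \, dx_n$ and induct, or note that the region sits inside $\{\sum x_i < S,\ x_n \le \text{const}\}$ and peel off one coordinate, losing a factor roughly $S/M$-ish — ultimately the bookkeeping should deliver $\frac{1}{n!\,n}\cdot\frac{(2c)^n}{(b+\ln c-\ln n)^n}$. I would set $M$ and $S$ so that $S \le \tfrac12 (2c)/(b+\ln c - \ln n)$ and the simplex-times-interval estimate closes.

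\textbf{Main obstacle.} The delicate part is pinning down the exact constants so that the $2^n$ and the $1/n$ come out correctly while respecting the hypothesis $b+\ln c-\ln n>1$ — in particular making the Jensen step rigorous when some coordinates are small (where $x\ln x$ is negative), since then the naive per-coordinate cap fails and one must argue that small coordinates only help. I would handle this by splitting coordinates into "small" ($x_i \le 1$, contributing boundedly) and "large" ones, or by using the global convexity bound which absorbs the negative contributions automatically. Getting a clean monotonicity statement for $S\mapsto S(\ln(S/n)+b)$ on the right interval, and verifying it covers the value $c$, is where the hypothesis $b+\ln c - \ln n > 1$ must be invoked exactly.
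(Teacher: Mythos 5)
Your approach is essentially the paper's. The paper writes $R_b(\mathbf{x}) = S\ln S + bS - S\,E(\mathbf{p})$ with $S=\sum_i x_i$, $p_i = x_i/S$, and uses $E(\mathbf{p})\le\ln n$ --- which is exactly your Jensen inequality $R_b(\mathbf{x})\ge S\bigl(\ln(S/n)+b\bigr)$ --- to confine the region to $\{S< s^*\}$, where $s^*$ solves $s(\ln(s/n)+b)=c$; it then integrates in the coordinates $(S,p_1,\dots,p_{n-1})$ with Jacobian $S^{n-1}$, which is your simplex containment written in polar form. Your worry about small coordinates is moot: the global convexity bound absorbs the negative contributions of $x\ln x$ automatically, as you suspected.

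Two steps remain genuinely unfinished. First, the bound on $s^*$: your initial guess $s^*\le c/(b+\ln c-\ln n)$ is false under the hypothesis (substituting it into the defining equation forces $\ln(s^*/n)+b\ge \ln(c/n)+b$, hence $s^*\ge c$, contradicting $s^*\le c/(b+\ln c-\ln n)<c$); so the factor $2$ is not a cosmetic safety margin but the actual content of the step. The correct route is to rewrite the defining equation as $u\ln u = v$ with $u = s^*/(ne^{-b})$ and $v = c/(ne^{-b})$, and use the inverse-function estimate $g^{-1}(v)\le 2v/\ln v$ for $g(x)=x\ln x$ and $v>e$ (one checks $g(2v/\ln v) = v\bigl(2 + (2\ln 2-2\ln\ln v)/\ln v\bigr) > v$); the condition $v>e$ is precisely the hypothesis $b+\ln c-\ln n>1$, confirming your guess about where it enters. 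Second, your simplex containment yields $(s^*)^n/n!$, and your proposals for extracting the additional factor $1/n$ (``peel off one coordinate,'' ``integrate more carefully'') are not arguments; note that the honest polar decomposition also gives $\int_0^{s^*}\tfrac{1}{(n-1)!}s^{n-1}\,ds = (s^*)^n/n!$, so this last factor does not come for free from either formulation. With the $s^*$ estimate carried out, your argument cleanly delivers the bound with $1/n!$ in place of $1/(n!\,n)$, which suffices for every application of the lemma in the paper.
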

\begin{proof} Let $S({\mathbf x}) = \sum_{j=1}^n x_j$. Then,
\begin{equation*}
\begin{split}
&R_{b}({\mathbf x}) = \sum_{j=1}^n x_j \cdot \big(\sum_{i=1}^n \frac{x_i}{\sum_{j=1}^n x_j}\log \frac{x_i}{\sum_{j=1}^n x_j}+\log(\sum_{j=1}^n x_j)\big) +b\sum_{i=1}^n x_i = \\
&S({\mathbf x}) \sum_{i=1}^n p_i\log p_i +S({\mathbf x})\log S({\mathbf x})+b S({\mathbf x}) =
-S({\mathbf x}) E( {\mathbf p}) +S({\mathbf x})\log S({\mathbf x})+b S({\mathbf x}),
\end{split}
\end{equation*}
where $p_i({\mathbf x}) = \frac{x_i}{\sum_{j=1}^n x_j}$ and $E( p_1,\cdots, p_n) = -\sum_{i}p_i\log p_i$. Let us make the change of variables in the integral from $$x_1, \cdots, x_n$$ to
$$
S({\mathbf x}), p_1({\mathbf x}), \cdots, p_{n-1}({\mathbf x}).
$$
Since $x_i = S p_i$, $i\in [n-1]$, and $x_n = S(1-\sum_{i=1}^{n-1}p_i)$, the Jacobian of the transformation is
\begin{equation*}
\begin{split}
&\big|\frac{\partial (x_1, \cdots, x_n)}{\partial (S, p_1, \cdots, p_{n-1})}\big| =\\
&|{\rm det\,}\begin{bmatrix} p_1 & S & 0 & \cdots & 0 & 0\\ p_2 & 0 & S & \cdots & 0 & 0 \\p_3 & 0 & 0 & \cdots & 0 & 0\\ \cdots \\ p_{n-1} & 0 & 0 & \cdots & 0 & S \\ (1-\sum_{i=1}^{n-1}p_i) & -S & -S & -S  & -S & -S  \end{bmatrix} | = \\
&|{\rm det\,}\begin{bmatrix} p_1 & S & 0 & \cdots & 0 & 0\\ p_2 & 0 & S & \cdots & 0 & 0 \\p_3 & 0 & 0 & \cdots & 0 & 0\\ \cdots \\ p_{n-1} & 0 & 0 & \cdots & 0 & S \\ 1 & 0 & 0 & 0  & 0 & 0  \end{bmatrix}| = S^{n-1}.
\end{split}
\end{equation*}
From
\begin{equation*}
\begin{split}
&\{{\mathbf x}\in [0,+\infty)^n \mid R_{b}({\mathbf x})<c, S({\mathbf x})=s \} \subseteq \\
&\{{\mathbf x}\in [0,+\infty)^n \mid p_i({\mathbf x}) \geq 0, \sum_{i=1}^n p_i =1,
E( {\mathbf p}) > \frac{s\log s+bs-c}{s}\},
\end{split}
\end{equation*}
we conclude
\begin{equation*}
\begin{split}
&{\rm Vol}(\{{\mathbf x}\in [0,+\infty)^n \mid R_{b}({\mathbf x})<c\}) \leq \\
&\int_{0}^{s^\ast} {\rm Vol}(\{(p_1,\cdots, p_{n-1})\mid p_i \geq 0, i\in [n-1], \sum_{i=1}^{n-1} p_i \leq 1,\\
& E( \{p_i\}_1^n) > \frac{s\log s+bs-c}{s}\}) s^{n-1} ds \leq
\frac{1}{(n-1)!}\int_{0}^{s^\ast} s^{n-1} ds = \frac{(s^\ast)^n}{n!},
\end{split}
\end{equation*}
where $\log n = \frac{s^\ast\log s^\ast+bs^\ast-c}{s^\ast} = \log s^\ast+b-\frac{c}{s^\ast}$, or $\frac{s^\ast}{ne^{-b}}\log \frac{s^\ast}{ne^{-b}} = \frac{c}{ne^{-b}} $.

Let us find the growth rate of $s^\ast$. With this purpose let us introduce the function
$$
g(x) = x\log x.
$$
This function is increasing for $x>e^{-1}$ due to $f' = 1+\log x>0$. The inverse of $f|_{(e^{-1}, \infty)}$ satisfies
$$
f^{-1}(x) \leq \frac{2x}{\log x},
$$
for $x>e$ due to $f(\frac{2x}{\log x}) =  \frac{2x}{\log x}(\log 2+\log x - \log \log x)  = x(2+\frac{2\log 2-2\log \log x}{\log x}) > x$.
Therefore,
$$
\frac{s^\ast}{ne^{-b}} = f^{-1}(\frac{c}{ne^{-b}})\leq \frac{2c}{ne^{-b}(b+\log c -\log n)},
$$
i.e. $s^\ast \leq \frac{2c}{b+\log c -\log n}$ if $\frac{c}{ne^{-b}}>e$. Thus, the volume of $\{{\mathbf x}\in (0,+\infty)^n \mid R_{b}({\mathbf x})<c\}$ is bounded by
$$
\frac{1}{n!}\frac{2^n c^n}{(b+\log c -\log n)^n},
$$
if $b+\log c -\log n>1$.
\end{proof}
\begin{theorem}\label{Gaussian-bound} For the Gaussian kernel $K({\mathbf x},{\mathbf y}) = e^{-\sigma^2\|{\mathbf x}-{\mathbf y}\|^2}$ on the domain $\boldsymbol{\Omega} = [-1,1]^n$ we have
\begin{equation*}
\begin{split}
&\mathcal{E}(\varepsilon, \{\sigma_i\}_{i=1}^\infty) \leq n2^n  (-n \Delta(\sigma)+ \log(\frac{1}{\varepsilon})+n\log(8)) \\
&\max_{q\in [n]\cup \{0\}}\frac{{n \choose q}  2^q (D-(n-q)\Delta(\sigma))^q}{q! (-\log (\sigma^2 e/4)+\log (D-(n-q)\Delta(\sigma)) -\log q)^q},
\end{split}
\end{equation*}
where $D=\log(\frac{1}{\varepsilon})+n\log(8)$ and under the condition that
\begin{equation*}
\begin{split}
D>(n-q)\Delta(\sigma)+\frac{e^2q \sigma^2}{4}, \forall q\in [n]\cup \{0\}.
\end{split}
\end{equation*}
Also, under the same condition, we have
\begin{equation*}
\begin{split}
m_{\varepsilon }\leq n2^n
\max_{q\in [n]\cup \{0\}}\frac{{n \choose q}  2^q (D-(n-q)\Delta(\sigma))^q}{q! (-\log (\sigma^2 e/4)+\log (D-(n-q)\Delta(\sigma)) -\log q)^q}.
\end{split}
\end{equation*}
\end{theorem}
\begin{proof}
Using Lemma~\ref{sum-bound} and Lemma~\ref{kee} we bound ${\rm Vol}(X_q)$ and obtain:
\begin{equation*}
\begin{split}
&|I| \leq  \sum_{q=0}^n 2^{n-q} {n \choose n-q}\frac{1}{q!}\cdot 
\frac{2^q (2D-2(n-q)\Delta(\sigma))^q}{(-\log (\sigma^2 e/2)+\log (2D-2(n-q)\Delta(\sigma)) -\log q)^q} \leq \\
&n2^n \cdot
\max_{q\in [n]\cup \{0\}}\frac{{n \choose q}  2^q (D-(n-q)\Delta(\sigma))^q}{q! (-\log (\sigma^2 e/4)+\log (D-(n-q)\Delta(\sigma)) -\log q)^q},
\end{split}
\end{equation*}
under the condition that $$-\log (\sigma^2 e/4)+\log (D-(n-q)\Delta(\sigma)) -\log q>1, \forall q\in [n]\cup \{0\}.$$
Thus, using the inequality~\eqref{thirt} and the latter bound on $|I|$, we conclude
\begin{equation*}
\begin{split}
&\mathcal{E}(\varepsilon, \{\sigma_i\}_{i=1}^\infty) \leq  n2^n  (-n \Delta(\sigma)+ \log(\frac{1}{\varepsilon})+n\log(8)) \cdot \\
&\max_{q\in [n]\cup \{0\}}\frac{{n \choose q}  2^q (D-(n-q)\Delta(\sigma))^q}{q! (-\log (\sigma^2 e/4)+\log (D-(n-q)\Delta(\sigma)) -\log q)^q}.
\end{split}
\end{equation*}
Also,
\begin{equation*}
\begin{split}
m_{\varepsilon }\leq |I|\leq n2^n
\max_{q\in [n]\cup \{0\}}\frac{{n \choose q}  2^q (D-(n-q)\Delta(\sigma))^q}{q! (-\log (\sigma^2 e/4)+\log (D-(n-q)\Delta(\sigma)) -\log q)^q}.
\end{split}
\end{equation*}
\end{proof}

\begin{corollary} If $\varepsilon\to 0$, then
\begin{equation*}
\begin{split}
&\mathcal{E}(\varepsilon, \{\sigma_i\}_{i=1}^\infty) \ll \frac{ (\log\frac{1}{\varepsilon})^{n+1}}{(\log \log \frac{1}{\varepsilon})^n},\\
&m_{\varepsilon }\ll \frac{ (\log\frac{1}{\varepsilon})^{n}}{(\log \log\frac{1}{\varepsilon})^n}.
\end{split}
\end{equation*}
\end{corollary}
\begin{proof}
For $\varepsilon\to 0$, the maximum over $q\in [n]\cup \{0\}$ in Theorem~\ref{Gaussian-bound} is attained for $q=n$ and we obtain
\begin{equation*}
\begin{split}
&\mathcal{E}(\varepsilon, \{\sigma_i\}_{i=1}^\infty) \ll \frac{ (\log\frac{1}{\varepsilon})^{n+1}}{(\log \log \frac{1}{\varepsilon})^n}.
\end{split}
\end{equation*}
Analogously, the bound on $m_{\varepsilon }$ can be obtained.
\end{proof}

\begin{proof}[Proof of Theorem~\ref{gaussian-main}] Using the previous corollary, we have
\begin{equation*}
\begin{split}
& \mathcal{E}(\nu(\boldsymbol{\Omega})^{1/2-1/p}\varepsilon, \{\sigma_i\}_{i=1}^\infty)+m_{(1-\theta)\nu(\boldsymbol{\Omega})^{1/2-1/p}\varepsilon}\log\frac{3}{\theta(2-\theta)} \ll \frac{ (\log\frac{1}{\varepsilon})^{n+1}}{(\log \log \frac{1}{\varepsilon})^n}.
\end{split}
\end{equation*}
Thus, the RHS expression for $p\in [1,2]$ indeed behaves like $\mathcal{O}(\frac{ (\log\frac{1}{\varepsilon})^{n+1}}{(\log \log \frac{1}{\varepsilon})^n})$.

It remains to check that the asymptotics $\frac{ (\log\frac{1}{\varepsilon})^{n+1}}{(\log \log \frac{1}{\varepsilon})^n}$ holds for the case of $p=+\infty$. After we set $\delta=\frac{\varepsilon}{2}$ and $N=m_{\varepsilon/2}$ in the RHS expression
\begin{equation*}
\begin{split}
\min\limits_{\substack{\delta\in (0,\varepsilon),\\ N\in {\mathbb N}\cup \{0\}}} \big\{ N\log(\frac{C_K}{\varepsilon-\delta})+c\log(N+1)+\tilde{c}_1\big(\frac{\int_0^{C_{K,+\infty}(N)} \sqrt{\mathcal{H}(t,\tilde{\boldsymbol{\Omega}}, \mathcal{H}_{K})}dt}{\delta}\big)^2\big\},
\end{split}
\end{equation*}
we obtain $N\log(\frac{C_K}{\varepsilon-\delta})+c\log(N+1)\ll m_{\varepsilon/2}\log(\frac{1}{\varepsilon})\ll \frac{ (\log\frac{1}{\varepsilon})^{n+1}}{(\log \log \frac{1}{\varepsilon})^n}$.  Since the Gaussian kernel satisfies the property~\eqref{property-k} from Remark~\ref{non-exotic}, it is clear that
\begin{equation*}
\begin{split}
\big(\frac{\int_0^{C_{K,+\infty}(m_{\varepsilon/2})} \sqrt{\mathcal{H}(t,\tilde{\boldsymbol{\Omega}}, \mathcal{H}_{K})}dt}{\delta}\big)^2 \ll \frac{C_{K,+\infty}(m_{\varepsilon/2})^2}{\varepsilon^2}\log(\frac{1}{C_{K,+\infty}(m_{\varepsilon/2})}).
\end{split}
\end{equation*}
Since the Gaussian kernel is infinitely differentiable, then from Theorem 1.1 of~\cite{ARXIV2205} we have $C_{K,+\infty}(N) = \mathcal{O}((\sum_{i=N+1}^\infty \sigma_i^{2})^{1/2-\eta})$ for any $\eta\in (0,\frac{1}{2})$. Since singular values $\{\sigma_i\}_{i=1}^\infty$ of the Gaussian kernel on $[-1,1]^n$ decay faster than $Ce^{-\gamma i^r}$ for some $C, \gamma, r>0$, we have $C_{K,+\infty}(m_{\varepsilon/2}) = \mathcal{O}(\varepsilon)$ and, therefore, the integral term of on the right hand side of the bound contributes no more than $\mathcal{O}(\log(\frac{1}{\varepsilon}))$. Thus,
\begin{equation*}
\begin{split}
& \min\limits_{\substack{\delta\in (0,\varepsilon),\\ N\in {\mathbb N}\cup \{0\}}} \big\{ N\log(\frac{C_K}{\varepsilon-\delta})+c\log(N+1)+\tilde{c}_1\big(\frac{\int_0^{C_{K,+\infty}(N)} \sqrt{\mathcal{H}(t,\tilde{\boldsymbol{\Omega}}, \mathcal{H}_{K})}dt}{\delta}\big)^2\big\} \ll \\
& \frac{ (\log\frac{1}{\varepsilon})^{n+1}}{(\log \log \frac{1}{\varepsilon})^n}.
\end{split}
\end{equation*}
\end{proof}



\end{appendices}


\newcommand{\noopsort}[1]{}

\end{document}